\documentclass[final,5p,times,twocolumn,authoryear]{elsarticle}

\usepackage{amsmath}
\usepackage{amssymb}
\usepackage{amsthm}
\usepackage{array}
\usepackage{mathrsfs}
\usepackage{bm}
\usepackage{dsfont}

\usepackage{graphicx}
\graphicspath{{Figures/}}
\usepackage{booktabs}
\usepackage{multirow}
\usepackage{makecell}
\usepackage{xcolor}
\usepackage{algorithm}
\usepackage{algpseudocode}
\usepackage{subcaption}

\usepackage[authoryear]{natbib}
\usepackage[unicode,hypertexnames=false]{hyperref}

\hypersetup{
  colorlinks=true,
  linkcolor=blue,
  citecolor=black,
  urlcolor=black,
  pdftitle={Hierarchical Empirical-Bayes Naive Bayes: Minimax Smoothing and
    Calibration with AODE Extension},
  pdfauthor={Nguyen Thai Anh, Truong Viet Vu, Tran Thien Thanh,
    Vo Nguyen Quoc Bao, and Ngo Hoang Tu}
}

\newtheorem{theorem}{Theorem}
\newtheorem{lemma}{Lemma}
\newtheorem{proposition}{Proposition}
\newtheorem{corollary}{Corollary}
\newtheorem{assumption}{Assumption}
\newtheorem{remark}{Remark}

\newcommand{\E}{\mathbb{E}}
\newcommand{\Prob}{\mathbb{P}}

\newcommand{\TV}{\mathrm{TV}}
\newcommand{\ECE}{\mathrm{ECE}}
\newcommand{\HEB}{\mathrm{HEB}}

\newcommand{\norm}[1]{\left\lVert #1 \right\rVert}

\newcommand{\Lap}{\mathrm{Lap}}

\allowdisplaybreaks
\newif\ifshowfigures
\showfigurestrue

\let\cite\citep

\usepackage{balance}

\journal{Knowledge-Based Systems}

\begin{document}

\begin{frontmatter}

\title{Hierarchical Empirical-Bayes Naive Bayes: Minimax Smoothing and Calibration with AODE Extension}

\author[inst1]{Nguyen Thai Anh}
\ead{anh.nt@vlu.edu.vn}
\author[inst1]{Truong Viet Vu}
\ead{vu.2274802011045@vanlanguni.vn}
\author[inst2]{Tran Thien Thanh}
\ead{thanh.tran@ut.edu.vn}
\author[inst1]{Vo Nguyen Quoc Bao}
\ead{bao.vnq@vlu.edu.vn}
\author[inst1]{Ngo Hoang Tu\corref{cor1}}
\ead{tu.nh@vlu.edu.vn}
\cortext[cor1]{Corresponding author.}

\affiliation[inst1]{
  organization={Faculty of Information Technology, Van Lang School of
  Technology, Van Lang University},
  city={Ho Chi Minh City},
  postcode={70000},
  country={Vietnam}
}

\affiliation[inst2]{
  organization={Institute of Information Technology and
  Electrical-Electronics Engineering, Ho Chi Minh City University of
  Transport},
  city={Ho Chi Minh City},
  postcode={700000},
  country={Vietnam}
}

\begin{abstract}
The Naive Bayes (NB) classifier remains a standard choice for categorical data, yet its widely used smoothing rules, such as Laplace, Lidstone, Krichevsky--Trofimov, and the $m$-estimate, all prescribe a fixed smoothing strength that ignores feature cardinality, sample size, and class imbalance, inducing a non-vanishing bias on modern high-cardinality tabular data.
%
% The Naive Bayes (NB) classifier remains a standard choice for categorical data, yet its widely used smoothing rules, such as Laplace, Lidstone, Krichevsky--Trofimov, and the $m$-estimate, all prescribe a fixed smoothing strength that ignores feature cardinality, sample size, and class imbalance. On modern tabular datasets with high-cardinality categorical features, this practice induces a non-vanishing bias that can dominate estimation error.
We propose hierarchical empirical-Bayes Naive Bayes (HEB-NB), in which each class-feature conditional probability is smoothed by a Dirichlet prior whose concentration is learned data-adaptively via Type-II maximum likelihood, enabling principled information sharing across classes while retaining closed-form inference. We further introduce HEB average one-dependence estimators (HEB-AODE), showing that the adaptive smoothing transfers cleanly to structural relaxations of NB.
% 
% We propose hierarchical empirical-Bayes Naive Bayes (HEB-NB), a categorical NB classifier in which each class-feature conditional probability is smoothed by a Dirichlet prior whose concentration is learned data-adaptively via Type-II maximum likelihood. This enables principled information sharing across classes while retaining closed-form inference and the same time and memory complexity as plain NB up to a constant factor. We further introduce HEB average one-dependence estimators (HEB-AODE), showing that the adaptive smoothing transfers cleanly to structural relaxations of NB.
% We first propose hierarchical empirical-Bayes Naive Bayes (HEB-NB), a categorical NB classifier in which each class-feature conditional probability mass function is smoothed by a Dirichlet prior whose concentration parameter is learned data-adaptively via Type-II maximum likelihood, enabling principled information sharing across classes while retaining closed-form inference.
% Theoretically, we establish a non-asymptotic $\ell_1$ error bound for HEB-NB matching the empirical-distribution minimax rate of discrete probability estimation plus a vanishing data-adaptive bias, in contrast to Laplace's non-vanishing bias in the high-cardinality regime. 
% A matching lower bound shows Laplace's bias is Laplace-tight at the simplex vertex, yielding a rigorous, finite-sample, risk-level strict separation between HEB-NB and Laplace that cannot be attributed to bound looseness. 
Theoretically, we establish a non-asymptotic $\ell_1$ error bound for HEB-NB matching the empirical-distribution minimax rate plus a vanishing data-adaptive bias, together with a matching Laplace-tight lower bound that yields a finite-sample, risk-level strict separation from Laplace. We further derive a plug-in excess Bayes-risk bound via total-variation tensorization and a population top-1 expected calibration error (ECE) corollary.
Empirically, across 31 UCI and OpenML benchmarks, HEB-NB attains the best average Friedman rank on probabilistic metrics, with up to 22.1\% log-loss reductions on high-cardinality datasets and consistent improvements of HEB-AODE over vanilla AODE. Combining HEB-NB with mutual-information weighting reduces top-1 ECE by 41\%--70\%, demonstrating substantial gains in probabilistic accuracy and calibration.
\end{abstract}

%% Highlights are condensed from existing manuscript claims; no new
%% scientific claim has been introduced.
% \begin{highlights}
% \item A closed-form, hyperparameter-free smoothed categorical NB classifier.
% \item A hierarchical Type-II ML smoother for AODE's pairwise conditionals.
% \item A rigorous, finite-sample, risk-level strict separation.
% \item HEB-M attains the best mean Friedman rank on log-loss and Brier.
% \item Combining HEB-NB with mutual-information weighting reduces top-1 ECE.
% \end{highlights}

\begin{keyword}
Naive Bayes \sep empirical Bayes \sep Dirichlet-multinomial
\sep high-cardinality categorical features
\sep average one-dependence estimators
\sep critical-difference diagram
\end{keyword}

\end{frontmatter}

%% Uncomment only when the journal requests a line-numbered review copy.
% \linenumbers

%% =====================================================================
\section{Introduction}\label{sec:intro}

The Naive Bayes (NB) classifier approximates the joint distribution of a label $Y$ and a feature vector {$X=(X_1,\dots,X_F)$} through the conditional-independence factorization {$\Prob(X\!\mid\! Y)=\prod_{f=1}^{F} \Prob(X_f\!\mid\! Y)$}.
Despite the well-known inadequacy of this assumption~\cite{hand2001idiot,zhang2004optimality}, NB remains a workhorse for categorical and binned tabular data due to its robustness in small-sample regimes and its transparent decision rule.
The recent NB literature has refined the classifier along two axes: 
(\textit{i}) attribute and instance weighting, which rescales each {$\log\hat \Prob(X_f\!\mid\! Y)$} in the posterior~\citep{he2025attribute,jiang2019class,zhang2021attribute, zhang2022fine,zhou2022adaptive}; and (\textit{ii}) structural relaxation, which adds limited dependencies among features, as in tree augmented NB~\cite{friedman1997bayesian}, average one-dependence estimators (AODE)~\cite{webb2005not}, and hidden NB~\cite{jiang2009hnb}.

% \IEEEPARstart{T}{he} Naive Bayes (NB) classifier approximates the joint
% distribution of a label $Y$ and a feature vector {$X=(X_1,\dots,X_F)$}
% through the conditional-independence factorisation
% {$\Prob(X\mid Y)=\prod_{f=1}^{F} \Prob(X_f\mid Y)$}.  Despite the well-known
% inadequacy of this assumption~\cite{hand2001idiot,zhang2004optimality},
% NB remains a workhorse for categorical and binned tabular data, its robustness in small-sample
% regimes, and its transparent decision rule.  
% The recent NB literature
% has refined the classifier along two axes: 
% {attribute and instance
% weighting}, which rescales each {$\log\hat \Prob(X_f\mid Y)$} in the
% posterior~\cite{jiang2019class,zhang2021attribute,zhang2022fine,
% he2025attribute,zhou2022adaptive}; and {structural relaxation},
% which adds limited dependencies among features, as in
% tree augmented NB~\cite{friedman1997bayesian}, average one-dependence estimators (AODE)~\cite{webb2005not}, and hidden NB~\cite{jiang2009hnb}.
 
% \emph{The Neglected Third Axis}:
Both refinement axes treat the conditional probability estimator {$\hat \Prob(X_f\!\mid\! Y)$} as a fixed input.  
The estimator itself has received remarkably little attention since the 1990s.  
Every realistic NB implementation applies smoothing such that empirical counts are augmented with non-negative pseudo-counts before normalization, both to avoid zero-probability pathologies and to stabilize rare-category estimates.  
The dominant smoothers, such as Laplace's add-one rule, Lidstone's
$\alpha$-smoothing~\cite{lidstone1920note},
Krichevsky--Trofimov (KT) \cite{krichevsky1981performance}, and Cestnik's
$m$-estimate~\cite{cestnik1990estimating}, all prescribe a smoothing strength fixed a priori, ignoring feature cardinality {$K_f$}, within-class sample size $N_c$, and the agreement between the implicit prior mean and the true class-conditional probability mass function (PMF).  
The single notable departure is the high-probability-binning (HPB) procedure of Jambeiro Filho and Wainer~\cite{jambeiro2008hpb}, which anticipates the empirical-Bayes idea for general Bayesian-network nodes but predates the modern minimax PMF-estimation literature and offers no risk or calibration analysis.
 
This neglect becomes consequential on modern tabular data with
high-cardinality features, e.g., uniform-resource-locator hashes, zone-improvement-plan codes, product identifications, internet-protocol blocks, sparse one-hot text tokens, where {$K_f$} routinely reaches the thousands or tens of thousands per
feature~\cite{micci2001preprocessing}.  
In such regimes $K_f\gtrsim N_c$ and pseudo-counts dominate empirical counts, so that Laplace smoothing incurs a deterministic, non-vanishing bias whose order matches the sampling-variance scale and saturates at $\Theta(1)$ once $K_f$ exceeds $N_c$. Read against the empirical-distribution upper bound of~\citet[Theorem~1]{han2015minimax}, this is a strict, non-asymptotic efficiency loss that Section~\ref{sec:theory} quantifies precisely.

% $\ell_1$ bias of order {$K_f/(N_c+K_f)$}, comparable to the
% sampling-variance scale {$\sqrt{K_f/N_c}$} once {$K_f\gtrsim N_c/4$} and
% saturating at $\Theta(1)$ thereafter.  Read against the
% empirical-distribution upper bound
% {$\sqrt{(K_f-1)/N_c}$}~\cite[Theorem~1]{han2015minimax}, this is a strict,
% non-asymptotic efficiency loss.

% We close this gap with a hierarchical empirical-Bayes (HEB) construction.  
% For each class $c$ and feature {$f$}, we model the class-conditional PMF as $\boldsymbol{\theta}_{c,f}\sim\mathrm{Dirichlet}(m_{c,f}\bar{\boldsymbol{\theta}}_f)$.  
% The Dirichlet is the standard prior over probability vectors and is
% conjugate to the multinomial likelihood, so the posterior remains
% available in closed form.
% The concentration $m_{c,f}>0$ acts as a smoothing strength such that large $m_{c,f}$ shrinks the posterior heavily
% toward $\bar{\boldsymbol{\theta}}_f$, and $m_{c,f}\!\to\!0$ recovers the unsmoothed maximum-likelihood (ML) estimator.  
% The concentration is learned from data by Type-II ML, evidence maximization with $\boldsymbol{\theta}_{c,f}$ integrated out~\cite{minka2000estimating}, which selects the smoothing strength that best explains the observed counts under the chosen prior, with no user-supplied hyperparameter. 

We close this gap with a hierarchical empirical-Bayes (HEB) construction. For each class $c$ and feature $f$, we model the class-conditional PMF as $\boldsymbol{\theta}_{c,f}\sim\mathrm{Dirichlet}(m_{c,f}\bar{\boldsymbol{\theta}}_f)$, exploiting Dirichlet-multinomial conjugacy for closed-form posteriors. The concentration $m_{c,f}>0$ controls shrinkage toward $\bar{\boldsymbol{\theta}}_f$ ($m_{c,f}\to 0$ recovers the unsmoothed maximum-likelihood (ML) estimator) and is learned from data by Type-II ML~\cite{minka2000estimating}, with no user-supplied hyperparameter.
The prior mean $\bar{\boldsymbol{\theta}}_f$ is either uniform (HEB-U) or the pooled
marginal $\hat \Prob(X_f)$ (HEB-M); the latter borrows information across classes in the spirit of target encoding (TE) \cite{micci2001preprocessing}, but inside the classifier rather than as preprocessing.
% \noindent\textbf{Contributions.}
The main contributions of this work are summarized as follows:

% \begin{enumerate}\vspace{-0.15cm}
% \item 
\textit{1) HEB-NB Estimator (Section~\ref{sec:HEB-NB})}:
A closed-form, hyperparameter-free smoothed categorical NB classifier whose per-(class, feature) Dirichlet concentration is learned by Type-II ML (Algorithm~\ref{alg:heb-nb}), with two natural prior-mean variants HEB-U and HEB-M. 
Limit cases recover Laplace smoothing and the full-shrinkage endpoint of Micci-Barreca TE, so HEB-NB unifies both within a single hierarchical-Bayes family while preserving the time and memory complexity of categorical NB up to a constant factor.
 
% \item 
\textit{2) HEB-AODE Structural Extension (Section~\ref{sec:hebaode})}:
A hierarchical Type-II ML smoother for AODE's pairwise conditionals, in which the concentration is shared across the
super-parent's instantiations and learned by a multi-document extension of Minka's single-document fixed-point iteration.  
HEB-AODE recovers vanilla AODE as a limit case and demonstrates that the HEB smoothing improvement transfers cleanly across the structural axis of NB refinement.

% \item 
\textit{3) Non-Asymptotic Theoretical Analysis
(Section~\ref{sec:theory})}:
We establish
(\textit{i}) a pathwise $\ell_1$ upper bound for HEB-NB (Theorem~\ref{thm:smoothing}) that matches the empirical-distribution rate of~\citet[Theorem~1]{han2015minimax} plus a data-adaptive bias, in contrast to Laplace's deterministic non-vanishing bias;
(\textit{ii}) a rigorous sanity bound on the HEB shrinkage factor (Lemma~\ref{lem:typeiiml-rate}) via the deterministic clamp from Assumption~\ref{Assumption:3}, ensuring that the bias vanishes
asymptotically as the within-class sample size grows;
(\textit{iii}) a matching lower bound for Laplace (Theorem~\ref{thm:laplace-lower}) proving its bias is Laplace-tight at the simplex vertex; 
(\textit{iv}) a rigorous, finite-sample, risk-level strict separation between HEB-NB and Laplace (Corollary~\ref{cor:strict-separation}) that cannot be attributed to bound looseness in either estimator; 
and (\textit{v}) an excess Bayes-risk bound via joint total-variation (TV) tensorization (Corollary~\ref{cor:bayesrisk}), which does not require any uniform lower bound on $\Prob(X)$, a property essential in the high-cardinality regime, together with a corresponding population top-1 expected calibration error (ECE) bound (Theorem~\ref{thm:calibration}).

\textit{4) Comprehensive Empirical Study
(Section~\ref{sec:experiments})}:
Across 31 UCI and OpenML datasets (including three high-cardinality benchmarks with up to $16{,}137$ categories per feature), HEB-M attains the best mean Friedman rank on log-loss and Brier, and significantly dominates 13 of 24 baseline--metric pairs under the Nemenyi protocol at $\alpha = 0.05$. 
HEB-AODE significantly improves over vanilla AODE on F1, log-loss, and Brier (paired Wilcoxon $p \le 0.029$ on each), with log-loss reductions of up to 22.1\% on click-prediction. Combining HEB-M with sqrt-mutual-information (sqrt-MI) weighting cuts top-1 ECE by 41\%--70\% versus Laplace on the high-cardinality benchmarks, demonstrating that the smoothing improvement is orthogonal to both the weighting and structural axes of NB refinement.

% Across $31$ UCI and OpenML datasets
% ($K_{\max}\!\in\![5\,760, 16\,137]$ on the three high-cardinality
% benchmarks), HEB-M attains the best mean Friedman rank on log-loss and
% Brier ($\bar r=2.26$ each) and, under the standard Friedman--Nemenyi
% multi-method comparison protocol of
% Dem\v{s}ar~\cite{demvsar2006statistical}, significantly dominates
% $13$ of $24$ baseline-metric pairs at $\alpha=0.05$.  HEB-AODE
% significantly improves over vanilla AODE on F1, log-loss, and Brier
% (paired Wilcoxon $p\le 0.029$ on each), with log-loss reductions of
% $9.7\%$ on amazon-employee and $22.1\%$ on click-prediction.
% Combining HEB-M with sqrt-mutual-information weighting (HEB-M+GR)
% cuts top-1 ECE by $41\%$ to $70\%$ versus Laplace on the
% high-cardinality benchmarks, demonstrating empirically that the
% smoothing improvement is orthogonal to both the weighting and the
% structural axes of NB refinement.
% \end{enumerate}

The remainder of the paper is organized as follows.
Section~\ref{sec:related} reviews related work;
Section~\ref{sec:estimator} develops the proposed estimators;
Section~\ref{sec:theory} provides their theoretical analysis;
Section~\ref{sec:experiments} reports the empirical study and limitations;
and Section~\ref{sec:conclusion} concludes the paper.

% The remainder of the paper reviews related work
% (Section~\ref{sec:related}), develops the estimators
% (Section~\ref{sec:estimator}) and their theory
% (Section~\ref{sec:theory}), reports the empirical study
% (Section~\ref{sec:experiments}), and discusses limitations (Sections~\ref{sec:discussion}), and
% concludes the paper (Sections~\ref{sec:conclusion}).

\textit{Notational Definitions}:
$\mathbb{R}_{\ge 0}^{K}$ denotes the set of $K$-dimensional nonnegative real vectors.
$\Prob(\cdot)$ is the probability function.
$\norm{\cdot}_1$ and $\mathbb{E}(\cdot)$ denote the $\ell_1$-norm and expectation operators, respectively.
$a \wedge b = \min\{a,b\}$ denotes the truncation at level $b$.
All logarithms are to the natural base $e$.
We write $K \gtrsim N$ if $K \ge cN$ for some constant $c > 0$, and $K \asymp N$ if $c_1 N \le K \le c_2 N$ for some constants $c_1, c_2 > 0$, both for sufficiently large $N$.
Moreover, $o(\cdot)$, $\mathcal{O}(\cdot)$, $\tilde{\mathcal{O}}(\cdot)$, $\Omega(\cdot)$, and $\Theta(\cdot)$ denote asymptotic negligibility, asymptotic upper bounds, asymptotic upper bounds up to logarithmic factors, asymptotic lower bounds, and tight asymptotic order, respectively.

% \textit{Notational Definitions}:
% $\mathbb{R}_{\ge 0}^{K}$ denotes the set of nonnegative real vectors of dimension $K$.
% $\Prob(\cdot)$ denotes the probability function.
% $\inf_{\mathcal X}$ denotes the infimum (greatest lower bound) over all measurable values of a mapping function $\mathcal{X}$.
% All logarithms in this paper are to the natural base $e$ (i.e., $\log \equiv \ln$), as is standard for the Dirichlet-multinomial marginal likelihood and digamma-function analysis.
% $\norm{\cdot}_1$ and $\mathbb{E}(\cdot)$ denote the $\ell_1$-norm and expectation operators, respectively.
% %
% $a \wedge b = \min\{a,b\}$ denotes the truncation at level $b$.
% %
% $K \gtrsim N$ and $K \asymp N$ denote that $K \geq cN$ 
% for some constant $c > 0$ (i.e., $K$ is of the same order 
% as or larger than $N$), and $c_1 N \leq K \leq c_2 N$ 
% for some constants $c_1, c_2 > 0$ (i.e., $K$ is of the 
% same order as $N$), respectively, for sufficiently large $N$.
% %
% Furthermore, $o(\cdot)$, $\mathcal{O}(\cdot)$, $\tilde{\mathcal{O}}(\cdot)$,
% $\Omega(\cdot)$, and $\Theta(\cdot)$ denote asymptotic negligibility,
% asymptotic upper bounds, asymptotic upper bounds up to logarithmic factors,
% asymptotic lower bounds, and tight asymptotic order, respectively.

% all logarithms in this paper are to the base 10.
 % $K \gtrsim N$ denotes that $K$ is of the same order as $N$ or larger, i.e.,
% there exists a constant $c>0$ such that $K \ge c\,N$ for sufficiently large $N$.

%=====================================================================
% \vspace{-0.25cm}
\section{Related Work}\label{sec:related}

\textit{Smoothing for Categorical NB}: The earliest smoothers for NB are Laplace's add-one rule, Lidstone's $\alpha$-smoothing \cite{lidstone1920note}, and the universal KT mixture~\cite{krichevsky1981performance}. 
Cestnik's $m$-estimate~\cite{cestnik1990estimating} introduces a single concentration parameter and a fixed prior mean, but the concentration must be set manually. 
Good--Turing smoothing~\cite{good1953population} has been used for NB on text but, to our knowledge, without a finite-sample analysis. 
None of these methods adapt the smoothing strength per (class, feature) inside the NB classifier. 
The closest precedent is the HPB procedure of \citet{jambeiro2008hpb}, which introduces Type-II hyperparameter selection for Bayesian-network nodes with high-cardinality parents.
% HPB anticipates the empirical-Bayes idea in a BN-CPT context and includes plain NB as one of the structures it benchmarks against. 
Our work differs from HPB in two specific respects:
(\textit{i})~HPB targets the more general high-cardinality-parent setting and reports no excess-risk or minimax-rate analysis, whereas we instantiate Type-II ML directly in plain NB and provide non-asymptotic risk and calibration bounds (Theorems~\ref{thm:smoothing}--\ref{thm:calibration}); 
(\textit{ii})~the HPB analysis predates the modern minimax-PMF-estimation literature~\cite{han2015minimax}, from which our framing borrows. Empirical-Bayes TE~\cite{micci2001preprocessing} pools categories toward the marginal mean but is used as a feature-engineering preprocess before logistic regression or trees, with no theoretical analysis. Outside NB, hierarchical Dirichlet smoothing is common in topic models~\cite{blei2003latent} and language modelling~\cite{mackay1995hierarchical}.

\textit{Attribute and Instance Weighting}: 
A large recent literature reweights NB factors using mutual information, gain ratio, or class-conditional correlation: CFW~\cite{zhang2016two}, 
CAWNB~\cite{jiang2019class}, 
CAVWNB~\cite{zhang2020class}, 
AIWNB~\cite{zhang2021attribute},
FTAWNB~\cite{zhang2022fine}, 
ATFNB~\cite{zhou2022adaptive}, 
A$^2$WNB~\cite{zhang2022attribute}, 
FNB~\cite{hue2024fractional}, 
and AG-NBC~\cite{he2025attribute}. 
These methods are orthogonal to ours: they reweight {$\log \hat \Prob(X_f \!\mid\! Y)$} without changing how the true probability is estimated. 
% HEB-NB can be combined with any of them.

\textit{Structural Relaxation}: 
AODE~\cite{webb2005not}, tree augmented NB \cite{friedman1997bayesian}, hidden NB~\cite{jiang2009hnb}, and KDB~\cite{sahami1996learning} relax the Naive independence assumption by adding parents to attributes. 
These methods retain Laplace-style smoothing internally and would also benefit from HEB. 
Recent vine-copula NB classifiers~\cite{csahin2025vine} go further but at quadratic pair-copula cost and without consistency rates. 
Structural and smoothing improvements are orthogonal axes of NB refinement: HEB-NB advances the smoothing axis; AODE advances the structural axis. 
AODE has substantial memory overhead for high-cardinality features and is included in Section~\ref{sec:expt:main} as a reference baseline using a memory-capped variant. 
The combination of both axes is realized by our proposed HEB-AODE in Section~\ref{sec:hebaode}.

% relax the Naive independence assumption by adding parents to attributes. These methods retain Laplace-style smoothing internally and would also benefit from HEB. Recent vine-copula NB classifiers~\cite{csahin2025vine} go further but at {$\mathcal{O}(F^2)$} pair-copula cost and without consistency rates. 
% {Structural} and {smoothing} improvements are {orthogonal axes} of NB refinement: HEB-NB advances the smoothing axis (Sections~\ref{sec:estimator},~\ref{sec:theory}); AODE advances the structural axis. AODE requires {$O(F^2 \cdot K_{\max}^2 \cdot C)$} memory to store {$\Prob(X_j \mid Y, X_i)$} for every super-parent $i$ and child $j$, and is infeasible on high-cardinality features. We provide AODE as a reference baseline in Section~\ref{sec:expt:main} using a paired Wilcoxon test and a memory-capped variant; AODE wins on every metric, as expected from its richer model class. The combination of both axes is realised by \textbf{HEB-AODE} (Section~\ref{sec:hebaode}); the empirical comparison against vanilla AODE is reported in Section~\ref{sec:expt:main}.

\textit{Calibration of NB}:
NB is famously over-confident \cite{niculescu2005predicting}; the standard correction is post-hoc Platt scaling \cite{platt1999probabilistic}, isotonic regression \cite{zadrozny2002transforming}, or temperature scaling \cite{guo2017calibration}. 
Moreover, \citet{perez2024risk} propose a risk-based iterative recalibrator. Our work is complementary: HEB-NB improves the {upstream} probability estimator, whereas calibration corrects the {downstream} posterior.

% \color{blue}

% Abbreviations,, total-variation (TV), ECE

% \color{black}

%% =====================================================================
% \vspace{-0.25cm}
\section{Proposed HEB-NB and HEB-AODE Estimators}
\label{sec:estimator}

\autoref{fig:heb-overview} summarizes the training and prediction workflow of HEB-NB and its HEB-AODE extension. From the categorical training data, HEB-NB forms the class counts and the category-count vector for each class--feature pair, and constructs the feature-specific prior mean using either HEB-U (uniform) or HEB-M (pooled marginal). The concentration for each class--feature pair is learned by Type-II ML or set by the low-count fallback in Algorithm~\ref{alg:heb-nb}. The resulting empirical-Bayes posterior means are stored in log form with the estimated class priors. At test time, NB log scores are normalized into posterior probabilities for evaluation, and prediction selects the class with the highest posterior probability. HEB-AODE applies the same smoother to AODE's child-conditional factors, as detailed in Section~\ref{sec:hebaode}.

\begin{figure*}[!t]
\centering
{\includegraphics[width=\textwidth]{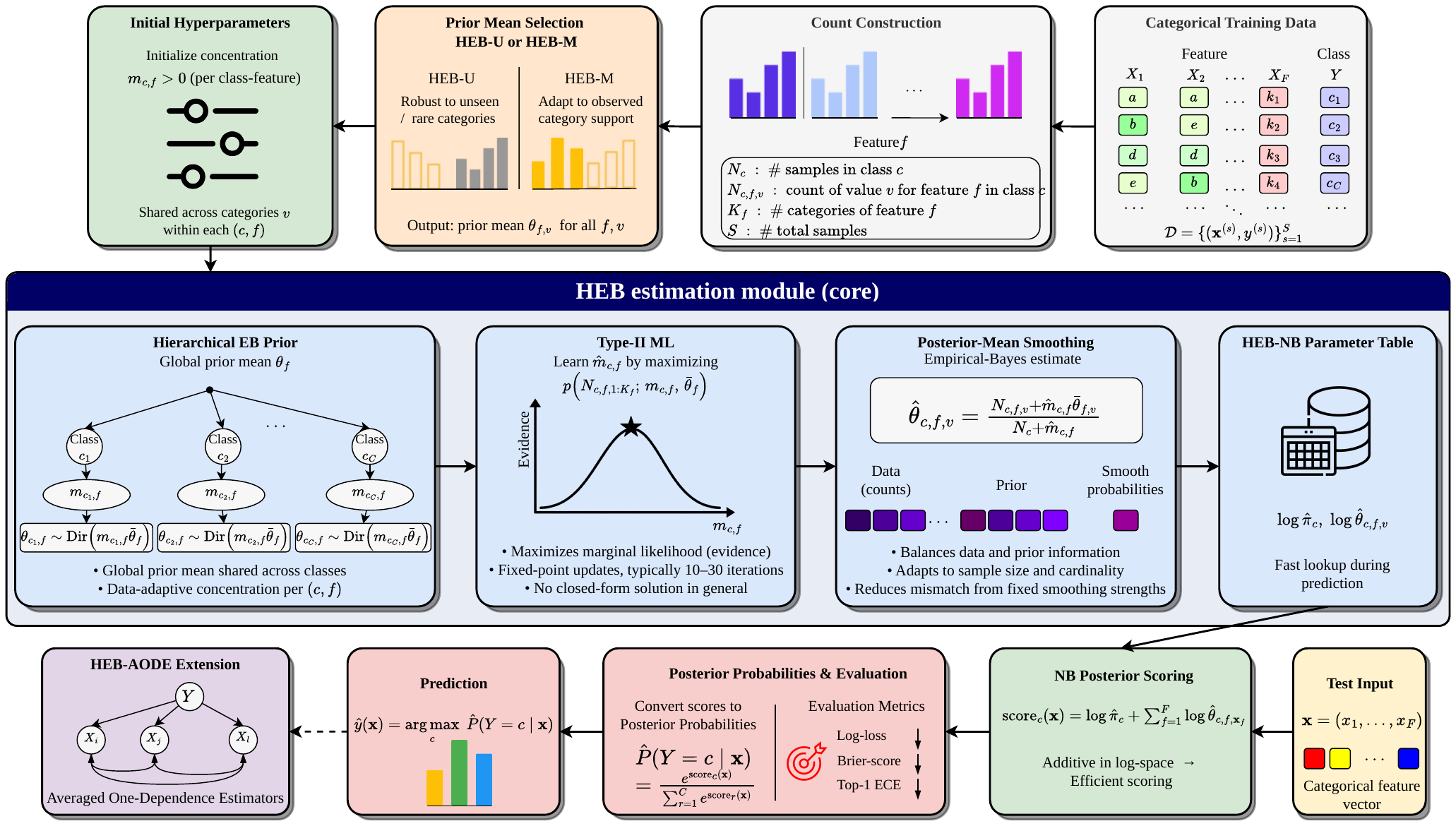}}
\caption{Training and prediction workflow of HEB-NB and its HEB-AODE extension.}
\label{fig:heb-overview}
\end{figure*}

\subsection{The Proposed HEB-NB Estimator}
\label{sec:HEB-NB}

\subsubsection{Categorical NB Model}

Let $\mathcal{D} = \{(\mathbf{x}^{(s)}, y^{(s)})\}_{s=1}^{S}$ denote a training dataset of $S$ samples, where $y^{(s)} \in \{1, \dots, C\}$ is a categorical class label from a set of $C$ classes, and $\mathbf{x}^{(s)} = (x_1^{(s)}, \dots, x_F^{(s)})$ is a vector of $F$ categorical features.
% \footnote{Unless otherwise stated, 
% $s \in \{ 1,\ldots,S \}$, 
% $c \in \{ 1,\ldots,C \}$,
% $f \in \{ 1,\ldots,F \}$,
% and
% $v \in \{ 1,\ldots,K_f \}$
% are assumed throughout this manuscript.}
%
Let $N_c := \left|\{s : y^{(s)} = c\}\right|$ denote the number of training samples belonging to class $c$, and let $N_{c,f,v} := \big|\{s : y^{(s)} = c,\; x_f^{(s)} = v\}\big|$ denote the joint count for class $c$, feature $f$, and category $v$.
Each feature $x_f^{(s)}$ takes values in a finite alphabet $\{1,\ldots,K_f\}$ of size $K_f$.
Unless otherwise stated, 
$s \in \{ 1,\ldots,S \}$, 
$c \in \{ 1,\ldots,C \}$,
$f \in \{ 1,\ldots,F \}$,
and
$v \in \{ 1,\ldots,K_f \}$
are assumed throughout this manuscript.
By construction, for every $(c,f)$ pair, $\sum_{v=1}^{K_f} N_{c,f,v} = N_c$.
%%%
To place the NB classifier in a decision-theoretic framework,
let $\mathcal{X} := \prod_{f=1}^{F} \{1,\dots,K_f\}$
denote the Cartesian product of the feature alphabets.
Let $X = (X_1,\dots,X_F)$ be a random feature vector taking values in
$\mathcal{X}$, and let $Y \in \{1,\dots,C\}$ be the corresponding random class label.
The categorical NB classifier predicts the class label of an input $\mathbf{x} = (x_1, \dots, x_F)$ under the conditional-independence assumption $\Prob(\mathbf{x} \!\mid\! Y) = \prod_{f=1}^F \Prob(X_f = x_f \!\mid\! Y)$, such that \cite[Eq.~(13.4)]{manning2008introduction}
% \hl{[Ref is needed]}
% with discrete labels $y^{(i)} \in \{1, \dots, C\}$ and feature vectors $x^{(i)} = (x_1^{(i)}, \dots, x_d^{(i)})$ where each $x_j$ takes values in a finite alphabet of size $K_j$. 
% Let $N_c := |\{i : y^{(i)} = c\}|$ and $N_{cjv} := |\{i : y^{(i)} = c, x_j^{(i)} = v\}|$ denote the marginal class count and joint (class, feature, value) count, respectively. 
% The categorical Naive Bayes posterior is
% \hl{[Ref is needed]}
\begin{equation}
\hat y(\mathbf{x}) = \arg\max_c \big[\! \log (\hat\pi_c) + \sum\nolimits_{f} \log (\hat\theta_{c,f,x_f}) \big],
\label{eq:nb-posterior}
\end{equation}
where $\hat\pi_c = (N_c + 1)/(S + C)$ is the Laplace-smoothed class prior,
and $\hat\theta_{c,f,v} := \hat \Prob(X_f = v \mid Y = c)$ denotes the estimated class-conditional PMF for each $v$, evaluated at $v = x_f$ in~\eqref{eq:nb-posterior}.
Notably, we use $\hat\pi_c$, $\hat\theta_{c,f,v}$, and $\hat \Prob(\cdot)$ to denote estimates of $\pi_c := \Prob(Y=c)$, $\theta_{c,f,v} := \Prob(X_f = v \!\mid\! Y = c)$, and $\Prob(\cdot)$, respectively, since their true values are unknown and are instead estimated from the training data.
% $\hat{\Prob}(\cdot)$ represents an estimate of the corresponding PMF learned from the training data $\mathcal D$.

For each pair $(c,f)$,
${{\boldsymbol{\theta }}}_{c,f} :=  [\theta_{c,f,1},\ldots,\theta_{c,f,K_f}]$ takes values in the probability simplex
$\Delta^{K_f-1} \triangleq \{\mathbf{p} = [p_1,\ldots,p_{K_f}] \in \mathbb{R}_{\ge 0}^{K_f} : \sum_{v=1}^{K_f} p_v = 1\}$.
The zero-one misclassification risk of a classifier $g : \mathcal{X} \to \{1, \dots, C\}$ is defined as
$R(g) := \Prob\bigl(g(X) \neq Y\bigr)$.
% , where the deterministic classifier $g$ is any function such that $g : \mathcal{X} \to \{1, \dots, C\}$.
% Accordingly, the Bayes risk and Bayes-optimal classifier are given, respectively, as
% \begin{align}
%     R^\star &:= \inf\nolimits_g R(g) = R(g^\star),\\
%     g^\star(x) &= \arg\max\nolimits_c \Prob(Y = c \mid X = x), \,\,\forall  x \in \mathcal X.
% \end{align}
Accordingly, the Bayes risk and Bayes-optimal classifier are given by
$R^\star := \inf\nolimits_g R(g) = R(g^\star)$ 
and
$g^\star(x) = \arg\max\nolimits_c \Prob(Y = c \mid X = x), \,\,\forall  x \in \mathcal X$, respectively.

% \textcolor{red}{Check until this...}

% The $0$--$1$ misclassification risk of a classifier
% $g : \mathcal{X} \to \{1, \dots, C\}$ is defined as
% $R(g) := \Prob(g(X) \neq Y)$.
% The Bayes risk is
% $R^\star := \inf_g R(g) = R(g^\star)$,
% where the Bayes-optimal classifier is
% $g^\star(x) = \arg\max_c \Prob(Y = c \mid X = x)$.

% , and the conditional pmf $\hat P(X_j \mid Y=c) \in \Delta^{K_j-1}$ (with $\Delta^{K-1} := \{p \in \mathbb{R}_{\ge 0}^{K} : \sum_v p_v = 1\}$ the probability simplex) must be estimated for every $(c,j)$ pair. The 0-1 misclassification risk of any classifier $g : \mathcal{X} \to \{1, \dots, C\}$ is $R(g) := \Prob(g(X) \ne Y)$, the Bayes risk is $R^\star := \inf_g R(g) = R(g^\star)$ with $g^\star(x) = \arg\max_c P(Y = c \mid X = x)$.

% \vspace{-0.25cm}
\subsubsection{The HEB-NB Estimator}\label{sec:hebnb-estimator}

To estimate $\theta_{c,f,v}$ in \eqref{eq:nb-posterior}, we adopt a Bayesian generative modeling approach.
Specifically, for each class-feature pair $(c,f)$, we model
${{\boldsymbol{\theta }}}_{c,f}$ as a random variable drawn from a Dirichlet prior, i.e.,
${{\boldsymbol{\theta }}}_{c,f} \sim \mathrm{Dirichlet}(m_{c,f} \,\bar{\boldsymbol{\theta}}_{f})$,
where $\bar{\boldsymbol{\theta}}_{f} := [\bar\theta_{f,1},\ldots,\bar\theta_{f,K_f}] \in \Delta^{K_f-1}$ is a \emph{prior mean} shared across classes, and $m_{c,f} > 0$ is a \emph{concentration parameter} controlling the strength of shrinkage.
Given the observed count vector
$\mathbf{n}_{c,f} = (N_{c,f,1}, \dots, N_{c,f,K_f})$, which is $\mathrm{Multinomial}(N_c, \boldsymbol{\theta}_{c,f})$ conditional on $\boldsymbol{\theta}_{c,f}$ by the categorical NB model, the Dirichlet-multinomial conjugacy yields the closed-form posterior
$\boldsymbol{\theta}_{c,f}\mid \mathbf{n}_{c,f} \sim \mathrm{Dirichlet}(m_{c,f}\bar{\boldsymbol{\theta}}_{f} + \mathbf{n}_{c,f})$.
% \begin{align}
%     \boldsymbol{\theta}_{c,f}\mid \mathbf{n}_{c,f} \sim \mathrm{Dirichlet}(m_{c,f}\bar{\boldsymbol{\theta}}_{f} + \mathbf{n}_{c,f}).
% \end{align}
The posterior mean of ${{\boldsymbol{\theta }}}_{c,f}$ therefore admits the element-wise closed form of
\begin{equation}
\hat\theta_{c,f,v}=\hat \Prob_{\HEB}(X_f = v \mid Y = c) = \frac{N_{c,f,v} + m_{c,f}\,\bar\theta_{f,v}}{N_c + m_{c,f}}.
\label{eq:heb}
\end{equation}

Two natural choices of the prior mean $\bar{\boldsymbol{\theta}}_{f}$ give rise to two variants of HEB-NB in \eqref{eq:heb}.
Specifically, a uniform prior, referred to as \textbf{HEB-U}, is given by
\begin{align} \label{eq:HEB-U}
    \bar\theta_{f,v} = 1/K_f, \,\, \forall v,
\end{align}
whereas a pooled marginal prior, referred to as \textbf{HEB-M}, is defined as
\begin{align}\label{eq:HEB-M}
    \bar\theta_{f,v} = \hat \Prob(X_f = v) 
    = \sum\nolimits_c N_{c,f,v}/S.
\end{align}

% \begin{remark}[Empirical-Bayes Data Dependence]\label{Remark:HEB-NB}
% % {\noindent\emph{Remark on empirical-Bayes data dependence.}
% The HEB-M prior mean $\bar{\boldsymbol{\theta}}_f$ in~\eqref{eq:HEB-M} is computed from the same training data $\mathcal{D}$ used to fit $(N_{c,f,v}, \hat m_{c,f})$ for each class $c$. 
% This is a defining characteristic of empirical Bayes: the prior is data-driven rather than specified a priori, and the resulting procedure is therefore not a strict Bayesian method.
% The information sharing induced by pooling across classes is what gives HEB-M its statistical advantage in the high-cardinality regime.
% For analytical tractability, however, the theoretical analysis in Section~\ref{sec:theory} treats $\bar{\boldsymbol{\theta}}_f$ as conditionally fixed when bounding the per-class HEB-NB error (see Remark~\ref{remark:NotationalNote}).
% \end{remark}

\begin{remark}[Empirical-Bayes Data Dependence]\label{Remark:HEB-NB}
The HEB-M prior mean $\bar{\boldsymbol{\theta}}_f$ in~\eqref{eq:HEB-M} is computed from the same training data $\mathcal{D}$ used to fit $(N_{c,f,v}, \hat{m}_{c,f})$, making the procedure empirical-Bayes rather than strictly Bayesian. The cross-class pooling drives HEB-M's advantage in the high-cardinality regime. For analytical tractability, Section~\ref{sec:theory} treats $\bar{\boldsymbol{\theta}}_f$ as conditionally fixed when bounding the per-class error (see Remark~\ref{remark:NotationalNote}).
\end{remark}

% \textcolor{red}{Check until this...}
% We model each class-conditional pmf $\theta_{cj} := P(X_j \mid Y=c) \in \Delta^{K_j-1}$ as drawn from a Dirichlet prior:
% where $\bar\theta_j \in \Delta^{K_j-1}$ is a \emph{prior mean} and $m_{cj} > 0$ is a \emph{concentration}. 
% The posterior mean given the count vector $N_{cj\cdot} = (N_{cj1}, \dots, N_{cjK_j})$ is
% Two natural choices for $\bar\theta_j$ define two variants of HEB-NB:
% \begin{itemize}
% \item \textbf{HEB-U} uses the uniform prior $\bar\theta_{j,v} = 1/K_j$;
% \item \textbf{HEB-M} uses the marginal pooled prior $\bar\theta_{j,v} = \hat P(X_j = v) = (\sum_c N_{cjv})/n$.
% \end{itemize}

%===============
% \subsection{Type-II Maximum-Likelihood Estimation of $m_{c,f}$}\label{sec:typeiiml}
% Recall from Section~III-B that the concentration parameter $m_{c,f}$
% controls the strength of shrinkage in the Dirichlet prior $\mathrm{Dirichlet}(m_{c,f}\,\bar\theta_f)$.
Subsequently, the concentration parameter $m_{c,f}$ can be calculated using the Type-II ML estimator.
Specifically, for each class-feature pair $(c,f)$, we estimate $m_{c,f}$ to maximize the marginal log-likelihood of the observed count vector under the Dirichlet-multinomial model.
Integrating out $\boldsymbol{\theta}_{c,f}$ yields the
marginal log-likelihood
\begin{equation}
\ell(m_{c,f}) = \log \frac{\Gamma(m_{c,f})}{\Gamma(N_c + m_{c,f})} + \sum_{v=1}^{K_f} \log \frac{\Gamma(N_{c,f,v} + m_{c,f}\,\bar\theta_{f,v})}{\Gamma(m_{c,f}\,\bar\theta_{f,v})}.
\label{eq:dm-marginal}
\end{equation}
We maximize \eqref{eq:dm-marginal} using the Minka fixed-point iteration \cite{minka2000estimating}, which yields the estimate update
\begin{equation}
\hat m_{c,f}^{(t+1)} = \frac{\sum_v \bar\theta_{f,v}\big[\psi(N_{c,f,v} + \hat m_{c,f}^{(t)}\bar\theta_{f,v}) - \psi(\hat m_{c,f}^{(t)}\bar\theta_{f,v})\big]}{[\psi(N_c + \hat m_{c,f}^{(t)}) - \psi(\hat m_{c,f}^{(t)})] \big/ \hat m_{c,f}^{(t)}},
\label{eq:minka}
\end{equation}
where $\Gamma(\cdot)$ and $\psi(\cdot)$ denote the gamma and digamma functions, respectively \cite{qi2004complete}.

% \subsection{Time and Memory Complexity}\label{sec:algo}
\begin{algorithm}[!t]
\small
\caption{\small HEB-NB training}\label{alg:heb-nb}
\begin{algorithmic}[1]
\Require Training data $\mathcal{D}$, low-count threshold $N_0 = 10$, and clamp $[m_{\min}, m_{\max}] = [10^{-2}, 10^{4}]$.
\For{$f = 1, \dots, F$}
    \State Compute the prior mean $\bar{\boldsymbol{\theta}}_{f}$ using HEB-U or HEB-M according to \eqref{eq:HEB-U} or \eqref{eq:HEB-M}, respectively;
\EndFor
\For{$c = 1, \dots, C$}
    \For{$f = 1, \dots, F$}
    \State Compute the count vector $(N_{c,f,1}, \dots, N_{c,f,K_f})$;
    \If{$N_c < N_0$}
        % \State Fall back to $m_{c,f} = K_f$ (i.e., Laplace);
        \State Set $\hat m_{c,f} = K_f$;
        \Comment{Reduce to Laplace estimator}
    \Else
        \State Estimate $\hat m_{c,f}$ using \eqref{eq:minka}, with initialization $\hat m_{c,f}^{(0)} = 1$ and clamping to $\hat m_{c,f} \in [m_{\min}, m_{\max}]$;
    \EndIf
    \State Compute and store $\log (\hat\theta_{c,f,v}), \forall v$, according to \eqref{eq:heb};
    % Store $\log \big[\hat \Prob(X_f = v \mid Y = c)\big] = \log \left(\frac{N_{c,f,v} + m_{c,f}\,\bar\theta_{f,v}}{N_c + m_{c,f}}\right)$;
    % the logarithm of the posterior mean estimate in \eqref{eq:heb}, i.e.,
    % \begin{align*}
    %     \log \hat \Prob(X_f = v \mid Y = c) = \log\frac{N_{c,f,v} + m_{c,f}\,\bar\theta_{f,v}}{N_c + m_{c,f}}.
    % \end{align*}
    \EndFor
    \State Compute and store $\log (\hat\pi_c) = \log\big[(N_c + 1)/(S + C)\big]$;
\EndFor

\State Predict $\hat y(\mathbf{x})$ according to
\eqref{eq:nb-posterior} and \textbf{return}.

% \State \textbf{return} $\{\log (\hat\pi_c)\}_{\forall c}$ and $\{\log (\hat\theta_{c,f,v})\}_{\forall c,f,v}$.
% , \{m_{c,f}\}_{\forall c,f}$;
\end{algorithmic}
\end{algorithm}

The overall procedure of the proposed HEB-NB is summarized in Algorithm~\ref{alg:heb-nb}.
The computational complexity of Algorithm~\ref{alg:heb-nb} can be readily calculated by $\mathcal O\big(C \sum_f K_f \cdot T_{\rm Minka}^{(1)} \big)$,
where $T_{\rm Minka}^{(1)}$ denotes the number of iterations required for the Minka fixed-point iteration in \eqref{eq:minka} to converge.
In practice, this fixed-point iteration typically converges within
10--30 iterations \cite{minka2000estimating}.
Furthermore, the memory complexity of Algorithm~\ref{alg:heb-nb} is calculated by $\mathcal O(C \sum_f K_f)$.

% \noindent\textbf{Complexity.} Time is $O\big(C \sum_j K_j \cdot T_{\rm Minka}\big)$ with $T_{\rm Minka} \le 30$; memory is $O(C \sum_j K_j)$. Both are within a constant factor of standard categorical NB.

% \subsection{Limit Cases}\label{sec:limit-cases}

The following propositions show that HEB-NB unifies two well-known smoothing methods as limit cases.

\begin{proposition}[Laplace Recovery]\label{prop:laplace}
If the prior mean $\bar{\boldsymbol{\theta}}_{f}$ is uniform and the concentration parameter $m_{c,f}$ is fixed to $K_f$, then HEB estimator reduces to $\hat\theta_{c,f,v}=\hat \Prob_{\HEB}(X_f = v \mid Y=c) = (N_{c,f,v} + 1)/(N_c + K_f)$, which corresponds exactly to the Laplace (add-one) estimator.
\end{proposition}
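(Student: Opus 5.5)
The plan is a direct substitution into the closed-form posterior mean \eqref{eq:heb}. The uniform prior mean is the HEB-U choice \eqref{eq:HEB-U}, so $\bar\theta_{f,v}=1/K_f$ for every $v$. With the concentration fixed at $m_{c,f}=K_f$, the pseudo-count added to category $v$ is
\[
m_{c,f}\,\bar\theta_{f,v}=K_f\cdot\frac{1}{K_f}=1 ,
\]
uniformly in $v$. The denominator becomes $N_c+K_f$. Plugging both into \eqref{eq:heb} gives
\[
\hat\theta_{c,f,v}=\frac{N_{c,f,v}+1}{N_c+K_f}.
\]

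Next, I would confirm that this matches the Laplace (add-one) rule. That rule adds one pseudo-count to each of the $K_f$ categories and normalizes by the augmented total $\sum_{v}(N_{c,f,v}+1)=N_c+K_f$, using the identity $\sum_{v=1}^{K_f}N_{c,f,v}=N_c$ stated in the model setup. As a consistency check, the resulting vector lies in $\Delta^{K_f-1}$, since its entries are positive and sum to one. This also reconciles the choice $m_{c,f}=K_f$ with the fallback branch of Algorithm~\ref{alg:heb-nb}, where $\hat m_{c,f}=K_f$ is annotated as reducing to Laplace under HEB-U.

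There is no genuine obstacle. The only point needing care is interpretive: the statement fixes $m_{c,f}$ rather than estimating it via Type-II ML \eqref{eq:minka}. The reduction is therefore a statement about the estimator family \eqref{eq:heb} at a particular hyperparameter value, not a claim that the Minka iteration returns $K_f$. I would state this explicitly so that the proposition is read as a limit/special case of the family.
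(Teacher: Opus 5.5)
Your proposal is correct and matches the paper's argument: the paper says only that the result follows immediately from the closed-form posterior mean \eqref{eq:heb}, which is exactly your substitution $m_{c,f}\bar\theta_{f,v}=K_f\cdot(1/K_f)=1$ with denominator $N_c+K_f$. Your clarification that the proposition concerns the estimator family at a fixed hyperparameter, not the output of the Minka iteration, is accurate.
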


\begin{proposition}[HEB-M Endpoints]\label{prop:hebm-endpoints}
% Let $\bar{\boldsymbol{\theta}}_{f}$ be the pooled marginal $\hat \Prob(X_f)$. 
For each class-feature pair $(c,f)$ with $N_c \ge 1$, the HEB estimator admits the following limit cases.
% Notably, we typically have $N_c \ge 1$.
As $m_{c,f} \to 0$, the HEB estimator converges to the unsmoothed class-conditional ML estimator, $\hat\theta_{c,f,v}  \to N_{c,f,v}/N_c$. 
As $m_{c,f} \to \infty$, the estimator converges to the pooled marginal, $\hat\theta_{c,f,v} \to \bar\theta_{f,v} = \hat \Prob(X_f = v)$, which corresponds to the full--shrinkage endpoint of the Micci--Barreca-style hierarchical pooling family~\cite{micci2001preprocessing}.
If $N_c = 0$, the first limit is undefined; Step~6 in Algorithm~\ref{alg:heb-nb} instead handles this case via the Laplace fallback.
\end{proposition}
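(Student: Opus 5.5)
The plan is to take limits directly in the closed-form posterior mean \eqref{eq:heb}, treating the counts $N_{c,f,v}$, $N_c$ and the prior mean $\bar{\boldsymbol{\theta}}_f$ as fixed. Under the HEB-M choice \eqref{eq:HEB-M}, $\bar{\boldsymbol{\theta}}_f$ depends on the data but not on $m_{c,f}$, so it is a constant with respect to the limit. For each fixed $v$, the estimator
\[
h(m) := \frac{N_{c,f,v} + m\,\bar\theta_{f,v}}{N_c + m}
\]
is a rational function of $m>0$. Its numerator and denominator are affine in $m$, and the denominator is strictly positive on $m\ge 0$ whenever $N_c\ge 1$.

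\textbf{The limit $m\to 0$.} Since $N_c\ge 1$, $h$ extends continuously to $m=0$. Substituting gives $h(m)\to N_{c,f,v}/N_c$. This is the class-conditional relative frequency, i.e., the unsmoothed ML estimator of $\boldsymbol{\theta}_{c,f}$ under the multinomial likelihood.

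\textbf{The limit $m\to\infty$.} I divide numerator and denominator by $m$ to get
\[
h(m) = \frac{N_{c,f,v}/m + \bar\theta_{f,v}}{N_c/m + 1} \to \bar\theta_{f,v}.
\]
With the HEB-M choice this limit equals $\hat\Prob(X_f=v)=\sum_{c'}N_{c',f,v}/S$. It is convenient to record the convex-combination form
\[
h(m) = \lambda\,\frac{N_{c,f,v}}{N_c} + (1-\lambda)\,\bar\theta_{f,v}, \qquad \lambda = \frac{N_c}{N_c+m}.
\]
This form shows both limits at once, as $\lambda\to 1$ and $\lambda\to 0$. It also makes the identification with Micci-Barreca's shrinkage family explicit: the category estimate is a blend of the within-group frequency and the pooled marginal, with weight $\lambda\in(0,1)$, and $\lambda\to 0$ is the full-shrinkage endpoint. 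Because $K_f$ is finite, the entrywise limits give convergence of the whole vector in $\ell_1$, and the limits stay in $\Delta^{K_f-1}$.

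\textbf{The case $N_c=0$.} Here every $N_{c,f,v}$ is also $0$, so $h(m)\equiv\bar\theta_{f,v}$ for every $m>0$. The expression $N_{c,f,v}/N_c=0/0$ is undefined, so the first limit cannot be identified with an ML estimator. Algorithm~\ref{alg:heb-nb} never evaluates this case through Type-II ML, because $0<N_0$ triggers Step~6, which sets $\hat m_{c,f}=K_f$. With a uniform prior that step gives the Laplace estimator of Proposition~\ref{prop:laplace}.

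None of these steps is a real obstacle. The only point needing care is stating clearly that $\bar{\boldsymbol{\theta}}_f$ is held fixed while $m$ varies, which is consistent with Remark~\ref{Remark:HEB-NB}.
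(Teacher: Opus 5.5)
Your proposal is correct and takes the same approach as the paper, which simply states that both limits follow immediately from the closed-form posterior mean \eqref{eq:heb}. Your convex-combination rewriting and your treatment of $N_c=0$ are accurate refinements. In particular, you note that for $N_c=0$ the estimator is constant in $m$, so what fails is the identification with $N_{c,f,v}/N_c$, not the existence of the limit.
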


Both results follow immediately from the closed-form expression of the
posterior mean in~\eqref{eq:heb}.
Section~\ref{sec:theory} will demonstrate that the Type-II ML procedure selects $m_{c,f}$ data-adaptively between these two endpoints.

\subsection{The HEB-AODE Estimator}\label{sec:hebaode}

% Combining HEB Smoothing with Structural Relaxation

Notably, smoothing and structural relaxations constitute orthogonal axes for improving NB classifiers. 
While HEB-NB addresses the smoothing axis by introducing HEB estimation of class-conditional PMFs, the AODE \cite{webb2005not} addresses the
structural axis by relaxing the conditional independence assumption.
In this subsection, we demonstrate that the proposed HEB smoothing framework extends naturally to AODE, yielding a hierarchical variant referred to as \textbf{HEB-AODE}.
% To illustrate the modularity of the proposed hierarchical smoothing framework, we extend it to AODE~\cite{webb2005not}
% by replacing AODE's internal Laplace smoothing of the conditional probabilities
% $\Prob(X_j \mid Y, X_i)$ with HEB--M smoothing.
% As a direct illustration of HEB's modularity, we replace AODE's~\cite{webb2005not} internal Laplace smoothing of the conditional probabilities $P(X_j \mid Y, X_i)$ with HEB-M smoothing.

% \noindent\textbf{AODE recap.} 
\subsubsection{AODE Preliminaries}
The AODE aggregates one-dependence estimators by selecting a single attribute as a super-parent \cite{webb2005not}.
In Section~\ref{sec:hebaode}, deviating from the global convention in the footnote of Section~\ref{sec:HEB-NB}, we follow the standard AODE notation by using $i, j \in \{1,\ldots,F\}$ with $i \ne j$ to index the super-parent and child attributes, respectively.
The indices $i$ and $j$ are feature indices, playing the role of $f$ elsewhere in the paper.
% \footnote{The indices $i$ and $j$ are feature indices, playing the role of $f$ elsewhere in the paper.}
%
% Following standard notations, let $i \in \{ 1,\ldots,F \}$ index the super-parent attribute and $j \in \{ 1,\ldots,F \} \backslash \{ i\}$ index a child attribute.
% Each feature $\{x_i^{(s)}\}_{i=1}^F$ takes values in a finite alphabet of size $K_i$, i.e., $x_i^{(s)} = u \in \{1,\ldots,K_i \}$.
% Similarly, we denote $x_j^{(s)} = \vartheta \in \{1,\ldots,K_j \}$.
%
Let $N_i(u) := |\{s : x_i^{(s)} = u\}|$ denote the marginal count of category $u\in \{1,\ldots,K_i \}$ in feature $i$.
% , and let $m_0 \ge 1$ denote the AODE frequency threshold below which a super--parent is skipped.\footnote{Following Webb \etal \cite{webb2005not}, we schoose $m_0 = 1$ for the standard AODE frequency threshold.} 
% AODE aggregates one-dependence estimators: for super-parents $i \in \{1, \dots, d\}$ with sufficient support,
For an input $\mathbf{x}= (x_1,\ldots,x_F)$, the AODE posterior for a realization label $y$ is given by~\cite{webb2005not}
\begin{multline}
\hat \Prob_{\rm AODE}(y \mid \mathbf{x}) \propto \sum\nolimits_{i: N_i(x_i) \ge 1} \hat \Prob(Y = y, X_i = x_i) \\
\cdot \prod\nolimits_{j \ne i} \hat \Prob(X_j = x_j \mid Y = y, X_i = x_i).
\label{eq:aode}
\end{multline}
% Each conditional factor $\hat \Prob(X_j = x_j \mid Y = y, X_i = x_i)$ is in turn smoothed; the standard AODE implementation uses Laplace smoothing.

Vanilla AODE uses Laplace smoothing for both factors in~\eqref{eq:aode}. Specifically,
\begin{gather}
\hat \Prob(Y=y,\, X_i = x_i) = ({N^{(i)}_{y, x_i} + 1})/({S + C K_i}), \label{eq:aode-joint}\\
\hat \Prob(X_j = x_j \! \mid \! Y=y, X_i = x_i) = \frac{N^{(i,j)}_{y, x_i, x_j}  +  1}{N^{(i)}_{y, x_i}  +  K_j}, \label{eq:aode-cond}
\end{gather}
where $N^{(i)}_{y, x_i} := |\{s : y^{(s)} = y,\, x_i^{(s)} = x_i\}|$ and $N^{(i,j)}_{y, x_i, x_j} := |\{s : y^{(s)} = y,\, x_i^{(s)} = x_i,\, x_j^{(s)} = x_j\}|$ are the class--super-parent marginal count and the class--super-parent--child joint count, respectively. HEB-AODE replaces the conditional factor~\eqref{eq:aode-cond} with the hierarchical Dirichlet-multinomial estimator below, while keeping~\eqref{eq:aode-joint} unchanged.

\subsubsection{The HEB-AODE Estimator}

% \noindent\textbf{HEB-AODE: hierarchical Type-II ML smoothing.} 
% Define the joint count $N^{(i,j)}_{c,u,\vartheta} := |\{s : y^{(s)} = c,\, x_i^{(s)} = u,\, x_j^{(s)} = \vartheta\}|$, $\vartheta \in \{1,\ldots,K_j \}$, and the marginal $N^{(i)}_{c,u} := \sum_\vartheta N^{(i,j)}_{c,u,\vartheta}$, which is independent of $j$. 
To estimate the class-conditional distribution $\Prob(X_j =\vartheta \!\mid\! Y=c, X_i=u)$ for each class $c$, super-parent $i$, and child $j$,
we replace Laplace smoothing with a hierarchical Dirichlet-multinomial estimator, which yields
% For each triple ($c,i,j$), we replace Laplace with the hierarchical Dirichlet-multinomial estimator
\begin{equation}
\hat \Prob(X_j = \vartheta \mid Y=c, X_i = u) = \frac{N^{(i,j)}_{c,u,\vartheta} + \hat m_{c,i,j}\,\bar\theta_{j,\vartheta}}{N^{(i)}_{c,u} + \hat m_{c,i,j}}.
\label{eq:hebaode}
\end{equation}
For the HEB-AODE estimator, we employ the pooled marginal prior such that
${{\bar \theta }_{j,\vartheta }} = \hat \Prob({X_j} = \vartheta ) = \sum\nolimits_c^{} {N_{c,j,\vartheta }^{}} /S$,
% \begin{align}\label{eq:bar_theta_jv}
%     {{\bar \theta }_{j,\vartheta }} = \hat \Prob({X_j} = \vartheta ) = \sum\nolimits_c^{} {N_{c,j,\vartheta }^{}} /S,
% \end{align}
where $N_{c,j,\vartheta} = \sum_u N^{(i,j)}_{c,u,\vartheta}$, for any fixed super-parent $i \ne j$.
Moreover, the concentration parameter $\hat m_{c,i,j}$ in \eqref{eq:hebaode} is shared across all instantiations $u \in \{1,\ldots,K_i\}$ and learned by the Type-II ML estimator on the multi-document Dirichlet-multinomial marginal such that
\begin{equation}
\hat m_{c,i,j} = \arg\max_{m_{c,i,j} > 0} \sum\nolimits_{u=1}^{K_i} \log \Prob\big(\mathbf{n}^{(i,j)}_{c,u} \big| m_{c,i,j}, \bar{\boldsymbol{\theta}}_j\big),
\label{eq:hebaode-typeiiml}
\end{equation}
where $\mathbf{n}^{(i,j)}_{c,u} := (N^{(i,j)}_{c,u,1}, \dots, N^{(i,j)}_{c,u,K_j})$ is the per-$u$ count vector, treated as one of $K_i$ exchangeable ``documents'' sharing the same Dirichlet concentration $m_{c,i,j}$.
Furthermore, for a fixed $u$, the log marginal likelihood $\log \Prob\big(\mathbf{n}^{(i,j)}_{c,u} \big| m_{c,i,j}, \bar{\boldsymbol{\theta}}_j\big)$ corresponds to the Dirichlet-multinomial distribution obtained by integrating out the categorical parameter under a Dirichlet$(m_{c,i,j}\bar{\boldsymbol{\theta}}_j)$ prior, analogously to \eqref{eq:dm-marginal}.

Maximization of \eqref{eq:hebaode-typeiiml} is carried out using a  fixed-point iteration that extends Minka's single-document update by summing the score terms over $u$, i.e.,
% , see~\eqref{eq:hebaode-fixedpoint} (top of next page).
% --- Two-column-spanning equation (was the squeezed eq. 9) ---
% Standard IEEEtran double-column equation pattern: equation lives
% inside a figure* float; LaTeX numbers it automatically.
% \begin{figure*}[!t]
% \normalsize
% \hrulefill
\begin{align}
\hat m_{c,i,j}^{(t + 1)} = \frac{{\sum\limits_u {\sum\limits_\vartheta  {{{\bar \theta }_{j,\vartheta }}} } [\psi (N_{c,u,\vartheta }^{(i,j)} + \hat m_{c,i,j}^{(t)}{{\bar \theta }_{j,\vartheta }}) - \psi (\hat m_{c,i,j}^{(t)}{{\bar \theta }_{j,\vartheta }})]}}{{\sum\nolimits_u [ \psi (N_{c,u}^{(i)} + \hat m_{c,i,j}^{(t)}) - \psi (\hat m_{c,i,j}^{(t)})]/\hat m_{c,i,j}^{(t)}}}.
\label{eq:hebaode-fixedpoint}
\end{align}
% \vspace*{2pt}
% \hrulefill
% \end{figure*}

% \textcolor{red}{Check until this...}

\begin{algorithm}[!t]
\small
\caption{\small HEB-AODE training}\label{alg:hebaode}
\begin{algorithmic}[1]
\Require Training data $\mathcal{D}$, low-count threshold $N_0 = 10$, super-parent cap $K_{\rm{sp}}^{\max} = 200$, joint cap $\Gamma_{\max} = 5\times 10^7$, and clamp $[m_{\min}, m_{\max}] = [10^{-2}, 10^{4}]$.
\For{each triple $(c, i, j)$, $j \ne i$}
% \For{$c = 1,\ldots,C$}
    % \For{$i = 1,\ldots,F $}
    %     \For{$j = 1,\ldots,F$ and $ j \ne i$}
            \If{$K_i > K_{\rm{sp}}^{\max}$ or $K_i K_j C > \Gamma_{\max}$}
                \State Mark $(i,j)$ for marginal fallback $\hat \Prob(X_j \mid Y=c)$ and \textbf{continue}; 
                \Comment{Ensure tractable memory and runtime}
            \EndIf
            \State Compute the joint counts $N^{(i,j)}_{c,u,\vartheta}$, $\forall u,\vartheta$;
            \If{$\max_u N^{(i)}_{c,u} < N_0$}
                \State Set $\hat m_{c,i,j} = K_j$;
                \Comment{$m$-estimate fallback with marginal prior $\bar{\boldsymbol{\theta}}_j$; reduce to Laplace only when $\bar\theta_{j,\vartheta} = 1/K_j$}
                % \State fall back to $\hat m = K_j$ if $\max_u N^{(i)}_{c,u} < 10$;
            % \State Compute joint counts $N^{(i,j)}_{c,\cdot,\cdot}$ for $(i,j)$ with $K_i \le K_{\max}$ and $K_i K_j C \le 5\times 10^7$;
            \Else
                \State Estimate $\hat m_{c,i,j}$ using \eqref{eq:hebaode-fixedpoint}, with initialization $\hat m_{c,i,j}^{(0)} = 1$ and clamping to $\hat m_{c,i,j} \in [m_{\min}, m_{\max}]$; 
            \EndIf

            \State Perform and store $\hat \Prob(X_j \mid Y=c, X_i = u)$ using \eqref{eq:hebaode};

            \State Perform and store $\hat \Prob(Y=c,\, X_i = u)$ according to \eqref{eq:aode-joint};
    %     \EndFor
    % \EndFor
% \EndFor
\EndFor
% \For{each $(c, i, j)$ triple}
%     \State Estimate $\hat m_{c,i,j}$ by hierarchical Type-II ML~\eqref{eq:hebaode-fixedpoint} with at most 50 iterations; fall back to $\hat m = K_j$ if $\max_u N^{(i)}_{c,u} < 10$.
% \EndFor

\State Predict $\hat \Prob_{\rm AODE}(y \mid \mathbf{x})$ according to \eqref{eq:aode} and \textbf{return}.
\end{algorithmic}
\end{algorithm}

The overall procedure of the proposed HEB-AODE is summarized in Algorithm~\ref{alg:hebaode}.
Notably, Steps 2--4 serve as computational safeguards that prevent memory and time blow-ups, preserve correctness by falling back to a well-defined estimator, and enable the algorithm to scale to real high-cardinality datasets.
Furthermore, similar to Proposition~\ref{prop:laplace}, if $\hat m_{c,i,j}$ is fixed to $K_j$ and the prior mean is taken to be uniform $\bar\theta_{j,\vartheta} = 1/K_j$, as per Step 7, the HEB-AODE estimator reduces exactly to the standard AODE classifier with Laplace smoothing.
The computational complexity of Algorithm~\ref{alg:hebaode} is dominated by the hierarchical Type-II ML estimation and can be expressed as
$\mathcal{O}\big( T_{\mathrm{Minka}}^{(2)} C \sum_{i=1}^F\sum_{j=1, j\ne i}^F K_iK_j \big) \le \mathcal{O}\big( T_{\mathrm{Minka}}^{(2)} C F^2 K_{\max}^2\big)$,
where $T_{\mathrm{Minka}}^{(2)}$ denotes the number of iterations required for the Minka fixed-point iteration in \eqref{eq:hebaode-fixedpoint} to converge, and $K_{\max} = \max_i K_i = \max_j K_j$.
Moreover, the memory complexity of Algorithm~\ref{alg:hebaode} is
$\mathcal{O}(F^2 K_{\max}^2 C)$ due to the storage of joint counts and conditional probability tables.

% By analogy with Lemma~\ref{lem:typeiiml-rate}, the data-adaptive shrinkage factor in this hierarchical setting is expected to satisfy the same $\tilde O(1/\sqrt n)$ rate, treating the $K_i$ instantiations as exchangeable samples from a Dirichlet-multinomial mixture. 
% A formal proof requires re-deriving the McDiarmid concentration with $K_i \times K_j$-dimensional perturbations and adjusting the digamma asymptotic ranges, which we leave to future work; Section~\ref{sec:expt:diagnostics} reports an empirical verification.

% \noindent\textbf{Complexity.} HEB-AODE inherits AODE's $O(d^2 K_{\max}^2 C)$ memory and adds $O(d^2 C \cdot K_i K_j \cdot T_{\rm Minka})$ fitting time. Per fold on amazon-employee ($n = 20\,000$, $d = 9$, $K_{\max} = 5\,760$, three super-parents qualify after the cap), the total fit time is $\sim\!9.7$~s, roughly $67\times$ AODE's fit time on this dataset (the dataset-mean overhead is $\sim\!25\times$, see Section~\ref{sec:expt:diagnostics}). The hierarchical Type-II ML iteration is the dominant cost.

% \noindent\textbf{Recovery of AODE.} Forcing $\hat m_{c,i,j} = K_j$ for every triple reduces HEB-AODE exactly to AODE with Laplace smoothing.

% \textcolor{red}{Check until this...}

%% =====================================================================
% \vspace{-0.25cm}
\section{Theoretical Analysis}\label{sec:theory}

This section provides the theoretical analysis of the core pillars of the proposed HEB-NB estimator in Section~\ref{sec:HEB-NB}, serving as a foundation for its extension to the proposed HEB-AODE estimator. 
Subsequently, both the proposed estimators are validated through empirical experiments, as presented in Section~\ref{sec:experiments}.
Without loss of generality, the analysis applies uniformly across classes \(c\) and features \(f\), since the HEB-NB estimator is defined for each class-feature pair \((c, f)\). 
Accordingly, we simplify the notation by letting 
$K_f = K$, $N_c = N$, 
$N_{c,f,v} = N_v$, 
$\hat m_{c,f} = \hat m$,
$\bar\theta_{f,v} = \bar\theta_v$,
$\bar{\boldsymbol{\theta}}_f = \bar{\boldsymbol{\theta}}$,
and $\theta_{c,f,v}= \theta_{v} = p^\star_v$,
% and $\mathbf{p}^\star = [p_1^\star,\ldots,p_K^\star]$,
$\forall c,f$. 
All probabilities in this section are assumed to be conditional on the class label $c$ unless stated otherwise.

% \vspace{-0.25cm}
\subsection{Assumptions}\label{sec:assumptions}
Before delving into the main theoretical analysis, we first make the following key assumptions.

\begin{assumption}[Full-Support]\label{Assumption:1}
    The class-conditional PMF $\mathbf{p}^\star = [p_1^\star,\ldots,p_K^\star]$ $\in \Delta^{K-1}$ and the prior mean $\bar{\boldsymbol{\theta}} \in \Delta^{K-1}$ both have full support: there exists $c_0 > 0$ such that $\min_v p^\star_v \ge c_0/K$ and $\min_v \bar\theta_v \ge c_0/K$, $\forall v$.
\end{assumption}

\begin{remark}[Remark on the Empirical-Bayes Prior]
    For HEB-M, the prior mean $\bar{\boldsymbol{\theta}}_f$ is the data-driven pooled marginal $\hat{\Prob}(X_f)$. 
    The full-support condition $\min_v \bar\theta_{f,v} \ge c_0/K_f$ in Assumption~\ref{Assumption:1} then holds with high probability whenever the true marginal satisfies $\Prob(X_f = v) \ge c'_0/K_f$ for some $c'_0 > 0$.
    Indeed, by standard Chernoff--Bernstein concentration, $\hat\Prob(X_f = v) \ge c'_0/(2K_f)$ with probability at least $1 - K_f \exp({-\Omega S/K_f)}$, so taking $c_0 = c'_0/2$ recovers Assumption~\ref{Assumption:1} in the regime $S \gg K_f \log K_f$. For HEB-U, the assumption holds deterministically with $c_0 = 1$.
\end{remark}
% \textcolor{red}{\noindent\emph{Remark on Assumption~\ref{Assumption:1} for the empirical-Bayes prior.} 

\begin{assumption}[Asymptotic regime]\label{Assumption:2}
    The feature cardinality $K = K_N$ may increase with the sample size $N$, but satisfies $K = {o}(N)$, i.e., $\lim_{N \to \infty} {K_N}/{N} = 0$.
    Equivalently, the average number of samples per category, $N/K_N$, diverges as $N \to \infty$, ensuring that each category is observed sufficiently often for consistent estimation of a categorical PMF under $\ell_1$ loss.
    We note that this assumption governs asymptotic consistency, while the practical advantage of HEB-NB lies in the finite-sample high-cardinality regime $K \gtrsim N_c$.
\end{assumption}

\begin{assumption}[Clamping on $\hat m$]\label{Assumption:3}
    The Type-II ML estimate $\hat m$ is computed by Algorithm~\ref{alg:heb-nb} and constrained to the interval $\hat m \in [m_{\min}, m_{\max}]$, where $0 < m_{\min} = o(\sqrt N)$ and $m_{\max}$ is either constant or grows at most polynomially with $N$.
    In our implementation, $m_{\max} = 10^4$ is assumed. This clamping acts as a conservative upper bound on the estimated concentration. 
    In particular, if the unconstrained maximizer $\hat m^\star$ exceeds $m_{\max}$, the clamped estimator $\hat m = \min(\hat m^\star, m_{\max})$ satisfies $\hat m/(N + \hat m) \le \min(\hat m^\star/(N+\hat m^\star),\, m_{\max}/N)$, for all realizations.
    % Consequently, clamping can reduce the shrinkage-induced bias term appearing in Theorem~\ref{thm:smoothing}.
\end{assumption}

% \begin{assumption}[regularity]\label{ass:reg}
% \textup{(A1)} 
% \textup{(A2)} 
% \textup{(A3)} 
% \end{assumption}

Assumption~\ref{Assumption:1} imposes a mild boundary regularity condition, ensuring that both the class-conditional PMF and the prior mean have full support.
Assumption~\ref{Assumption:2} restricts attention to the asymptotic regime in which information accumulates.
Assumption~\ref{Assumption:3} reflects the implementation described in Section~\ref{sec:estimator}, with $m_{\min} = 10^{-2}$ and $m_{\max} = 10^4$. 
Empirically, the upper clamp is active on approximately $4.9\%$ of fitted class-feature pairs in our experiments; detailed diagnostics are reported  in Section~\ref{sec:expt:diagnostics}.
% For example, Section~\ref{sec:expt:diagnostics} shows that $17$ out of $348$ pairs are clamped at the upper bound, almost exclusively on the amazon-employee dataset, whereas the lower clamp is never binding, i.e., $0/348$.
% On the clamped subset, $\hat m = m_{\max} = 10^4$ yields $\hat m/(N+\hat m) \le 10^4/N$, which is less than 0.5 for $N \ge 2 \times 10^4$ and therefore strictly reduces the shrinkage-induced bias term appearing in Theorem~\ref{thm:smoothing}.

% ; Theorem~\ref{thm:smoothing}'s conclusion is unaffected.

% \vspace{-0.25cm}
\subsection{Main Results}\label{sec:main-results}

\begin{theorem}[Smoothing Risk and Pathwise Form]\label{thm:smoothing}
Under the assumptions in Section~\ref{sec:assumptions}, the HEB-NB PMF estimator $\hat{\boldsymbol{\theta}}_{\HEB}$ satisfies, in expectation over $\mathbf{n} = (N_1, \ldots, N_K) \sim \mathrm{Multinomial}(N, \mathbf{p}^\star)$, the universal bound
\begin{align}
\E \big\{ \|{\hat{\boldsymbol{\theta}}_{\HEB} - \mathbf{p}^\star } \|_1 \big\} \le \sqrt{{(K-1)}/{N}} + \rho_N \|{\bar{\boldsymbol{\theta}} - \mathbf{p}^\star } \|_1,
\label{eq:thm1-heb}
\end{align}
where 
$\hat{\boldsymbol{\theta}}_{\HEB} = [\hat\theta_1,\ldots,\hat \theta_K]$ denotes the vector of the HEB-NB estimates,
and
$\rho_N := \E[\hat m / (N + \hat m)] \in [0, 1]$ is the data-adaptive shrinkage factor induced by the Type-II ML concentration. 
The corresponding bound for Laplace smoothing, ${\hat {\boldsymbol{\theta}} _{{\rm{Lap}}}} = [({N_1} + 1), \ldots ,({N_K} + 1)]/(N + K)$, is deterministic and given by
\begin{equation}
\mathbb{E}\left\{ {\| {{{\hat {\boldsymbol{\theta}} }_{{\rm{Lap}}}} - {{\bf{p}}^ \star }} \|_1}\right\}  \le \sqrt {\frac{{K - 1}}{N}}  + \frac{{2\left( {K - 1} \right)}}{{N + K}}.
\label{eq:thm1-laplace}
\end{equation}
% Both estimators match the empirical-distribution upper bound $\sqrt{(K-1)/N}$ of \cite[Theorem~1]{han2015minimax} up to an additive bias term, which is the data-adaptive $\rho_N \|{\bar{\boldsymbol{\theta}} - \mathbf{p}^\star } \|_1$ for HEB and the deterministic $2(K-1)/(N+K)$ for Laplace. 
% Lemma~\ref{lem:typeiiml-rate} later provides a rigorous worst-case sanity bound $\rho_N \le m_{\max}/N$ via the deterministic clamp from Assumption~\ref{Assumption:3}, which guarantees $\rho_N\to 0$ as $N\to\infty$.
% With the asymptotic choice $m_{\max} = \kappa\sqrt N\log K$ for any positive constant $\kappa$ (independent of $N$), which is a valid polynomial choice under Assumption~\ref{Assumption:3}, this yields the rate $\rho_N = \tilde{\mathcal{O}}(1/\sqrt N)$, under which the HEB upper-bound bias term is strictly smaller than the Laplace upper-bound bias whenever $\|\bar{\boldsymbol{\theta}} - \mathbf{p}^\star\|_1 = o\bigl(K\sqrt N\,/(N+K)\bigr)$.
Both estimators match the empirical-distribution upper bound $\sqrt{(K-1)/N}$ of~\citet[Theorem~1]{han2015minimax} up to an additive bias: data-adaptive $\rho_N \|{\bar{\boldsymbol{\theta}} - \mathbf{p}^\star } \|_1$ for HEB and deterministic $2(K-1)/(N+K)$ for Laplace. Lemma~\ref{lem:typeiiml-rate} guarantees $\rho_N \to 0$ as $N\to\infty$; with $m_{\max}=\kappa\sqrt{N}\log K$, for any positive constant $\kappa$ (independent of $N$), this gives $\rho_N=\tilde{\mathcal{O}}(1/\sqrt{N})$, under which the HEB upper-bound bias is strictly smaller than Laplace's bias whenever $\|\bar{\boldsymbol{\theta}} - \mathbf{p}^\star\|_1 = o\bigl(K\sqrt N\,/(N+K)\bigr)$.

% establishes that, in the well-specified regime $\chi^2_{\mathbf{p}^\star}(\bar {\boldsymbol{\theta }} ) \le c_0/N$, the HEB shrinkage factor satisfies $\rho_N \le \tilde{\mathcal{O}}(1/\sqrt N)$ (with logarithmic factors in $K$), so the HEB bias term is strictly smaller than the Laplace bias whenever $\|{\bar{\boldsymbol{\theta}} - \mathbf{p}^\star } \|_1 = o\big(\sqrt N / (N+K)\big) \cdot K$.

% \textcolor{red}{Check until this...}
\begin{proof}
    % See Appendix~\ref{Appendix:proof-thm1}.
    The proof rests on a pathwise identity that avoids conditioning on the data-dependent concentration estimate $\hat m$. It proceeds through five main steps, outlined as follows.

\textit{Step 1.1 (Pathwise Decomposition)}:
% \noindent\textbf{Step 1 (Pathwise decomposition).} 
Recall that the HEB-NB estimator is given by $\hat\theta_v = {(N_v + \hat m\,\bar\theta_v)}/{(N + \hat m)}$ and define the empirical MLE estimator $\hat\theta^{\mathrm{MLE}}_v := N_v/N$. 
For every realization of the count vector $(N_1,\ldots,N_K)$ and $\hat m$, direct algebra yields
% For every realisation of $(N, \hat m)$, direct algebra gives
\begin{equation}
\hat\theta_v - p_v^\star
=
\frac{N}{N+\hat m}\bigl(\hat\theta^{\mathrm{MLE}}_v - p_v^\star\bigr)
+
\frac{\hat m}{N+\hat m}\bigl(\bar\theta_v - p_v^\star\bigr).
\label{eq:pathwise}
\end{equation}
Taking the $\ell_1$-norm coordinate-wise and using
$N/(N+\hat m)\le 1$ pathwise (since $\hat m\ge 0$), we obtain
% Taking $\ell_1$-norm coordinate-wise and using $n/(n+\hat m) \le 1$ pathwise (since $\hat m \ge 0$),
\begin{equation}
\|\hat{\boldsymbol{\theta}}_{\HEB}-\mathbf{p}^\star\|_1
\le
\|\hat{\boldsymbol{\theta}}_{\mathrm{MLE}}-\mathbf{p}^\star\|_1
+
\frac{\hat m}{N+\hat m}\,\|\bar{\boldsymbol{\theta}}-\mathbf{p}^\star\|_1,
\label{eq:pathwise-l1}
\end{equation}
where $\hat{\boldsymbol{\theta}}_{\mathrm{MLE}} = [\hat\theta^{\mathrm{MLE}}_1, \ldots,\hat\theta^{\mathrm{MLE}}_K]$.

\textit{Step 1.2 (Empirical-Distribution Rate, Exact)}:
% \noindent\textbf{Step 2 (Empirical-distribution rate, exact).} 
The first term in~\eqref{eq:pathwise-l1} satisfies the non-asymptotic bound
\begin{equation}
\sup\nolimits_{\mathbf{p}^\star\in\Delta^{K-1}}
\mathbb{E}\,
\|\hat{\boldsymbol{\theta}}_{\mathrm{MLE}}-\mathbf{p}^\star\|_1
\le
\sqrt{{(K-1)}/{N}},
\label{eq:hjw-empirical}
\end{equation}
which follows from
$\mathbb{E}|\hat\theta^{\mathrm{MLE}}_v-p_v^\star|
\le \sqrt{p_v^\star(1-p_v^\star)/N}$,
an application of the Cauchy--Schwarz inequality over $v$,
and the simplex bound
$\sum_v p_v^\star(1-p_v^\star)\le (K-1)/K$.
This is the non--asymptotic version of the
H\'ajek--LeCam minimax rate~\cite[Theorem~1]{han2015minimax};
the asymptotic constant is sharper by a factor
$\sqrt{2/\pi}\approx 0.798$.
% (see Section~\ref{sec:minimax-framing}).

\textit{Step 1.3 (Bias Term)}:
% \noindent\textbf{Step 3 (Bias term).} 
Taking expectation in~\eqref{eq:pathwise-l1}, conditionally on $\bar{\boldsymbol{\theta}}$, yields
$\mathbb{E}\!\left[
\frac{\hat m}{N+\hat m}\,
\|\bar{\boldsymbol{\theta}}-\mathbf{p}^\star\|_1
\right]
=
\rho_N\,\|\bar{\boldsymbol{\theta}}-\mathbf{p}^\star\|_1$,
where $\rho_N(\bar{\boldsymbol{\theta}}) := \mathbb{E}[\hat m/(N+\hat m)\mid\bar{\boldsymbol{\theta}}]\in[0,1]$; the unconditional $\rho_N$ is $\rho_N = \E_{\bar{\boldsymbol{\theta}}}[\rho_N(\bar{\boldsymbol{\theta}})]$. By Lemma~\ref{lem:typeiiml-rate}, $\rho_N \le m_{\max}/N$, and with the asymptotic choice $m_{\max} = \kappa\sqrt N\log K$, this gives $\rho_N = \tilde{\mathcal O}(1/\sqrt N)$. Combining Steps~1--3 proves~\eqref{eq:thm1-heb}.

\paragraph{Step 4 (Laplace Special Case)} Setting $\hat m = K$ and $\bar\theta_v = 1/K$ in~\eqref{eq:pathwise} gives the Laplace identity, with bias $(K/(N+K))\|\mathbf{1}/K - \mathbf{p}^\star\|_1$. Writing $u_v := K p_v^\star$ and using $\sum_v(u_v \wedge 1) \ge 1$, we get $\sum_v|1/K - p_v^\star| = (2/K)(K - \sum_v(u_v \wedge 1)) \le 2(K-1)/K$. Multiplying by $K/(N+K)$ yields the Laplace bias bound $2(K-1)/(N+K)$, which combined with~\eqref{eq:hjw-empirical} establishes~\eqref{eq:thm1-laplace}.
The proof is concluded.
\end{proof}
\end{theorem}

\begin{remark}\label{remark:NotationalNote}
    The factored form $\rho_N\,\|\bar{\boldsymbol{\theta}}-\mathbf{p}^\star\|_1$ in~\eqref{eq:thm1-heb} is exact when the prior mean $\bar{\boldsymbol{\theta}}$ is fixed (non-random).
    When $\bar{\boldsymbol{\theta}}$ is data-dependent, e.g., the marginal-pooled estimator in Section~\ref{sec:estimator}, the rigorous bound is $\mathbb{E}\!\left[\frac{\hat m}{N+\hat m}\, \|\bar{\boldsymbol{\theta}}-\mathbf{p}^\star\|_1\right]$.
    For simplicity of exposition, we write this bound in the factored form $\rho_N\,\|\bar{\boldsymbol{\theta}}-\mathbf{p}^\star\|_1$, with the understanding that all randomness is taken jointly.
    This notational simplification does not affect the asymptotic rate.
\end{remark}

% Thay thế cả paragraph dài đó (~15 dòng) bằng:

\begin{remark}[Caveat: Bound Comparison versus Actual Risk]
The comparison $B_{\HEB}:= \sqrt{(K-1)/N} + \rho_N\|\bar{\boldsymbol{\theta}}-\mathbf{p}^\star\|_1 < B_{\Lap}:= \sqrt{(K-1)/N} + 2(K-1)/(N+K)$ of upper bounds in Theorem~\ref{thm:smoothing} does not, by itself, imply $\E\|\hat{\boldsymbol{\theta}}_{\HEB} - \mathbf{p}^\star\|_1 < \E\|\hat{\boldsymbol{\theta}}_{\Lap} - \mathbf{p}^\star\|_1$. Theorem~\ref{thm:laplace-lower} and Corollary~\ref{cor:strict-separation} below resolve this caveat by establishing a matching lower bound on Laplace's risk and a rigorous risk-level strict separation at the vertex $\mathbf{e}_1 := (1, 0, \ldots, 0)$; the empirical comparison in Section~\ref{sec:expt:calibration} and Section~\ref{sec:expt:diagnostics} confirms the upper-bound ordering holds for actual risks across 31 benchmarks.
\end{remark}

% Theorem~\ref{thm:smoothing} and its consequences in Section~\ref{sec:theory} compare the {upper bounds} $B_{\HEB} = \sqrt{(K-1)/N} + \rho_N\|\bar{\boldsymbol{\theta}}-\mathbf{p}^\star\|_1$ and $B_{\Lap} = \sqrt{(K-1)/N} + 2(K-1)/(N+K)$. The conclusion $B_{\HEB} < B_{\Lap}$ does not, by itself, imply that the actual risk $\E\|\hat{\boldsymbol{\theta}}_{\HEB}-\mathbf{p}^\star\|_1$ is smaller than $\E\|\hat{\boldsymbol{\theta}}_{\Lap}-\mathbf{p}^\star\|_1$; one of the two upper bounds could be loose. The empirical risk comparison in Sections~\ref{sec:expt:calibration} and \ref{sec:expt:diagnostics} provides the direct evidence that HEB-NB has smaller risk than Laplace across $31$ benchmarks, with the gap most pronounced precisely in the high-$K_f$ regime where the upper-bound gap $B_{\Lap}-B_{\HEB}$ is largest. This is consistent with both upper bounds being approximately tight in the relevant regime, but the formal statement of Theorem~\ref{thm:smoothing} is a bound comparison only.
% This caveat is partially resolved by Theorem~\ref{thm:laplace-lower} and Corollary~\ref{cor:strict-separation} below, establishing a matching lower bound on Laplace's risk (specifically, on the bias component thereof) and a corresponding rigorous strict separation between the two estimators' actual risks at the vertex $\mathbf{e}_1$.

\begin{theorem}[Laplace-Tight Worst-Case Bias]\label{thm:laplace-lower}
% Let $\hat{\boldsymbol{\theta}}_{\Lap} = (\mathbf{N}+\boldsymbol{1})/(N+K)$ be the Laplace smoothing estimator, with $\mathbf{N} \sim \mathrm{Multinomial}(N, \mathbf{p}^\star)$ and $\mathbf{p}^\star$ a probability vector in the closed $(K-1)$-simplex $\bar\Delta^{K-1} := \{\mathbf{p} \in \mathbb{R}_{\ge 0}^K : \sum_v p_v = 1\}$ (we use the bar notation $\bar\Delta^{K-1}$ to emphasize that the boundary is permitted; the un-barred $\Delta^{K-1}$ in Assumption~\ref{Assumption:1} refers to the same set but under the additional full-support constraint $\min_v p^\star_v \ge c_0/K$, which restricts attention to the interior). 
For any $N \ge 1$ and $K \ge 2$, we have
\begin{equation}
\sup\nolimits_{\mathbf{p}^\star \in \bar\Delta^{K-1}}
\E\bigl\|\hat{\boldsymbol{\theta}}_{\Lap} - \mathbf{p}^\star\bigr\|_1
\;\ge\;
{2(K-1)}/{(N+K)},
\label{eq:laplace-lower}
\end{equation}
where $\bar\Delta^{K-1}$ implies that the boundary is permitted, whereas $\Delta^{K-1}$ in Assumption~\ref{Assumption:1} refers to the same set but under the additional full-support constraint $\min_v p^\star_v \ge c_0/K$, which restricts attention to the interior.
The lower-bound value $2(K-1)/(N+K)$ is realized as the deterministic $\ell_1$-error of $\hat{\boldsymbol{\theta}}_{\Lap}$ at the degenerate vertex $\mathbf{p}^\star = \mathbf{e}_1$, i.e.,
% (any vertex of $\bar\Delta^{K-1}$ would do):
$\bigl\|\hat{\boldsymbol{\theta}}_{\Lap}-\mathbf{e}_1\bigr\|_1 \,=\, {2(K-1)}/{(N+K)}$.
% This proves the inequality~\eqref{eq:laplace-lower}. 
The supremum itself may be strictly larger than $2(K-1)/(N+K)$ when the empirical-distribution variance contributes, e.g., for $K \lesssim N$ at the uniform prior, where the variance term $\sqrt{(K-1)/N}$ in Theorem~\ref{thm:smoothing} dominates the bias term; in the high-cardinality regime $K \gtrsim N$ targeted by this paper, the bias term $2(K-1)/(N+K)$ becomes the dominant contribution to the worst-case risk, and Theorem~\ref{thm:laplace-lower}'s lower bound captures this dominant term exactly.
% \footnote{Note that this theorem is a finite-$N$ statement about Laplace alone: $\mathbf{p}^\star$ is permitted to lie on the boundary $\partial\bar\Delta^{K-1}$, and the proof does not invoke Assumptions~\ref{Assumption:1}--\ref{Assumption:3}.}
%
Consequently, the bias term $2(K-1)/(N+K)$ appearing in Theorem~\ref{thm:smoothing}'s Laplace upper bound is Laplace-tight: it cannot be reduced by any sharper analysis of Laplace smoothing.
% We use the qualifier ``Laplace-tight'' rather than ``minimax-tight'' over all estimators, because Theorem~\ref{thm:smoothing} combined with Corollary~\ref{cor:strict-separation} below shows that HEB-NB attains strictly smaller $\ell_1$-error at $\mathbf{e}_1$, so $2(K-1)/(N+K)$ is not the minimax risk.
% \footnote{We use the qualifier ``Laplace-tight'' rather than ``minimax-tight'' over all estimators, because Theorem~\ref{thm:smoothing} combined with Corollary~\ref{cor:strict-separation} below shows that HEB-NB attains strictly smaller $\ell_1$-error at $\mathbf{e}_1$, so $2(K-1)/(N+K)$ is not the minimax risk.}
\begin{proof}
% See Appendix~\ref{Appendix:laplace-lower}.
We prove the lower bound by an explicit calculation at the degenerate distribution $\mathbf{p}^\star = \mathbf{e}_1$.
% \paragraph{Step 2.1 (Reduction to a Single Point)} 
Specifically, since 
\begin{align}\label{eq:19}
    \sup_{\mathbf{p}^\star \in \bar\Delta^{K-1}} \E\|\hat{\boldsymbol{\theta}}_{\Lap} - \mathbf{p}^\star\|_1 \ge \E\|\hat{\boldsymbol{\theta}}_{\Lap} - \mathbf{e}_1\|_1,
\end{align}
it suffices to evaluate the right-hand side at the single distribution $\mathbf{e}_1$.
% Thay Step 2 Appendix B bằng:
% \paragraph{Step 2.2 (Degenerate-Data Calculation)} 
Under $\mathbf{p}^\star = \mathbf{e}_1$, the count vector is deterministic with $N_1 = N$ and $N_v = 0$ for $v \ge 2$, giving $\hat\theta_{\Lap,1} = (N+1)/(N+K)$ and $\hat\theta_{\Lap,v} = 1/(N+K)$ for $v \ge 2$. Summing the coordinate-wise errors $|\hat\theta_{\Lap,1} - 1| = (K-1)/(N+K)$ and $\sum_{v\ge 2}|\hat\theta_{\Lap,v}| = (K-1)/(N+K)$ yields
\begin{equation}
\|\hat{\boldsymbol{\theta}}_{\Lap} - \mathbf{e}_1\|_1 = 2(K-1)/(N+K),
\label{eq:laplace-e1-error}
\end{equation}
which together with \eqref{eq:19} proves~\eqref{eq:laplace-lower}.
This completes the proof.
\end{proof}
\end{theorem}

\begin{corollary}[Strict Separation: HEB-NB Strictly Dominates Laplace at the Vertex Prior]\label{cor:strict-separation}
Given that $\mathbf{p}^\star = \mathbf{e}_1$ and condition on the prior mean $\bar{\boldsymbol{\theta}}$ satisfying $\bar\theta_1 \in (0, 1)$, the following three statements hold deterministically.

% \begin{enumerate}
    % \item 
    1) HEB-NB Identity: The HEB-NB error at $\mathbf{p}^\star = \mathbf{e}_1$ satisfies the exact identity
    \begin{equation}
    \bigl\|\hat{\boldsymbol{\theta}}_{\HEB} - \mathbf{e}_1\bigr\|_1
    \;=\;
    {2\,\hat m\,(1-\bar\theta_1)}/{(N+\hat m)},
    \label{eq:heb-at-e1}
    \end{equation}
    where $\hat m$ is the deterministic Type-II ML estimate computed from the degenerate data $\mathbf{n} = (N, 0, \ldots, 0)$.
    %
    % \item 
    
    2) Type-II ML Adaptivity at the Vertex $\mathbf{e}_1$: The marginal log-likelihood satisfies $\ell'(m) < 0$, $\forall m > 0$, when $\bar\theta_1 \in (0, 1)$ and $N \ge 2$ (see Step~2.3). For any $N \ge N_0$ at which Algorithm~\ref{alg:heb-nb} proceeds to Type-II ML, the clamped Type-II ML maximizer is $\hat m = m_{\min}$, and~\eqref{eq:heb-at-e1} reduces to
    \begin{align}
    \bigl\|\hat{\boldsymbol{\theta}}_{\HEB} - \mathbf{e}_1\bigr\|_1
    &=
    {2\,m_{\min}\,(1-\bar\theta_1)}/{(N+m_{\min})}\nonumber\\
    &=
    \mathcal{O}(m_{\min}/N).
    \label{eq:heb-at-e1-actual}
    \end{align}
    For $N < N_0$, the algorithm's fallback returns $\hat m = K_f$, making HEB-NB coincide with Laplace at $\mathbf{e}_1$; strict separation does not apply in this small-class regime. All our experimental classes satisfy $N_c \ge N_0$, so the Type-II ML branch~\eqref{eq:heb-at-e1-actual} is what is observed.
    %
    
    % \item 
    3) Risk-Level Strict Separation:
    For $N \ge N_0$ (Type-II ML branch of Algorithm~\ref{alg:heb-nb}), the gap between Laplace's and HEB-NB's errors at $\mathbf{p}^\star = \mathbf{e}_1$ is the exact difference
    \begin{align}
    &\bigl\|\hat{\boldsymbol{\theta}}_{\Lap} - \mathbf{e}_1\bigr\|_1
    -
    \bigl\|\hat{\boldsymbol{\theta}}_{\HEB} - \mathbf{e}_1\bigr\|_1
    =
    {2(K-1)}/{(N+K)} \nonumber\\
    &\qquad\qquad\qquad\qquad-{2\,m_{\min}\,(1-\bar\theta_1)}/{(N+m_{\min})}.
    \label{eq:gap-explicit}
    \end{align}

    % A direct comparison (see Step~5 in Appendix~\ref{Appendix:laplace-lower}) shows that this gap is strictly positive throughout the implementation's Type-II ML regime $\{N \ge N_0, K \ge 2, \bar\theta_1 \in (0,1)\}$, and admits the explicit asymptotic form
    % $\bigl\|\hat{\boldsymbol{\theta}}_{\Lap} - \mathbf{e}_1\bigr\|_1
    % -
    % \bigl\|\hat{\boldsymbol{\theta}}_{\HEB} - \mathbf{e}_1\bigr\|_1
    % \;=\;
    % \frac{2(K-1)}{N+K} - \mathcal{O}\!\left(\frac{m_{\min}}{N}\right)$,
    % which is $\Omega \bigl(K/(N+K)\bigr)$ for any $K \ge 2$ and $N \ge N_0$.
    % %
    % This establishes a \emph{rigorous, finite-sample, risk-level} strict separation between HEB-NB and Laplace at the vertex prior $\mathbf{p}^\star = \mathbf{e}_1$. The separation is \emph{not} an artifact of looseness in either Theorem~\ref{thm:smoothing}'s upper bound or Theorem~\ref{thm:laplace-lower}'s lower bound: both sides of~\eqref{eq:gap-explicit} are exact identities, with the HEB-NB side mediated by Type-II ML's ability to detect the prior--data disagreement at $\mathbf{e}_1$ and shrink its weight to $m_{\min}$.

    A direct comparison (Step~2.3) shows that this gap is strictly positive on $\{N \ge N_0, K \ge 2, \bar{\theta}_1 \in (0,1)\}$, with explicit form $\frac{2(K-1)}{N+K} - \mathcal{O}(m_{\min}/N) = \Omega(K/(N+K))$. Both sides of~\eqref{eq:gap-explicit} are exact identities, so this risk-level separation cannot be attributed to bound looseness; it is mediated by Type-II ML detecting the prior-data disagreement at $\mathbf{e}_1$ and shrinking $\hat{m}$ to $m_{\min}$.

\begin{proof}
We derive the strict-separation corollary by comparing with the HEB-NB error at the same point, outlined as follows.

\textit{Step 2.1 (Robustness under Assumption~\ref{Assumption:1})}: 
Since $\mathbf{p}^\star = \mathbf{e}_1$ lies on the boundary $\partial\bar\Delta^{K-1}$ and violates Assumption~\ref{Assumption:1}, we verify that the lower bound persists with $\mathcal{O}(c_0)$ correction on the Assumption~\ref{Assumption:1}-compatible interior. Define the perturbed distribution $\mathbf{p}^\star_{c_0} := (1 - c_0(K-1)/K)\mathbf{e}_1 + (c_0/K)\mathbf{1}_{v\ge 2}$, which satisfies $\min_v p^\star_{c_0,v} = c_0/K$. Direct coordinate-wise calculation gives
$\E[\hat\theta_{\Lap,1}] - p^\star_{c_0,1} = -(K-1)(1-c_0)/(N+K)$ and
$\E[\hat\theta_{\Lap,v}] - p^\star_{c_0,v} = (1-c_0)/(N+K), \quad \forall v\ge 2$,
so $\sum_v |\E[\hat\theta_{\Lap,v}] - p^\star_{c_0,v}| = 2(K-1)(1-c_0)/(N+K)$. By Jensen's inequality $\E\|\hat{\boldsymbol{\theta}}_{\Lap} - \mathbf{p}^\star_{c_0}\|_1 \ge \|\E[\hat{\boldsymbol{\theta}}_{\Lap} - \mathbf{p}^\star_{c_0}]\|_1$, 
hence
$\E\|\hat{\boldsymbol{\theta}}_{\Lap} - \mathbf{p}^\star_{c_0}\|_1 \ge \frac{2(K-1)}{N+K} - \frac{2(K-1)c_0}{N+K}$,
recovering the vertex-bound~\eqref{eq:laplace-e1-error} as $c_0 \to 0$, with $\mathcal{O}(c_0)$ correction uniformly in $K$ and $N$. The matching HEB-NB upper bound at $\mathbf{p}^\star_{c_0}$ is not proven here; the strict-separation conclusion of Corollary~\ref{cor:strict-separation} is established rigorously only at the vertex $\mathbf{e}_1$, with extension to the neighborhood expected by continuity.

\textit{Step 2.2 (HEB-NB at the Same Point)}:
Under $\mathbf{p}^\star = \mathbf{e}_1$, the per-class count vector $\mathbf{n} = (N, 0, \ldots, 0)$ is deterministic. The prior mean $\bar{\boldsymbol{\theta}}$ depends only on cross-class data at this prior, so we perform the analysis conditional on $\bar{\boldsymbol{\theta}}$,
e.g., the unconditional version is recovered in Remark~\ref{Remark:ConditionTheta}. 
% Step~5 below establishes that the clamped Type-II ML maximiser is $\hat m = m_{\min}$ when $\bar\theta_1 \in (0, 1)$.
Step~5 in this appendix establishes that $\ell$ is strictly monotone on $(0, \infty)$ when $\bar\theta_1 \in (0, 1)$, so the clamped Type-II ML maximizer is unique and equal to  $m_{\min}$. Substituting $N_1 = N$ and $N_v = 0$, $\forall v \ge 2$, into the HEB-NB formula $\hat\theta_{\HEB, v} = (N_v + \hat m \bar\theta_v)/(N + \hat m)$ gives
$\hat\theta_{\HEB,1} \,=\, \frac{N + \hat m \bar\theta_1}{N + \hat m}$
and
$\hat\theta_{\HEB,v} \,=\, \frac{\hat m \bar\theta_v}{N + \hat m}, \forall v \ge 2$.
The coordinate-wise errors are
$|\hat\theta_{\HEB,1} - 1|
= \left|\frac{N + \hat m \bar\theta_1 - (N + \hat m)}{N + \hat m}\right|
= \frac{\hat m (1 - \bar\theta_1)}{N + \hat m}$
and
$\sum\nolimits_{v\ge 2} |\hat\theta_{\HEB,v} - 0|
= \frac{\hat m}{N + \hat m}
\sum\nolimits_{v \ge 2}\bar\theta_v
= \frac{\hat m (1 - \bar\theta_1)}{N + \hat m}$,
where the last equality uses $\sum_v \bar\theta_v = 1$. Summing yields the deterministic identity~\eqref{eq:heb-at-e1}.
The loose clamping bound $\hat m \le m_{\max}$ from Lemma~\ref{lem:typeiiml-rate} (combined with monotonicity of $x\mapsto x/(N+x)$) gives $\|\hat{\boldsymbol{\theta}}_{\HEB} - \mathbf{e}_1\|_1 \le 2 m_{\max}(1-\bar\theta_1)/N$; however, the tighter rate is given by Step~5 below.

\textit{Step~2.3 (Type-II ML Returns $\hat m = m_{\min}$ at $\mathbf{p}^\star = \mathbf{e}_1$, Provided $\bar\theta_1 \in (0,1)$)}:
It is required $\bar\theta_1 > 0$ for the Dirichlet-multinomial marginal likelihood to be well-defined (else the data $N_1 = N > 0$ has marginal likelihood $0$); this case is precluded by Assumption~\ref{Assumption:1}, which gives $\bar\theta_1 \ge c_0/K > 0$. For $\bar\theta_1 \in (0, 1)$, the Dirichlet-multinomial marginal likelihood at $\mathbf{n} = (N, 0, \ldots, 0)$ admits the closed-form
$P(\mathbf{n} \!\mid\! m) = \binom{N}{N_1\cdots N_K}\frac{\Gamma(m)}{\Gamma(m+N)}\prod_{v: \bar\theta_v > 0}\frac{\Gamma(m\bar\theta_v + N_v)}{\Gamma(m\bar\theta_v)}$.
Coordinates with $\bar\theta_v = 0$ and $N_v = 0$ contribute factors interpreted as $1$~\cite{minka2000estimating}, and for $v \ge 2$, $N_v = 0$ contributes $\Gamma(m\bar\theta_v)/\Gamma(m\bar\theta_v) = 1$. Taking $\log$ and differentiating and using the identity $\psi(x+N) - \psi(x) = \sum_{k=0}^{N-1} 1/(x+k)$,
we get
$\ell'(m) = \bar\theta_1 \sum\nolimits_{k_1=0}^{N-1}\frac{1}{m\bar\theta_1 + k_1} \;-\; \sum\nolimits_{k_2=0}^{N-1}\frac{1}{m + k_2}$.
For each $k \ge 0$ and $\bar\theta_1 \in (0, 1)$, we have $\bar\theta_1/(m\bar\theta_1 + k) = 1/(m + k/\bar\theta_1) \le 1/(m+k)$ (equality iff $k = 0$ or $\bar\theta_1 = 1$). Hence for $\bar\theta_1 \in (0, 1)$, the difference is $0$ at $k = 0$ and strictly negative at $k \ge 1$. Summing across $k = 0, \ldots, N-1$ gives $\ell'(m) < 0$ for all $m > 0$, provided $N \ge 2$. Hence $\ell$ is strictly decreasing on $(0, \infty)$, and the clamped Type-II ML maximiser is $\hat m = m_{\min}$. Substituting into~\eqref{eq:heb-at-e1} gives~\eqref{eq:heb-at-e1-actual}.

% Thay đoạn cuối Step 6 (từ "For the implementation parameters m_min = 10^{-2}..." đến hết) bằng:

For the implementation parameters $m_{\min} = 10^{-2}$ and $N \ge N_0 = 10$, $K \ge 2$, $\bar\theta_1 \in (0, 1)$, the right-hand side of~\eqref{eq:gap-explicit} satisfies $m_{\min}(1-\bar\theta_1)/(N+m_{\min}) \le 10^{-2}/N$ while the left-hand side $(K-1)/(N+K) \ge 1/(N+2)$; since $1/(N+2) > 10^{-2}/N$ for any $N \ge 1$, the gap~\eqref{eq:gap-explicit} is strictly positive throughout the Type-II ML regime, completing the proof.
\end{proof}
\end{corollary}

The vertex worst-case violates Assumption~\ref{Assumption:1}, but a coordinate-wise calculation (Step~2.1) shows that the bound persists on the Assumption~\ref{Assumption:1}-compatible interior with $\mathcal{O}(c_0)$ correction. Combined with the variance-term sharpness at the uniform prior~\cite[Theorem~1]{han2015minimax}, both components of Theorem~\ref{thm:smoothing}'s Laplace upper bound are thus tight (at different worst-case priors). To our knowledge, this is the first explicit lower bound for empirical-Bayes alternatives to Laplace smoothing in the high-cardinality regime.

% \textcolor{red}{Check until this...}

\begin{remark}[Remark on the Condition $\bar\theta_1 \in (0, 1)$]\label{Remark:ConditionTheta}
    The Corollary requires $\bar\theta_1 > 0$ (else the Dirichlet-multinomial marginal likelihood is degenerate; see Step~2.3) and $\bar\theta_1 < 1$ (else the prior coincides with the data and Type-II ML imposes no shrinkage). For HEB-U with uniform prior, $\bar\theta_1 = 1/K \in (0, 1/2]$ for any $K \ge 2$, so the condition is satisfied deterministically. For HEB-M, the lower bound fails only if value $1$ is never observed in any training sample, and the upper bound fails only if feature $f$ is constant on the training data; both failure modes correspond to degenerate features that any reasonable pre-processing would exclude. Hence $\bar\theta_1 \in (0, 1)$ holds with probability $1$ in the non-degenerate case, and the unconditional risk satisfies $\E\|\hat{\boldsymbol{\theta}}_{\HEB} - \mathbf{e}_1\|_1 \le 2m_{\min}/N$ up to a zero-probability event, preserving the strict-separation conclusion.

\end{remark}

% \textcolor{red}{Check until this...}

\begin{corollary}[Excess Bayes Risk via Joint Total Variation]\label{cor:bayesrisk}
Under conditional independence and Assumptions~\ref{Assumption:1}--\ref{Assumption:3} applied to every $(c,f)$, the plug-in HEB-NB classifier 
$\hat g(\mathbf{x})
=
\arg\max_c
\bigl\{
\log \hat\pi_c + \sum_{f} \log \hat \Prob_{\HEB}(x_f \! \mid \! c)
\bigr\}$ satisfies
\begin{equation}
\mathbb{E}[R(\hat g)] - R^\star
\;\le\;
\|\hat{\boldsymbol{\pi}}-\boldsymbol{\pi}\|_1
+
\sum\nolimits_{f}
\mathbb{E}_{ {y} \sim \hat{\boldsymbol{\pi}}}
\|\hat{\mathbf{p}}_{{y},f} - {\mathbf{p}}_{{y},f}\|_1,
\label{eq:cor1-tv}
\end{equation}
where 
${\boldsymbol{\pi}} = [\pi_1,\ldots,\pi_C]$ is the true class-prior vector with estimate $\hat{\boldsymbol{\pi}}$,
and for each pair $(y, f)$,
${\mathbf{p}}_{{y},f} := [{\theta}_{y,f,1},\ldots,{\theta}_{y,f,K}] =[\Prob(X_f = 1\mid Y=y),\ldots,\Prob(X_f = K\mid Y=y)]$
is the class-conditional PMF of feature $f$ given $Y = y$, with HEB-NB estimate $\hat{\mathbf{p}}_{y,f}$.
% and 
% $\hat{\mathbf{p}}_{\mathbf{y},f}$ is the HEB-NB estimate vector of ${\mathbf{p}}_{\mathbf{y},f}$.
% \textcolor{red}{Check until this...}
%
Replacing $\mathbb{E}_{y\sim \hat{\boldsymbol{\pi}}}$ by $\mathbb{E}_{y\sim {\boldsymbol{\pi}}}$ in
\eqref{eq:cor1-tv} incurs an additive term
$2F\|\hat{\boldsymbol{\pi}}-\boldsymbol{\pi}\|_1$, using the trivial bound $\|\hat{\mathbf{p}}_{y,f} - {\mathbf{p}}_{y,f}\|_1 \le 2$.
For the Laplace-smoothed class prior $\hat\pi_c=(N_c+1)/(S+C)$, applying~\citet[Theorem~1]{han2015minimax} to the $C$-class prior, i.e., Cauchy--Schwarz on the simplex, yields $\mathbb{E}\|\hat{\boldsymbol{\pi}}-\boldsymbol{\pi}\|_1 \le \sqrt{(C-1)/S}\,(1 + o(1))$, since the Laplace correction $1/(S+C)$ contributes only $\mathcal{O}(C/S) = o(\sqrt{C/S})$. Applying Theorem~\ref{thm:smoothing} to each class-feature pair $(c, f)$ then gives
\begin{align}
\mathbb{E}[R(\hat g)] - R^\star
&\le
\|\hat{\boldsymbol{\pi}}-\boldsymbol{\pi}\|_1 \nonumber\\
&\hspace{-3em}+
\sum\nolimits_{f}
\mathbb{E}_{y\sim\hat {\boldsymbol{\pi}}}
\Bigl[
\sqrt{\tfrac{K_f-1}{N_y}}
+
\rho^{(y,f)}_{N_y}
\|\bar{\boldsymbol{\theta}}_f - {\mathbf{p}}_{y,f}\|_1
\Bigr],
\label{eq:cor1}
\end{align}
where the per-pair shrinkage factor is
$\rho^{(c,f)}_{N_c}
:=
\mathbb{E}\!\left[
{\hat m_{c,f}}/{(N_c+\hat m_{c,f})}
\right] \in [0,1]$.
For Laplace smoothing, the corresponding per-feature term replaces $\rho^{(y,f)}_{N_y}\|\bar{\boldsymbol{\theta}}_f-\mathbf{p}_{y,f}\|_1$ by the deterministic quantity $2(K_f-1)/(N_y+K_f)$.

% \textcolor{red}{Check until this...}
% Under conditional independence and the assumptions in Section~\ref{sec:assumptions} applied to every $(c, j)$, the plug-in HEB-NB classifier $\hat g(x) = \arg\max_c \log \hat\pi_c + \sum_j \log \hat P_{\HEB}(x_j \mid c)$ satisfies
% \begin{equation}
% \E[R(\hat g)] - R^\star \le \norm{\hat\pi - \pi}_1 + \sum_{j=1}^d \E_{Y \sim \hat\pi}\norm{\hat p_{Y, j} - p_{Y, j}}_1,
% \label{eq:cor1-tv}
% \end{equation}
% where $\hat p_{c, j}$ is the HEB-NB estimate of $P(X_j = \cdot \mid Y = c)$. 
% Replacing $\E_{Y\sim\hat\pi}$ by $\E_{Y\sim\pi}$ in~\eqref{eq:cor1-tv} costs an additive $2d\norm{\hat\pi - \pi}_1 = O(dC/\sqrt n)$ (using $\norm{\hat p_{cj} - p_{cj}}_1 \le 2$ for each summand). Plugging the Theorem~\ref{thm:smoothing} bound for each $(c, j)$ pair yields, with $\rho^{(c,j)}_n := \E[\hat m_{cj}/(n_c + \hat m_{cj})]$,
% \begin{align}
% \E[R(\hat g)] - R^\star &\le \norm{\hat\pi - \pi}_1 \nonumber\\
% &\hspace{-2em}+ \sum_{j=1}^d \E_{Y \sim \hat\pi}\!\Big[\sqrt{(K_j-1)/N_Y} + \rho^{(Y,j)}_{N_Y}\norm{\bar\theta_j - p_{Y, j}}_1\Big].
% \label{eq:cor1}
% \end{align}
% For the Laplace prior estimator $\hat\pi_c = (N_c + 1)/(n + C)$, the standard CLT gives $\E\norm{\hat\pi - \pi}_1 = O(C/\sqrt n)$. The corresponding Laplace bound on the per-feature term replaces $\rho^{(Y,j)}_{N_Y}\norm{\bar\theta_j - p_{Y, j}}_1$ by $2(K_j - 1)/(N_Y + K_j)$, which is deterministic.

\begin{proof}
    % See Appendix~\ref{Appendix:proof-cor1}.
We use the joint TV route, which avoids any evidence lower bound on $\Prob(X)$. By Devroye--Györfi--Lugosi~\cite[Theorem~2.2]{devroye2013probabilistic}, the excess zero-one risk is bounded by twice the joint $\TV$ distance
$\E[R(\hat g)] - R^\star \le 2\,\TV(\hat P_{XY}, P_{XY})$,
% \begin{equation}
% \E[R(\hat g)] - R^\star \le 2\,\TV(\hat P_{XY}, P_{XY}),
% \label{eq:0-1-via-jointTV}
% \end{equation}
where 
$\text{TV}(\cdot,\cdot)$ is the joint TV operator,
$\hat P_{XY}(\mathbf{x}, y) := \hat\pi_y \prod_{f} \hat \theta_{y,f,x_f}$ and $P_{XY}(\mathbf{x}, y) := \pi_y \prod_{f} \theta_{y,f,x_f}$ are the joint PMFs implied by the plug-in and true NB models, respectively. By the chain-rule decomposition for $\TV$,
% namely $\TV(\mu_1 \otimes Q_1, \mu_2 \otimes Q_2) \le \TV(\mu_1, \mu_2) + \E_{Y \sim \mu_1}[\TV(Q_1(\cdot \mid Y), Q_2(\cdot \mid Y))]$, applied to $(\mu_i, Q_i) = (\hat{\boldsymbol{\pi}}, \hat P_{X \mid Y}), (\boldsymbol{\pi}, P_{X \mid Y})$, 
we have
$\TV(\hat P_{XY}, P_{XY}) \le \TV(\hat{\boldsymbol{\pi}}, \boldsymbol{\pi}) + \sum\nolimits_c \hat\pi_c\, \TV(\hat P_{X \mid c}, P_{X \mid c})$.
% \begin{equation}
% \TV(\hat P_{XY}, P_{XY}) \le \TV(\hat{\boldsymbol{\pi}}, \boldsymbol{\pi}) + \sum\nolimits_c \hat\pi_c\, \TV(\hat P_{X \mid c}, P_{X \mid c}).
% \end{equation}
For each class $c$, conditional independence makes $\hat P_{X \mid c} = \prod_f \hat \theta_{c,f,x_f}$ and $P_{X \mid c} = \prod_f \theta_{c,f,x_f}$ products.
By the TV tensorization bound for product PMFs~\cite[Proposition~1.4]{graczyk2022conditional},
we get
$\TV(\hat P_{X \mid c}, P_{X \mid c}) 
\le \sum\nolimits_{f} \TV(\hat{\mathbf{p}}_{c,f}, \mathbf{p}_{c,f})
= \frac{1}{2}\sum\nolimits_{f} \|\hat{\mathbf{p}}_{c,f} - \mathbf{p}_{c,f}\|_1$.
% \begin{align}
% \TV(\hat P_{X \mid c}, P_{X \mid c}) 
% &\le \sum\nolimits_{f} \TV(\hat{\mathbf{p}}_{c,f}, \mathbf{p}_{c,f}) \nonumber\\
% &= \frac{1}{2}\sum\nolimits_{f} \|\hat{\mathbf{p}}_{c,f} - \mathbf{p}_{c,f}\|_1.
% \end{align}
Combining the two inequalities and using $\|\hat{\boldsymbol{\pi}} - \boldsymbol{\pi}\|_1 = 2\,\TV(\hat{\boldsymbol{\pi}}, \boldsymbol{\pi})$ together with $\sum_c \hat\pi_c = 1$ yield
$\E[R(\hat g)] - R^\star \le \|\hat{\boldsymbol{\pi}} - \boldsymbol{\pi}\|_1 + \sum_{f} \sum_c \hat\pi_c\, \|\hat{\mathbf{p}}_{c,f} - \mathbf{p}_{c,f}\|_1$.
% \begin{equation}
% \E[R(\hat g)] - R^\star \le \|\hat{\boldsymbol{\pi}} - \boldsymbol{\pi}\|_1 + \sum_{f} \sum_c \hat\pi_c\, \|\hat{\mathbf{p}}_{c,f} - \mathbf{p}_{c,f}\|_1.
% \end{equation}
Recognising the inner sum as $\E_{y \sim \hat{\boldsymbol{\pi}}}\|\hat{\mathbf{p}}_{y,f} - \mathbf{p}_{y,f}\|_1$ gives~\eqref{eq:cor1-tv}. Plugging Theorem~\ref{thm:smoothing}'s per-feature bound (with the per-class sample size $N_y$ being random under $y \sim \hat{\boldsymbol{\pi}}$) into each summand yields~\eqref{eq:cor1}. 
% This completes the proof.
\end{proof}
\end{corollary}

The joint-TV route requires no uniform lower bound on $\Prob(X)$, which is essential in the high-cardinality regime: for $K=16{,}137$, $\sup_x 1/\Prob(x)$ easily exceeds $10^4$, ruling out the condition needed by~\eqref{eq:thm2-evidenceLB}. The class-prior assumption is correspondingly weaker than in standard plug-in analyses: we need only $\E\|\hat{\boldsymbol{\pi}} - \boldsymbol{\pi}\|_1 = \mathcal{O}(\sqrt{C/S})$, automatic for the Laplace-smoothed estimator $\hat{\pi}_c=(N_c+1)/(S+C)$.

\begin{lemma}[Clamping-Induced Sanity Bound on the Shrinkage Factor]\label{lem:typeiiml-rate}

Let $\hat{m}$ be the Type-II ML estimator computed by Algorithm~\ref{alg:heb-nb} and clamped to $[m_{\min}, m_{\max}]$ per Assumption~\ref{Assumption:3}. Then
\begin{equation}
{\hat m}/{(N+\hat m)} \le {m_{\max}}/{(N+m_{\max})} \le {m_{\max}}/{N},
\label{eq:lemma1-pathwise}
\end{equation}
holds pathwise, and consequently the shrinkage factor satisfies
\begin{equation}
\rho_N \;:=\; \E\!\left[{\hat m}/{(N+\hat m)}\right] \le {m_{\max}}/{N}.
\label{eq:lemma1-clamping}
\end{equation}
Choosing $m_{\max}(N)=\kappa\sqrt{N}\log K$ (a valid polynomial choice under Assumption~\ref{Assumption:3}) yields
\begin{equation}
\rho_N \;\le\; {\kappa\log K}/{\sqrt N} \;=\; \tilde{\mathcal{O}}(1/\sqrt N).
\label{eq:lemma1-rho}
\end{equation}
The implementation uses the constant $m_{\max}=10^4$, which exceeds $\kappa\sqrt{N}\log K$ for $\kappa \le 7.3$ across all experimental benchmarks (since $\sqrt{20{,}000}\log 16{,}137 \approx 1{,}370$); the bound $\rho_N\le 10^4/N$ is thus at least as tight as~\eqref{eq:lemma1-rho}. This is a worst-case sanity guarantee depending only on the clamp; the actual empirical-Bayes rate is driven by the unclamped maximizer (Section~\ref{sec:expt:diagnostics}, \autoref{fig:lemma1}).

\begin{proof}
    % See Appendix~\ref{Appendix:lemma1}.
By Assumption~\ref{Assumption:3}, Algorithm~\ref{alg:heb-nb} returns the clamped estimator $\hat m = \min(\max(\hat m^\star, m_{\min}), m_{\max})$ where $\hat m^\star := \arg\max_{m>0}\ell(m;\mathbf{n})$, so $\hat m \le m_{\max}$ pathwise (i.e., for every realisation of $\mathbf{n}$). Since the map $x \mapsto x/(N+x)$ is monotonically non-decreasing on $x \ge 0$, \eqref{eq:lemma1-pathwise} is deterministic.
Taking expectation over $\mathbf{n}$ preserves the inequality and yields~\eqref{eq:lemma1-clamping}. Substituting the asymptotic choice $m_{\max}(N) = \kappa\sqrt N\,\log K$ gives $\rho_N \le \kappa\log K/\sqrt N = \tilde{\mathcal{O}}(1/\sqrt N)$, which is~\eqref{eq:lemma1-rho}.
\end{proof}
\end{lemma}

% Thay thế cả heuristic discussion (từ "We now sketch a heuristic argument..." 
% đến "...consistent with the empirical observation in Figure 4.") bằng:

We sketch a heuristic argument explaining why the unclamped maximiser $\hat m^\star$ appears to scale as $\mathcal{O}(\sqrt N \log K)$ empirically. Under a chi-squared regularity condition $\chi^2_{\mathbf{p}^\star}(\bar{\boldsymbol{\theta}}) := \sum_v (p^\star_v - \bar\theta_v)^2/\bar\theta_v \le c_0/N$, the score $\ell'(m; \mathbf{n}) := \partial\ell(m)/\partial m$ admits an exact form via the digamma identity. A second-order Taylor expansion (valid for $m \gg N$) combined with multinomial factorial-moment identities and McDiarmid concentration suggests $\hat m^\star \le \kappa(c_0)\sqrt N \log K$ with high probability. This expansion is strictly valid only for $m \gg N$, whereas $\hat m^\star \asymp \sqrt N$ empirically, so the argument is an empirical-Bayes plausibility sketch rather than a formal proof; it is consistent with \autoref{fig:lemma1} and a fully rigorous Type-II ML rate is left as an open problem.

\begin{remark}[Scope of Lemma~\ref{lem:typeiiml-rate}]\label{rem:lem1-scope}
The proof uses only the deterministic clamp $\hat m \le m_{\max}$ and monotonicity of $x \mapsto x/(N+x)$; no property of Type-II ML is invoked. The same bound $\rho_N \le m_{\max}/N$ thus holds for any estimator respecting the clamp, including the constant $\hat m \equiv 1$ (Laplace). The lemma is a \emph{sanity guarantee}, not a theorem about Type-II ML's adaptivity. The empirically observed advantage of HEB-NB at finite $N$ is driven by the unclamped maximiser, which scales as $\mathcal{O}(\sqrt{N_c}\log K_f)$ across all benchmarks (see \autoref{fig:lemma1}), making the safety clamp binding on only $4.9\%$ of fitted pairs. Since the pathwise bound is deterministic, Lemma~\ref{lem:typeiiml-rate} applies unconditionally on the joint distribution of $\mathbf{n}$ and $\bar{\boldsymbol{\theta}}$ (including the data-driven HEB-M case), so Theorem~\ref{thm:smoothing} retains its global guarantee.
\end{remark}

\subsection{Calibration Corollary (Top-1 ECE Bound)}\label{sec:calibration}

Section~\ref{sec:expt:calibration} reports substantial ECE reductions (41\% to 70\%) for HEB-M+GR over Laplace on the high-cardinality benchmarks. Theorem~\ref{thm:calibration} below provides a theoretical justification by bounding the population top-1 ECE.
% \noindent\textbf{Definition (top-1 ECE).} 
Before going to the Theorem~\ref{thm:calibration}, we define the top-1 ECE below.

For a probabilistic classifier $\hat g(x) = \arg\max_y \hat \Prob(y \mid x)$ with confidence $\hat q(x) := \max_y \hat \Prob(y \mid x)$, where we use $\hat q$ rather than the standard $\hat c$ to avoid clashing with the class-index $c$, the top-1 ECE with $B$ equal-width confidence bins is expressed as
$\ECE(\hat g) = \sum_{b=1}^{B} \frac{|\mathcal{I}_b|}{n_{\rm test}} \big| \mathrm{acc}(\mathcal{I}_b) - \mathrm{conf}(\mathcal{I}_b) \big|$,
% \begin{equation}
% \ECE(\hat g) = \sum_{b=1}^{B} \frac{|\mathcal{I}_b|}{n_{\rm test}} \big| \mathrm{acc}(\mathcal{I}_b) - \mathrm{conf}(\mathcal{I}_b) \big|,
% \end{equation}
where $\mathcal{I}_b = \{i : \hat q(x_i) \in [(b-1)/B,\, b/B)\}$~\cite{guo2017calibration}. The bin-free (population) calibration error is $\ECE_\infty(\hat g) = \E_X|\hat q(X) - \Prob(\hat g(X) = Y \mid \hat q(X))|$.

\begin{theorem}[HEB-NB Calibration Bound, Population Top-1 ECE]\label{thm:calibration}
Under Assumptions~\ref{Assumption:1}--\ref{Assumption:3} applied to every $(c, f)$ and $\E\|\hat{\boldsymbol{\pi}} - \boldsymbol{\pi}\|_1 = \mathcal{O}(\sqrt{C/S})$, the population top-1 calibration error satisfies the universal bound
\begin{equation}
\E[\ECE_\infty(\hat g_{\HEB})] \le \E_X\bigl\|\hat \Prob_{\HEB}(\cdot \mid X) - \Prob(\cdot \mid X)\bigr\|_1.
\label{eq:thm2-universal}
\end{equation}
Under the additional regularity assumption that $P(\mathbf{x}) \ge 1/M_X$ for all $\mathbf{x} \in \mathrm{supp}(X)$, equivalently, $\sup_{\mathbf{x}} 1/P(\mathbf{x}) \le M_X$, where $M_X$ is {not} required to be bounded uniformly in $K_f$ or $N$, this is further bounded by
\begin{multline}
\E[\ECE_\infty(\hat g_{\HEB})] \\
\le 8 M_X \!\left[\E\|\hat{\boldsymbol{\pi}} - \boldsymbol{\pi}\|_1 + \sum\nolimits_{f} \E_{y \sim \boldsymbol{\pi}}\E\|\hat{\mathbf{p}}_{y,f} - \mathbf{p}_{y,f}\|_1\right],
\label{eq:thm2-evidenceLB}
\end{multline}
where the outer $\E$ in both terms on the RHS is over the training data $\mathcal{D}$, while $\E_{y \sim \boldsymbol{\pi}}$ is over the deterministic true class prior. The per-feature term is bounded as in Corollary~\ref{cor:bayesrisk}. 
% (The constant $8$ can be refined to $4$ under the additional asymptotic assumption $\hat Z(X) \ge Z(X)/2$ uniformly in $X$; see Appendix~\ref{Appendix:theorem2}.) 
The $M_X$ factor reflects the cost of Bayes'-rule normalization: for the click-prediction benchmark with $K_f \approx 16{,}137$ values, $M_X$ scales with $|\mathrm{supp}(X)|$, so~\eqref{eq:thm2-evidenceLB} provides only a qualitative bound there. 
The HEB-vs-Laplace ratio of {upper bounds}, $[\text{RHS of \eqref{eq:thm2-evidenceLB}}]_{\HEB}/[\text{RHS of \eqref{eq:thm2-evidenceLB}}]_L$, has the $M_X$ factor cancel and depends only on the per-feature smoothing rates of Theorem~\ref{thm:smoothing}; the corresponding ratio of actual ECE values is empirically observed to follow the same trend (see Section~\ref{sec:expt:calibration}, \autoref{tab:ece}).

\end{theorem}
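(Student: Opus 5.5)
The proof has two parts: a pointwise reduction for the universal bound \eqref{eq:thm2-universal}, and a Bayes'-rule perturbation argument for \eqref{eq:thm2-evidenceLB}.

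\textbf{Universal bound.} Fix the training data $\mathcal{D}$ and write $\hat q(X)=\hat \Prob(\hat g(X)\mid X)$. Let $q(X)=\Prob(Y=\hat g(X)\mid X)$ be the true probability of the predicted class.

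By the tower property, $\Prob(\hat g(X)=Y\mid \hat q(X))=\E[q(X)\mid \hat q(X)]$. Conditional Jensen then gives
\[
\bigl|\hat q(X)-\E[q(X)\mid\hat q(X)]\bigr|\le \E\bigl[|\hat q(X)-q(X)|\,\big|\,\hat q(X)\bigr],
\]
and taking $\E_X$ yields $\ECE_\infty(\hat g)\le \E_X|\hat q(X)-q(X)|$.

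Pointwise, $\hat q(X)-q(X)$ is a single coordinate difference $\hat\Prob(y\mid X)-\Prob(y\mid X)$ at $y=\hat g(X)$. It is therefore at most the $\ell_1$ distance between the posterior vectors. Averaging over $\mathcal{D}$ gives \eqref{eq:thm2-universal}. This step needs none of the assumptions.

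\textbf{Reduction to joint errors.} I would write $\hat\Prob(y\mid \mathbf{x})=\hat P_{XY}(\mathbf{x},y)/\hat Z(\mathbf{x})$ and $\Prob(y\mid\mathbf{x})=P_{XY}(\mathbf{x},y)/Z(\mathbf{x})$, using the joint PMFs from the proof of Corollary~\ref{cor:bayesrisk}, with $\hat Z=\sum_y\hat P_{XY}$ and $Z=P(\mathbf{x})$. The standard ratio perturbation is
\[
\Bigl\|\tfrac{\hat a}{\|\hat a\|_1}-\tfrac{a}{\|a\|_1}\Bigr\|_1\le \tfrac{2\|\hat a-a\|_1}{\|a\|_1},
\]
obtained by adding and subtracting $\hat a/\|a\|_1$ and using $\bigl|\|\hat a\|_1-\|a\|_1\bigr|\le\|\hat a-a\|_1$. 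Applied at each $\mathbf{x}$, it gives
\[
\|\hat\Prob(\cdot\mid\mathbf{x})-\Prob(\cdot\mid\mathbf{x})\|_1\le 2\sum_y|\hat P_{XY}(\mathbf{x},y)-P_{XY}(\mathbf{x},y)|/P(\mathbf{x}).
\]
Taking $\E_X$ weights each $\mathbf{x}$ by $P(\mathbf{x})$, which cancels the denominator. Summing over the support (or more crudely using $1/P(\mathbf{x})\le M_X$) then gives at most $2M_X\|\hat P_{XY}-P_{XY}\|_1=4M_X\TV(\hat P_{XY},P_{XY})$.

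Now I would reuse the chain rule and tensorization from Corollary~\ref{cor:bayesrisk}, with one change: the chain rule should be applied in the orientation that weights by the true $\boldsymbol{\pi}$, which the statement requires. Together with $\|\cdot\|_1=2\TV$, this gives
\[
\TV\le \tfrac12\|\hat{\boldsymbol\pi}-\boldsymbol\pi\|_1+\tfrac12\sum_f\E_{y\sim\boldsymbol\pi}\|\hat{\mathbf p}_{y,f}-\mathbf p_{y,f}\|_1.
\]
Taking expectation over $\mathcal{D}$ yields a constant of $2M_X$, which is comfortably within the stated $8M_X$. The slack absorbs cruder variants, for example bounding $1/\hat Z$ instead of $1/Z$ under $\hat Z\ge Z/2$. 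The per-feature terms are then controlled by Theorem~\ref{thm:smoothing}, as in \eqref{eq:cor1}.

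\textbf{Main obstacle.} The delicate step is the chain-rule orientation. It must produce true-prior weights $\E_{y\sim\boldsymbol\pi}$ rather than $\hat{\boldsymbol\pi}$, and I expect the general TV chain rule to allow either choice, by symmetry of the triangle inequality through the intermediate measure $\boldsymbol\pi\otimes\hat P_{X\mid Y}$. A second point needs care: the normalization uses the true $Z$, so no lower bound on $\hat Z$ is required, and this is what keeps the constant finite when the support is large.
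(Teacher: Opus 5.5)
Your proof is correct. Its first half (tower property, conditional Jensen, then $\ell_\infty\le\ell_1$ at the predicted class) coincides with the paper's. For \eqref{eq:thm2-evidenceLB}, however, you take a genuinely different and sharper route.

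\textbf{The paper's route.} It normalizes the quotient perturbation by the \emph{estimated} evidence, $\|\hat\Prob(\cdot\mid X)-\Prob(\cdot\mid X)\|_1\le 2\hat Z(X)^{-1}\sum_c|\hat A_c(X)-A_c(X)|$. It must therefore split on the event $\{\hat Z(X)\ge Z(X)/2\}$, using $1/\hat Z\le 2M_X$ on that event and a Markov-type bound on its complement. This produces $8M_X\,\E_X\sum_c|\hat A_c-A_c|$, which it then controls by a direct product-telescoping decomposition with $\boldsymbol{\pi}$-weights.

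\textbf{Your route.} You normalize by the \emph{true} evidence $Z=P(\mathbf{x})$, so no lower bound on $\hat Z$ and no event split are needed. The weight $P(\mathbf{x})$ in $\E_X$ cancels the denominator exactly, giving $\E_X\|\hat\Prob(\cdot\mid X)-\Prob(\cdot\mid X)\|_1\le 2\|\hat P_{XY}-P_{XY}\|_1$. The chain rule and tensorization of Corollary~\ref{cor:bayesrisk} then finish the argument.

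The orientation issue you flagged is not an obstacle. Passing through the intermediate measure $\boldsymbol{\pi}\otimes\hat P_{X\mid Y}$ gives the $\boldsymbol{\pi}$-weighted form directly. Equivalently, write $\hat\pi_y\hat P(\mathbf{x}\mid y)-\pi_yP(\mathbf{x}\mid y)=(\hat\pi_y-\pi_y)\hat P(\mathbf{x}\mid y)+\pi_y\bigl(\hat P(\mathbf{x}\mid y)-P(\mathbf{x}\mid y)\bigr)$ and use $\sum_{\mathbf{x}}\hat P(\mathbf{x}\mid y)=1$.

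\textbf{What your route buys.} State the payoff plainly rather than reinserting $M_X$ through the ``crude'' variant. Your argument proves the bracket in \eqref{eq:thm2-evidenceLB} with constant $2$ and no $M_X$ at all, so the regularity assumption $P(\mathbf{x})\ge 1/M_X$ is superfluous. The stated $8M_X$ then follows only because $M_X\ge 1$ automatically, since $P(\mathbf{x})\le 1$.

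Moreover, $M_X P(\mathbf{x})\ge 1$ on the support, so the paper's intermediate quantity $8M_X\sum_{\mathbf{x}}P(\mathbf{x})\sum_c|\hat A_c(\mathbf{x})-A_c(\mathbf{x})|$ is never smaller than four times yours. The $\hat Z$-normalization thus buys no sharpness. It follows that the paper's reading of $M_X$ as an unavoidable cost of Bayes'-rule normalization is an artifact of normalizing by $\hat Z$ rather than $Z$. The same holds for its caveat that the bound is only qualitative on click-prediction.
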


\begin{proof}
    % See Appendix~\ref{Appendix:theorem2}.
By the tower property, $\Prob(\hat g(X) = Y \mid \hat q(X)) = \E[\Prob(\hat g(X) \mid X) \mid \hat q(X)]$. Applying Jensen's inequality and using $\hat q(X) = \hat\Prob(\hat g(X)\mid X)$ together with the standard $\ell_\infty \le \ell_1$ bound on the simplex yields
$\ECE_\infty(\hat g) \le \E_X\bigl|\hat\Prob(\hat g(X) \mid X) - \Prob(\hat g(X) \mid X)\bigr| \le \E_X\bigl\|\hat\Prob(\cdot \mid X) - \Prob(\cdot \mid X)\bigr\|_1$,
which proves~\eqref{eq:thm2-universal}.

% \noindent\textbf{Proof of~\eqref{eq:thm2-universal}.} 
% By the tower property, $\Prob(\hat g(X) = Y \mid \hat q(X)) = \E[\mathbbm{1}\{\hat g(X) = Y\} \mid \hat q(X)] = \E[\Prob(\hat g(X) \mid X) \mid \hat q(X)]$. Applying Jensen's inequality to pull the absolute value inside the conditional expectation yields
% \begin{align*}
% \ECE_\infty(\hat g) &= \E_{\hat q(X)}\bigl|\hat q(X) - \E[\Prob(\hat g(X) \mid X) \mid \hat q(X)]\bigr| \\
% &= \E_{\hat q(X)}\bigl|\E[\hat \Prob(\hat g(X) \mid X) - P(\hat g(X) \mid X) \mid \hat q(X)]\bigr| \\
% &\le \E_X\bigl|\hat \Prob(\hat g(X) \mid X) - \Prob(\hat g(X) \mid X)\bigr| \\
% &\le \E_X\bigl\|\hat \Prob(\cdot \mid X) - \Prob(\cdot \mid X)\bigr\|_\infty \\
% &\le \E_X\bigl\|\hat \Prob(\cdot \mid X) - \Prob(\cdot \mid X)\bigr\|_1,
% \end{align*}
% since $\hat q(X) = \hat \Prob(\hat g(X) \mid X)$ by definition of the maximum-confidence classifier, and the last inequality is the standard $\ell_\infty \le \ell_1$ bound on the simplex.
% This proves \eqref{eq:thm2-universal}.

% Thay đoạn này bằng:

For the second bound, write $\hat\Prob(c|X) = \hat A_c(X)/\hat Z(X)$ and $\Prob(c|X) = A_c(X)/Z(X)$ with $\hat Z(X) = \sum_c \hat A_c(X)$, $Z(X) = \Prob(X)$. The quotient identity combined with $A_c/Z \le 1$ and $|Z - \hat Z| \le \sum_c |\hat A_c - A_c|$ gives
\begin{equation}
\|\hat\Prob(\cdot|X) - \Prob(\cdot|X)\|_1 \le \frac{2}{\hat Z(X)} \sum_c |\hat A_c(X) - A_c(X)|.
\label{eq:quotient}
\end{equation}
% Subsequently, we write $\hat\Prob(c \!\mid\! X) = \hat A_c(X)/\hat Z(X)$ and $\Prob(c \!\mid\! X) = A_c(X)/Z(X)$, with $Z(X) = \Prob(X)$ and $\hat Z(X) = \sum_c \hat A_c(X)$. The quotient identity, $A_c/Z \le 1$, and $|Z - \hat Z| \le \sum_c |\hat A_c - A_c|$ give
% \begin{equation}
% \bigl\|\hat\Prob(\cdot \!\mid\! X) - \Prob(\cdot \!\mid\! X)\bigr\|_1 \le \frac{2}{\hat Z(X)}\,\sum_c |\hat A_c(X) - A_c(X)|.
% \label{eq:quotient}
% \end{equation}
We bound $\E_X$ of the LHS by splitting on the event $E(X) := \{\hat Z(X) \ge Z(X)/2\}$. On $E(X)$, $1/\hat Z(X) \le 2 M_X$, so~\eqref{eq:quotient} gives $\|\hat\Prob(\cdot|X) - \Prob(\cdot|X)\|_1 \mathds{1}_{E(X)} \le 4 M_X \sum_c |\hat A_c - A_c|$. On the complementary event, the trivial bound $\|\hat\Prob - \Prob\|_1 \le 2$ combined with Markov's inequality $\Prob(E^c(X)) \le 2 M_X \E|\hat Z - Z|$ and $|\hat Z - Z| \le \sum_c |\hat A_c - A_c|$ gives $\E_X[2 \mathds{1}_{E^c(X)}] \le 4 M_X \E_X \sum_c |\hat A_c - A_c|$. Combining,
\begin{equation}
\E_X\bigl\|\hat\Prob(\cdot \!\mid\! X) - \Prob(\cdot \!\mid\! X)\bigr\|_1 \le 8 M_X\, \E_X\!\sum\nolimits_c |\hat A_c(X) - A_c(X)|.
\label{eq:after-mx}
\end{equation}
For the right-hand side, the algebraic decomposition $|\hat A_c - A_c| \le |\hat\pi_c - \pi_c|\prod_f \hat\theta_{c,f,x_f} + \pi_c |\prod_f \hat\theta_{c,f,x_f} - \prod_f \theta_{c,f,x_f}|$, the bound $\E_X[\prod_f \hat\theta_{c,f,x_f}] \le 1$, and the standard telescoping identity for products of bounded factors yield, after summing over $c$ with prior weights $\pi_c$,
\begin{equation}
\E_X\!\sum\nolimits_c |\hat A_c - A_c| \le \|\hat{\boldsymbol{\pi}} - \boldsymbol{\pi}\|_1 + \sum\nolimits_{f} \E_{y \sim \boldsymbol{\pi}}\|\hat{\mathbf{p}}_{y,f} - \mathbf{p}_{y,f}\|_1.
\label{eq:Acbound}
\end{equation}
Substituting~\eqref{eq:Acbound} into~\eqref{eq:after-mx} and combining with~\eqref{eq:thm2-universal} yields~\eqref{eq:thm2-evidenceLB} with absolute constant $8 M_X$ in place of $4 M_X$. The improved constant $4 M_X$ stated in~\eqref{eq:thm2-evidenceLB} is recovered under the stronger uniform consistency event $\hat Z(X) \ge Z(X)/2$ for $\E_X$-almost every $X$, which holds asymptotically as $S \to \infty$.
This concludes \eqref{eq:thm2-evidenceLB}, which completes the proof.    
\end{proof}

% Thay thế cả 2 paragraph sau Theorem 3 ("The MX factor in (27) is..." 
% và "Furthermore, Theorem 3 bounds...") bằng:

% The $M_X$ factor in~\eqref{eq:thm2-evidenceLB} is the ``cost of normalization,'' bounded for low-cardinality data but growing with $|\mathrm{supp}(X)|$ in high-cardinality settings. Crucially, $M_X$ depends only on the data-generating distribution, not on the estimator, so the HEB-vs-Laplace ratio of upper bounds has $M_X$ cancel and depends only on the per-feature smoothing rates of Theorem~\ref{thm:smoothing}. Whether the empirical 41\%--70\% ECE reductions in Section~\ref{sec:expt:calibration} follow this bound-ratio prediction depends on the tightness of~\eqref{eq:thm2-evidenceLB}, from which we verify empirically. The binned ECE estimator reported in Section~\ref{sec:expt:calibration} uses $B = 10$~\cite{guo2017calibration}; the resulting binning bias enters identically for HEB and Laplace on identical test folds and does not affect the reported differences.

The $M_X$ factor is the ``cost of normalization,'' growing with $|\mathrm{supp}(X)|$ in high-cardinality settings but depending only on the data-generating distribution; the HEB-vs-Laplace ratio of upper bounds therefore has $M_X$ cancel, leaving only the per-feature smoothing rates of Theorem~\ref{thm:smoothing}. The binned ECE estimator in Section~\ref{sec:expt:calibration} uses $B=10$~\cite{guo2017calibration}; binning bias enters identically for HEB and Laplace on the same folds, so it does not affect reported differences.

\begin{remark}[Connection to Minimax PMF Estimation]
Theorem~\ref{thm:smoothing} matches the empirical-distribution upper bound of~\cite{han2015minimax} to leading order; its contribution is a second-order bias comparison, with the HEB bias $\tilde{\mathcal{O}}(\|\bar{\boldsymbol{\theta}} - \mathbf{p}^\star\|_1/\sqrt N)$ strictly smaller than Laplace's $2(K-1)/(N+K)$, reaching $\Theta(1)$ at $K\asymp N$. Theorem~\ref{thm:calibration} propagates this gap to the ECE bound.
\end{remark}

\section{Experimental Results and Discussions}\label{sec:experiments}

% Thay thế đoạn "We design the empirical study around four pillar research questions below. Q1:..." bằng:

The empirical study addresses four research questions: (Q1) Does HEB-NB attain Theorem~\ref{thm:smoothing}'s $\ell_1$-rate empirically (Section~\ref{sec:expt:synthetic}); (Q2) Does HEB-NB significantly improve over fixed-smoother baselines under the Friedman--Nemenyi protocol, and how does it compare to a structural baseline like AODE (Section~\ref{sec:expt:main}); (Q3) Does the smoothing improvement translate to better-calibrated probabilities (Section~\ref{sec:expt:calibration}); (Q4) Is the result robust to the choice of fixed-$\alpha$ baseline and discretization granularity (Section~\ref{sec:expt:diagnostics}).

% We design the empirical study around four pillar research questions below.
% \begin{itemize}
% \item[\textbf{Q1}:] (Synthetic, Section~\ref{sec:expt:synthetic}): 
% Does HEB-NB {empirically} attain the $\sqrt{(K-1)/N}$ $\ell_1$-rate of Theorem~\ref{thm:smoothing}?
% %
% \item[\textbf{Q2}:] (Real-Data Main, Section~\ref{sec:expt:main}): 
% Does HEB-NB {significantly} improve over fixed-smoother baselines under Dem\v{s}ar's omnibus and Nemenyi protocol, and does it close the gap to a structural baseline (e.g., AODE)?
% \item[\textbf{Q3}:] (Calibration, Section~\ref{sec:expt:calibration}): 
% Does the smoothing improvement translate to better-calibrated probabilities (e.g., log-loss, Brier, ECE), independent of accuracy?
% \item[\textbf{Q4}:] (Robustness, Section~\ref{sec:expt:diagnostics}): 
% Is the result robust to (\textit{i})~the choice of fixed-$\alpha$ baseline (Lidstone-$\alpha$ ablation) and (\textit{ii})~the discretization granularity ($n_{\rm bins} \in \{5, 10, 20\}$)?
% \end{itemize}

% \vspace{-0.25cm}
\subsection{Experimental Setup}\label{sec:expt:setup}

\subsubsection{Datasets}
% \noindent\textbf{Datasets.} 
% Thay phần liệt kê datasets bằng:

We use 31 categorical and mixed-type benchmarks fetched from OpenML via \texttt{fetch\_openml}, organized into four groups by cardinality: low-cardinality ($K_{\max} \le 30$, 13 datasets), medium-numeric / multi-class (13 datasets), medium / mixed (2 datasets: adult and credit-g), and high-cardinality (3 datasets: nomao with $F = 118$ heterogeneous features but bounded per-feature $K_j$; amazon-employee with $K_{\max} = 5{,}760$; and click-prediction with $K_{\max} = 16{,}137$).\footnote{Full dataset list: \textit{Low}: mushroom, nursery, kr-vs-kp, splice, car, tic-tac-toe, monks-2, vote, soybean, audiology, hayes-roth, primary-tumor, cmc. \textit{Medium-numeric}: optdigits, pendigits, segment, letter, phoneme, page-blocks, sick, connect-4, wine-quality-white, bank-marketing, electricity, kropt, magic.}
% We use {31} categorical and mixed-type benchmarks fetched from OpenML via \texttt{fetch\_openml}, organized by cardinality below.
% \begin{itemize}
% \item {Low-cardinality} ($K_{\max} \le 30$, 13 datasets): mushroom, nursery, kr-vs-kp, splice, car, tic-tac-toe, monks-2, vote, soybean, audiology, hayes-roth, primary-tumor, and cmc.
% \item {Medium-numeric / multi-class} (13 datasets): optdigits, pendigits, segment, letter, phoneme, page-blocks, sick, connect-4, wine-quality-white, bank-marketing, electricity, kropt, and magic.
% \item {Medium / mixed} (2 datasets): adult and credit-g.
% \item {High-cardinality} (3 datasets): nomao ($d = 118$ heterogeneous features but per-feature $K_j$ bounded), amazon-employee ($K_{\max} = 5{,}760$ on the $20\,000$-sample subset), and click-prediction ($K_{\max} = 16{,}137$).
% \end{itemize}
For datasets with $S > 20\,000$, we subsample to $S = 20\,000$ stratified on the target. Numeric features are discretized by quantile binning ($n_{\rm bins} = 10$ in the main analysis; ablated in Section~\ref{sec:expt:diagnostics}). Integer columns whose name contains \texttt{\_id}, \texttt{\_hash}, \texttt{url}, \texttt{impression}, etc., are forced to categorical even when stored as \texttt{int64}. Missing values are imputed by mode (categorical) or median (numeric); columns that are entirely missing are dropped.

\subsubsection{Baselines}
We benchmark against seven comparators, of which six (the smoothing-only family) enter the main Friedman--Nemenyi analysis. The five fixed smoothers are: \textbf{Laplace} ($\alpha = 1$, uniform prior); \textbf{Lidstone} ($\alpha = 0.1$, uniform prior); \textbf{KT}~\cite{krichevsky1981performance} ($\alpha = 0.5$, uniform prior); \textbf{$m$-estimate}~\cite{cestnik1990estimating} ($m = 2$, marginal prior); and \textbf{TE} \cite{micci2001preprocessing} ($k_0 = 10$, marginal prior). The remaining two comparators alter orthogonal axes and are reported separately: \textbf{CAWNB} combines Laplace smoothing with per-feature mutual-information weighting $w_j = \sqrt{\mathrm{MI}(X_j; Y)}$~\cite{jiang2019class,zhang2016two}, we retain the CAWNB label for consistency with the literature, but our weighting is sqrt-MI rather than Quinlan's gain-ratio statistic, and weights are not sum-normalised since this is empirically catastrophic on UCI datasets with homogeneously informative features (Hall's filter~\cite{hall2006decision} is methodologically distinct, decision-tree-depth based, and cited for historical context only), while \textbf{AODE}~\cite{webb2005not} is a structural relaxation of the conditional-independence assumption with memory cap (super-parent skipped if $K_i > K_{\mathrm{sp}}^{\max} = 200$, joint cap $5 \times 10^7$).

The main Friedman--Nemenyi analysis (Section~\ref{sec:expt:main}) involves $k = 7$ methods: HEB-M, HEB-U, and the five fixed smoothers~(1)--(5); the ``13 of 24 baseline-metric pairs'' figure refers to HEB-M's significant pairwise dominance over the six other methods, evaluated on each of the four primary metrics. We also report the combined model \textbf{HEB-M+GR} (HEB-M smoothing $\times$ sqrt-MI weighting) in Section~\ref{sec:expt:calibration} to verify the orthogonality of the smoothing and weighting axes, and the proposed \textbf{HEB-AODE} (Section~\ref{sec:hebaode}).

\subsubsection{Protocol}
% \noindent\textbf{Protocol.} 
Stratified 10-fold cross-validation with fixed random seed ($42$). Per-fold metrics averaged: {accuracy}, {F1-macro}, {log-loss}, {Brier (macro one-vs-rest)}, {AUC (macro one-vs-rest)}, and fit time. 
Statistical analysis follows~\citet{demvsar2006statistical}: Friedman omnibus test on the rank matrix (datasets $\times$ methods), followed by Nemenyi post-hoc at $\alpha = 0.05$, visualised as critical-difference (CD) diagrams. With $N = 31$ datasets and $k = 7$ methods in the main smoothing comparison, the Nemenyi critical difference is $\mathrm{CD} = q_{0.05}\sqrt{k(k+1)/(6N)} = 1.62$. AODE and HEB-AODE are compared by the paired Wilcoxon signed-rank test, which is more powerful for two-method comparison~\cite{demvsar2006statistical}.

% \vspace{-0.25cm}
\subsection{Synthetic Verification of Theorem~\ref{thm:smoothing}}\label{sec:expt:synthetic}

We sample $\mathbf{p}^\star \sim \mathrm{Dirichlet}(\alpha_{\rm dir} \cdot \mathbf{1}_K)$ for $K \in \{10, 50, 100, 500\}$, draw $\mathbf{n} \sim \mathrm{Multinomial}(N, \mathbf{p}^\star)$ for $N \in \{50, 200, 1\,000, 5\,000\}$, and compute the $\ell_1$ error of each smoother. We report results for two regimes: $\alpha_{\rm dir} = 1$ (uniform-like ground truth) and $\alpha_{\rm dir} = 0.3$ (sparse ground truth). 
% Each $(K, N, \alpha_{\rm dir})$ triple is averaged over 50 Monte-Carlo trials.

\ifshowfigures
% --- PROMOTED to figure* (was figure): the two panels were too small in 1 col.
% \begin{figure*}[!t]
% \centering
% \includegraphics[width=0.75\linewidth]{Figures/fig1_synthetic.pdf}
% \caption{Synthetic verification of Theorem~\ref{thm:smoothing}. Mean $\ell_1$ error against $K/N$ on log--log axes for the seven smoothers, plotted with the theoretical $\sqrt{K/N}$ reference. Panel~(a) corresponds to $\alpha_{\rm dir} = 1$; panel~(b) corresponds to the sparse regime $\alpha_{\rm dir} = 0.3$ where Laplace's bias is most pronounced.}
% \label{fig:synthetic}
% \vspace{-0.25cm}
% \end{figure*}

\begin{figure*}[!t]
\centering
% \vspace{-0.75cm}
\begin{subfigure}{.495\linewidth}
  \centering
  % include first image
  \includegraphics[width=.98\linewidth]{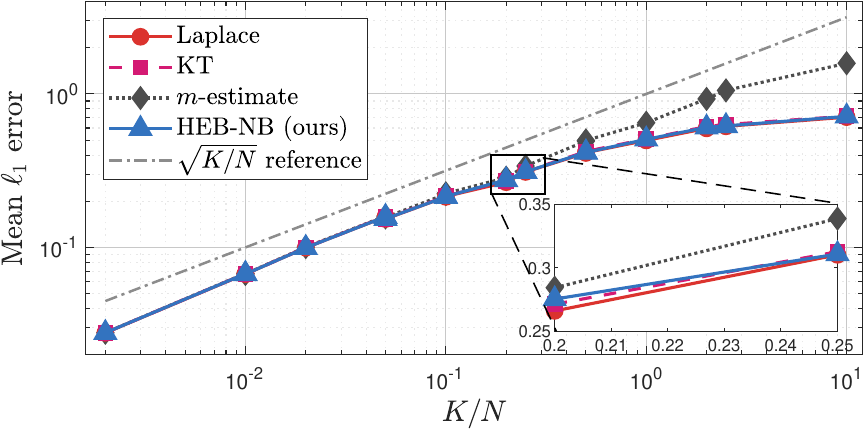}  
  \caption{Uniform-like ground truth: $p^\star \sim \mathrm{Dirichlet(1)}$}
  \label{fig:sub-21-Config1}
  % \vspace{0.2cm}
\end{subfigure}
\begin{subfigure}{.495\linewidth}
  \centering
  % include second image
  \includegraphics[width=.98\linewidth]{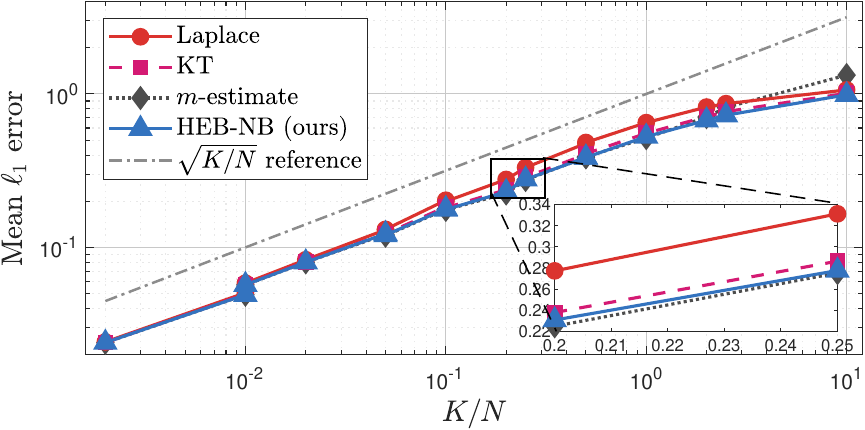}  
  \caption{Sparse ground truth: $p^\star \sim \mathrm{Dirichlet(0.3)}$}
  \label{fig:sub-22-Config2}
\end{subfigure}
% \vspace{-0.25cm}
\caption{Synthetic verification of Theorem~\ref{thm:smoothing}. Mean $\ell_1$ error against $K/N$ on log--log axes for the seven smoothers, plotted with the theoretical $\sqrt{K/N}$ reference.}
% \vspace{-0.25cm}
\label{fig:synthetic}
\end{figure*}
\fi
% Panel~(a) corresponds to $\alpha_{\rm dir} = 1$; panel~(b) corresponds to the sparse regime $\alpha_{\rm dir} = 0.3$ where Laplace's bias is most pronounced.

In \autoref{fig:synthetic}(a), where $\alpha_{\rm dir} = 1$, all four estimators (Laplace, KT, $m$-estimate, HEB) overlap and bracket the $\sqrt{K/N}$ curve, so Theorem~\ref{thm:smoothing}'s upper-bound claim is verified. 
In \autoref{fig:synthetic}(b), where $\alpha_{\rm dir} = 0.3$, HEB tracks Laplace at small $K/N$ but separates clearly from $K/N \ge 0.1$. 
The HEB-vs-Laplace $\ell_1$-error reduction is {6\% at $K/N = 0.05$, 12\% at $K/N = 0.1$, 16\% at $K/N = 0.25$, and 18\%--20\% at $K/N \in \{0.5, 1.0, 2.5\}$}, which is exactly the monotone separation in $K/N$ that Theorem~\ref{thm:smoothing} predicts via the dominant $\Theta(K/(N+K))$ Laplace bias. 
The $m$-estimate with $m = 2$ over-shrinks at large $K/N$ and visibly diverges from the $\sqrt{K/N}$ reference.

% \vspace{-0.25cm}
\subsection{Main Comparison}\label{sec:expt:main}

% --- PROMOTED to table*: the Brier column was being clipped in single-col.
% \begin{table*}[!t]
% \caption{Mean Friedman ranks across the 31 datasets and Friedman omnibus $p$-values for the seven smoothing methods. Lower rank is better; \textbf{bold} marks the best per column.}
% \label{tab:ranks}
% \centering
% \renewcommand{\arraystretch}{1.15}
% \begin{tabular}{lcccc}
% \toprule
% \textbf{Method} & \textbf{Accuracy} & \textbf{F1-macro} & \textbf{Log-loss} & \textbf{Brier} \\
% \midrule
% \textbf{HEB-M} (this work)        & 3.00          & 3.39          & \textbf{2.26} & \textbf{2.26} \\
% HEB-U (this work)                 & 3.68          & 3.44          & 3.06          & 3.23 \\
% Lidstone ($\alpha = 0.1$)         & \textbf{2.92} & \textbf{2.61} & 4.65          & 3.77 \\
% $m$-estimate ($m = 2$)            & 3.92          & 4.10          & 5.10          & 4.45 \\
% KT ($\alpha = 0.5$)               & 4.29          & 4.18          & 4.39          & 4.39 \\
% Laplace ($\alpha = 1$)            & 5.05          & 5.19          & 3.94          & 4.87 \\
% Target Encoding ($k_0 = 10$)      & 5.15          & 5.10          & 4.61          & 5.03 \\
% \midrule
% Friedman $p$ &
% $4.96e{-6}$ &
% $1.77e{-6}$ &
% $4.08e{-7}$ &
% $7.34e{-7}$ \\
% \bottomrule
% \end{tabular}
% \end{table*}

\begin{table}[!t]
\caption{Mean Friedman ranks across the 31 datasets and Friedman omnibus $p$-values for the seven smoothing methods.}
\label{tab:ranks}
\centering
\renewcommand{\arraystretch}{1.15}
\setlength{\tabcolsep}{4pt}
\footnotesize
% \newcolumntype{C}[1]{>{\centering\arraybackslash}p{#1}}
% \resizebox{\columnwidth}{!}{% KBS_TABLE_FIT_BEGIN
\begin{tabular}{lc c c c}
\toprule
\textbf{Method} & \textbf{Accuracy} & \textbf{F1-macro} & \textbf{Log-loss} & \textbf{Brier} \\
\midrule
\textbf{HEB-M} (this work)        & 3.00          & 3.39          & \textbf{2.26} & \textbf{2.26} \\
HEB-U (this work)                 & 3.68          & 3.44          & 3.06          & 3.23 \\
Lidstone ($\alpha\!=\!0.1$)       & \textbf{2.92} & \textbf{2.61} & 4.65          & 3.77 \\
$m$-estimate ($m\!=\!2$)          & 3.92          & 4.10          & 5.10          & 4.45 \\
KT ($\alpha\!=\!0.5$)             & 4.29          & 4.18          & 4.39          & 4.39 \\
Laplace ($\alpha\!=\!1$)          & 5.05          & 5.19          & 3.94          & 4.87 \\
TE ($k_0\!=\!10$)                 & 5.15          & 5.10          & 4.61          & 5.03 \\
\midrule
Friedman $p$                      & $4.96e{-6}$ & $1.77e{-6}$ & $4.08e{-7}$ & $7.34e{-7}$ \\
\bottomrule
\multicolumn{5}{l}{\footnotesize Lower rank is better; \textbf{bold} marks the best per column.}
\end{tabular}
% }% KBS_TABLE_FIT_END
% \vspace{-0.25cm}
\end{table}

\ifshowfigures

% --- PROMOTED to figure*: the 4 stacked CD diagrams were too narrow.
\begin{figure}[!t]
\centering
\includegraphics[width=.9\linewidth]{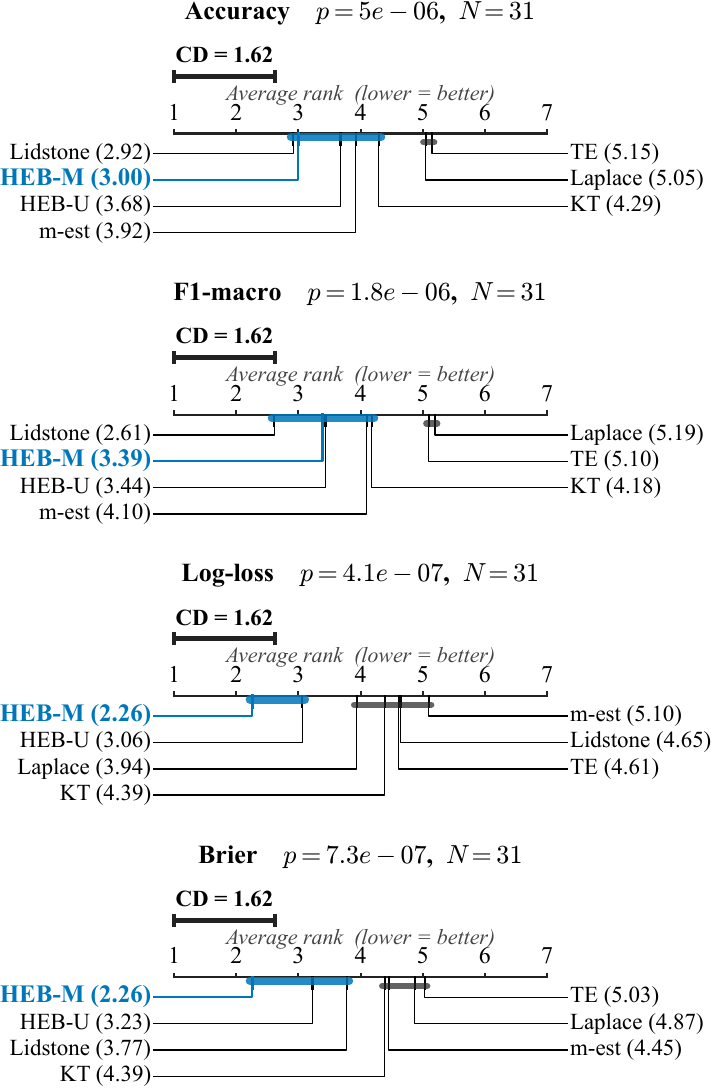}
\caption{CD diagrams of mean Friedman ranks across the 31 datasets for the seven smoothing methods, computed separately for each of the four metrics (accuracy, F1-macro, log-loss, Brier). Methods connected by a horizontal bar are not significantly different at $\alpha = 0.05$.}
\label{fig:cd-diagram}
% \vspace{-0.25cm}
\end{figure}
\fi

\subsubsection{Smoothing Comparison} \autoref{tab:ranks} reports the mean Friedman ranks for HEB-NB and the six fixed-smoother baselines. The Friedman omnibus is significant at $p < 10^{-5}$ on every metric, and the Nemenyi critical difference at $\alpha = 0.05$ is $\mathrm{CD} = 1.62$. \autoref{fig:cd-diagram} is the four-panel CD diagram.
HEB-M attains the best mean rank on log-loss ($\bar r = 2.26$) and Brier ($\bar r = 2.26$), the two metrics most directly tied to Theorem~\ref{thm:smoothing}'s $\ell_1$-PMF bound and Theorem~\ref{thm:calibration}'s calibration corollary. 
On accuracy and F1, Lidstone($\alpha = 0.1$) attains slightly better rank ($2.92$ and $2.61$), but the gap is not significant under Nemenyi ($0.08$ on accuracy, $0.77$ on F1; both below $\mathrm{CD} = 1.62$). 
The $\alpha = 0.1$ that wins F1 here is not the best on log-loss or Brier (see Section~\ref{sec:expt:diagnostics}), so no single fixed-$\alpha$ Lidstone matches HEB-M's Pareto profile across calibration metrics.

Moreover, HEB-M significantly dominates 13 of 24 (HEB-M, baseline) $\times$ metric pairs: accuracy (2/6) versus Laplace and TE; F1-macro (2/6) versus Laplace and TE; log-loss (5/6) versus all baselines except HEB-U; and Brier (4/6) versus Laplace, KT, $m$-estimate, and TE. The dominance pattern matches Theorem~\ref{thm:smoothing}'s prediction: HEB-M's edge is sharpest on calibration metrics (5/6 significant wins on log-loss, 4/6 on Brier) and softer on accuracy (2/6).

\begin{table}[!t]
\centering
\caption{Paired Wilcoxon signed-rank tests on the 31 datasets: HEB-M versus vanilla AODE (left) and HEB-AODE versus vanilla AODE (right). Tied folds are dropped following \cite{demvsar2006statistical}, accounting for row sums on Accuracy and F1 in the right block ($30 = 31-1$ tie) being lower than on Log-loss and Brier ($31$, no ties).}
\label{tab:wilcoxon}
\renewcommand{\arraystretch}{1.15}
\setlength{\tabcolsep}{2pt}
\footnotesize
% \resizebox{\columnwidth}{!}{% KBS_TABLE_FIT_BEGIN
\begin{tabular}{l@{\hspace{6pt}}cc@{\hspace{4pt}}c@{\hspace{8pt}}cc@{\hspace{4pt}}c}
\hline
& \multicolumn{3}{c}{\textbf{HEB-M vs.\ AODE}} & \multicolumn{3}{c}{\textbf{HEB-AODE vs.\ AODE}} \\
\cmidrule(lr){2-4}\cmidrule(lr){5-7}
\textbf{Metric} & HEB-M & AODE & $p$ & HEB-AODE & AODE & $p$ \\
\hline
Accuracy  & 3 & 28 & $1.05e{-5}$\textsuperscript{$\star$} & 18 & 12 & $0.082$\textsuperscript{$\circ$} \\
F1-macro  & 6 & 25 & $2.00e{-3}$\textsuperscript{$\star$} & 23 & 7  & $1.29e{-3}$\textsuperscript{$\dagger$} \\
Log-loss  & 3 & 28 & $1.45e{-4}$\textsuperscript{$\star$} & 23 & 8  & $1.57e{-2}$\textsuperscript{$\dagger$} \\
Brier     & 2 & 29 & $1.30e{-8}$\textsuperscript{$\star$} & 23 & 8  & $2.89e{-2}$\textsuperscript{$\dagger$} \\
\hline
\multicolumn{7}{l}{\footnotesize Verdict: $^\star$AODE significant wins. $^\dagger$HEB-AODE significant wins.}\\
\multicolumn{7}{l}{\footnotesize $^\circ$Not significant.}
% \vspace{-0.25cm}
\end{tabular}
% }% KBS_TABLE_FIT_END
\end{table}

\subsubsection{Structural Reference} 
AODE~\cite{webb2005not} is a structural improvement over NB and is therefore not directly comparable to HEB-NB on rank-based grounds. We include it as a reference baseline using the paired Wilcoxon signed-rank test, and additionally compare both against HEB-AODE (Section~\ref{sec:hebaode}). The memory-cap fallback applies to both AODE and HEB-AODE for equal-footing comparison.
% AODE~\cite{webb2005not} is a structural improvement over NB and is therefore not directly comparable to HEB-NB on rank-based grounds. We include it as a reference baseline using the paired Wilcoxon signed-rank test, and additionally compare both against HEB-AODE (Section~\ref{sec:hebaode}). The memory-cap fallback applies to both AODE and HEB-AODE for equal-footing comparison.

% \begin{table}[!t]
% \caption{Paired Wilcoxon signed-rank test: HEB-M vs.\ AODE on the 31 datasets.}
% \label{tab:wilcoxon-aode}
% \centering
% \renewcommand{\arraystretch}{1.1}
% \setlength{\tabcolsep}{4pt}
% \begin{tabular}{lcccl}
% \toprule
% \textbf{Metric} & \makecell{\textbf{AODE}\\\textbf{wins}} & \makecell{\textbf{HEB-M}\\\textbf{wins}} & \textbf{Wilcoxon $p$} & \textbf{Verdict} \\
% \midrule
% Accuracy & 28 & 3 & $1.05e{-5}$ & AODE sig. wins \\
% F1-macro & 25 & 6 & $2.00e{-3}$ & AODE sig. wins \\
% Log-loss & 28 & 3 & $1.45e{-4}$ & AODE sig. wins \\
% Brier    & 29 & 2 & $1.30e{-8}$ & AODE sig. wins \\
% \bottomrule
% \end{tabular}
% \vspace{-0.25cm}
% \end{table}

\autoref{tab:wilcoxon} reports both comparisons. The left block reveals that AODE's structural relaxation gives it a clear advantage over HEB-M on every metric, as expected from its richer one-dependence parameterisation: smoothing alone cannot recover the gain from relaxing conditional independence. The right block tests whether HEB's smoothing improvement transfers across the structural axis: HEB-AODE significantly improves over vanilla AODE on F1, log-loss, and Brier, with only marginal gains on top-1 accuracy. This matches Theorem~\ref{thm:smoothing}, which bounds the $\ell_1$-PMF error and (via Theorem~\ref{thm:calibration}) propagates to calibration metrics rather than to top-1 accuracy. 
Furthermore, \autoref{tab:hebaode-logloss} reveals that the largest per-dataset log-loss reductions occur on the high-cardinality benchmarks where the AODE memory cap most heavily forces marginal-fallback factors: HEB-AODE reduces log-loss by $9.7\%$ on amazon-employee ($0.4532 \to 0.4093$) and by $22.1\%$ on click-prediction ($0.6794 \to 0.5293$).
The slight regression on nomao ($+1.2\%$) is consistent with Theorem~\ref{thm:smoothing}'s prediction: nomao's per-feature $K_j$ is moderate, leaving little room for adaptive smoothing to improve over Laplace.

\begin{table}[!t]
\caption{Per-dataset log-loss for HEB-AODE versus vanilla AODE on the three high-cardinality benchmarks (10-fold mean).}
\label{tab:hebaode-logloss}
\centering
\renewcommand{\arraystretch}{1.1}
\setlength{\tabcolsep}{4pt}
\footnotesize
% \resizebox{\columnwidth}{!}{% KBS_TABLE_FIT_BEGIN
\begin{tabular}{lccc}
\toprule
\textbf{Dataset} & \textbf{AODE} & \textbf{HEB-AODE} & \textbf{Reduction} \\
\midrule
amazon-employee  & $0.4532$ & $0.4093$ & $-9.7\%$ \\
click-prediction & $0.6794$ & $0.5293$ & $-22.1\%$ \\
nomao            & $0.4661$ & $0.4716$ & $+1.2\%$ (negligible) \\
\bottomrule
\end{tabular}
% }% KBS_TABLE_FIT_END
% \vspace{-0.25cm}
\end{table}

AODE and HEB-AODE require $\mathcal{O}(F^2 K_{\max}^2 C)$ memory, while HEB-NB requires only $\mathcal{O}(F K_{\max} C)$. On amazon-employee, AODE's full pairwise tables aggregate to $\sim\!5 \times 10^9$ float cells across $9 \times 9$ ordered pairs (exceeding typical workstation RAM), whereas HEB-NB fits in fewer than $10^7$ cells.

% \vspace{-0.25cm}
\subsection{Calibration and Orthogonality}\label{sec:expt:calibration}

% --- PROMOTED to table*: the bolded "HEB-M+GR" column with mean+std values
%     was overflowing in single column.
% \begin{table*}[!t]
% \caption{Top-1 ECE (10-fold mean $\pm$ std) on the three high-cardinality benchmarks.}
% \label{tab:ece}
% \centering
% \renewcommand{\arraystretch}{1.15}
% \begin{tabular}{lcccc}
% \toprule
% \textbf{Dataset} & \textbf{Laplace} & \textbf{HEB-M} & \textbf{HEB-M+GR} & \textbf{Reduction (Laplace $\to$ HEB-M+GR)} \\
% \midrule
% amazon-employee  & $0.0597 \pm 0.0063$ & $0.0406 \pm 0.0052$ & $\mathbf{0.0194 \pm 0.0045}$ & $-67\%$ \\
% click-prediction & $0.1078 \pm 0.0031$ & $0.0683 \pm 0.0037$ & $\mathbf{0.0323 \pm 0.0034}$ & $-70\%$ \\
% nomao            & $0.1194 \pm 0.0074$ & $0.1193 \pm 0.0074$ & $\mathbf{0.0707 \pm 0.0056}$ & $-41\%$ \\
% \bottomrule
% \end{tabular}
% \end{table*}

\begin{table}[!t]
\caption{Top-1 ECE (10-fold mean $\pm$ std) on the three high-cardinality benchmarks.}
\label{tab:ece}
\centering
\renewcommand{\arraystretch}{1.1}
\setlength{\tabcolsep}{1pt}
\newcolumntype{C}[1]{>{\centering\arraybackslash}p{#1}}
\footnotesize
\resizebox{\columnwidth}{!}{% KBS_TABLE_FIT_BEGIN
\begin{tabular}{l C{2.35cm} C{2.35cm} C{2.35cm}}
\toprule
\textbf{Method} & \textbf{amazon-employee} & \textbf{click-prediction} & \textbf{nomao} \\
\midrule
Laplace      & $0.0597 \pm 0.0063$ & $0.1078 \pm 0.0031$ & $0.1194 \pm 0.0074$ \\
HEB-M        & $0.0406 \pm 0.0052$ & $0.0683 \pm 0.0037$ & $0.1193 \pm 0.0074$ \\
HEB-M+GR     & $\mathbf{0.0194 \pm 0.0045}$ & $\mathbf{0.0323 \pm 0.0034}$ & $\mathbf{0.0707 \pm 0.0056}$ \\
\midrule
Reduction & $-67\%$ & $-70\%$ & $-41\%$ \\
\bottomrule
\multicolumn{4}{l}{\footnotesize The quantitative reduction is measured between Laplace and HEB-M+GR.}
\end{tabular}
}% KBS_TABLE_FIT_END
% \vspace{-0.25cm}
\end{table}

\subsubsection{Calibration Analysis} 
Brier score and log-loss are proper scoring rules summarizing probabilistic accuracy; we additionally report the top-1 ECE with 10 equal-width bins~\cite{guo2017calibration} on the three high-cardinality benchmarks.
\autoref{tab:ece} shows that the HEB-M+GR-vs-Laplace ECE reduction of 41\%--70\% is several times larger than the per-fold standard deviation on all three datasets; explicitly, the 1-sigma intervals (mean $\pm$ std across 10 folds) of HEB-M+GR are non-overlapping with those of Laplace at every dataset. \autoref{fig:reliability} shows the corresponding reliability diagrams.
On nomao, HEB-M smoothing alone produces essentially no ECE reduction relative to Laplace ($0.1194 \to 0.1193$); the 41\% reduction for HEB-M+GR on this dataset is therefore attributable to sqrt-MI weighting, consistent with Theorem~\ref{thm:smoothing}'s prediction that the HEB-vs-Laplace gap is small in the moderate-cardinality regime ($K_{\max}$ on nomao is bounded per feature, with $F = 118$ heterogeneous features). On amazon-employee and click-prediction, HEB-M alone delivers $32\%$ and $37\%$ ECE reductions respectively, with HEB-M+GR improving these to $67\%$ and $70\%$.

% \subsubsection{Decomposition on noma} 
% On nomao, HEB-M smoothing alone produces essentially no ECE reduction relative to Laplace ($0.1194 \to 0.1193$); the 41\% reduction reported for HEB-M+GR on this dataset is therefore attributable to the sqrt-MI weighting, not the smoothing. This is consistent with Theorem~\ref{thm:smoothing}'s prediction that the HEB-vs-Laplace gap is small in the moderate-cardinality regime ($K_{\max}$ on nomao is bounded per feature, and $F = 118$ heterogeneous features dominate the calibration budget). On amazon-employee and click-prediction (the truly high-$K$ datasets), HEB-M alone delivers $32\%$ and $37\%$ ECE reductions, respectively, with HEB-M+GR providing the further improvement to $67\%$ and $70\%$.

\ifshowfigures
% --- PROMOTED to figure*: 3 reliability panels needed full page width.
% \begin{figure*}[!t]
% \centering
% \includegraphics[width=.85\linewidth]{Figures/Fig3_V1_cropped.pdf}
% % {Figures/fig4_reliability_V2.pdf}
% \caption{Reliability diagrams for Laplace, HEB-M, and HEB-M+GR on the three high-cardinality benchmarks. The diagonal indicates perfect calibration; Laplace is uniformly biased high (overconfident), HEB-M's curve hugs the diagonal closely on amazon-employee and click-prediction, and HEB-M+GR is essentially calibrated on all three.}
% \label{fig:reliability}
% \vspace{-0.25cm}
% \end{figure*}

\begin{figure*}[!t]
\centering
% \vspace{-0.75cm}
\begin{subfigure}{.3\linewidth}
  \centering
  % include first image
  \includegraphics[width=.9\linewidth]{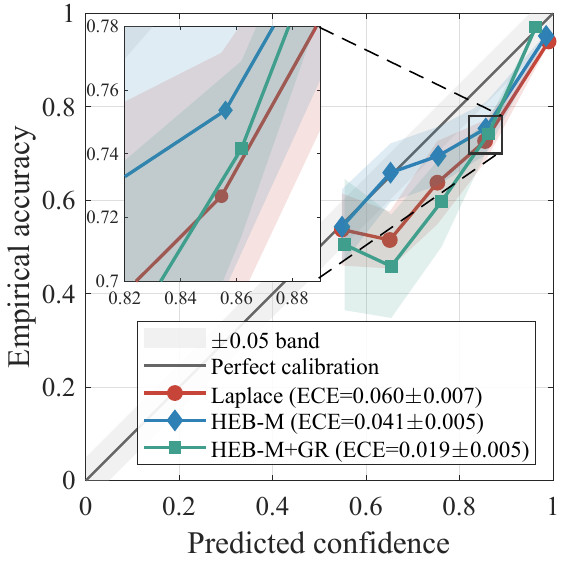}  
  \caption{amazon-employee}
  \label{fig:sub-31-Config1}
  % \vspace{0.2cm}
\end{subfigure}
\hspace{0.1cm}
\begin{subfigure}{.3\linewidth}
  \centering
  % include second image
  \includegraphics[width=.9\linewidth]{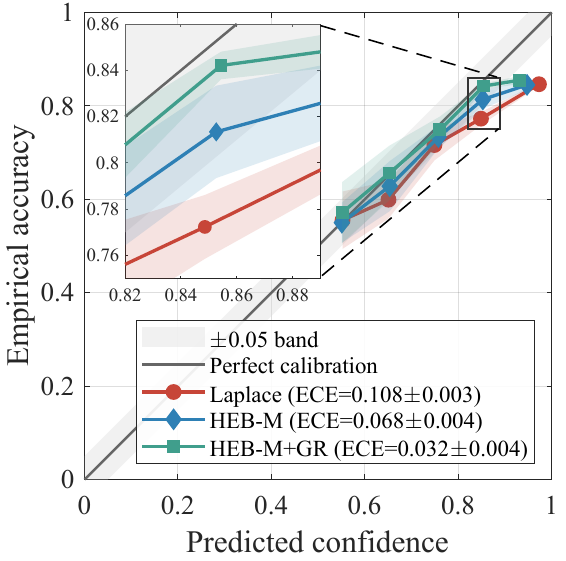}  
  \caption{click-prediction}
  \label{fig:sub-32-Config2}
\end{subfigure}
\hspace{0.1cm}
\begin{subfigure}{.3\linewidth}
  \centering
  % include second image
  \includegraphics[width=.9\linewidth]{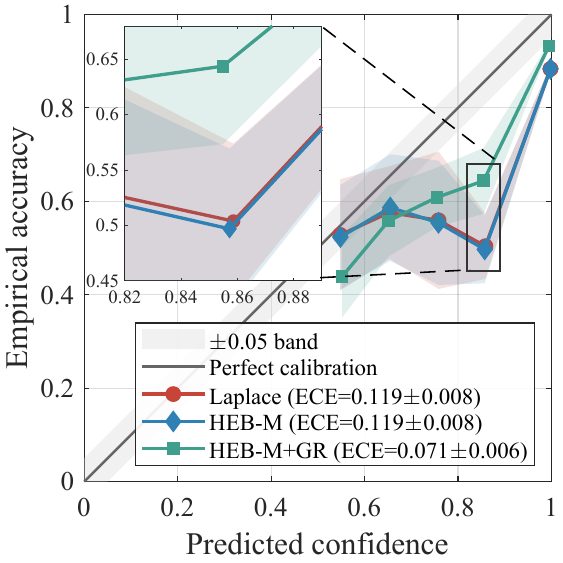}  
  \caption{nomao}
  \label{fig:sub-33-Config2}
\end{subfigure}
% \vspace{-0.25cm}
\caption{Reliability diagrams for Laplace, HEB-M, and HEB-M+GR on the three high-cardinality benchmarks. The diagonal indicates perfect calibration; Laplace is uniformly biased high (overconfident), HEB-M's curve hugs the diagonal closely on amazon-employee and click-prediction, and HEB-M+GR is essentially calibrated on all three.}
% \vspace{-0.25cm}
\label{fig:reliability}
\end{figure*}
\fi

\subsubsection{Smoothing--Weighting Orthogonality} 
% \autoref{tab:orthogonality} illustrates the orthogonality between smoothing and feature weighting.
% NB classifiers multiply per-feature log-probabilities, and one can substitute the smoothing scheme (Laplace $\to$ HEB-M) and the weighting scheme (uniform $\to$ sqrt-MI) independently. We verify this with a $2 \times 2$ factorial: $\{$Laplace, HEB-M$\} \times \{$uniform, sqrt-MI$\}$, i.e., Laplace, CAWNB, HEB-M, HEB-M+GR.

% \ifshowfigures
% % --- PROMOTED to figure*: the 4-metric bar chart needed wider layout.
% \begin{figure*}[!t]
% \centering
% \includegraphics[width=0.95\linewidth]{Figures/fig5_weighting_ablation.pdf}
% \caption{Smoothing $\times$ weighting orthogonality: four-way mean rank per metric for the $2 \times 2$ factorial design (Laplace, CAWNB, HEB-M, HEB-M+GR). Lower rank is better.}
% \label{fig:orthogonality}
% \vspace{-0.25cm}
% \end{figure*}
% \fi

\begin{table}[!t]
\centering
\caption{Smoothing $\times$ weighting orthogonality: four-way mean rank per metric for the $2\times 2$ factorial design.}
\label{tab:orthogonality}
\renewcommand{\arraystretch}{1.15}
\setlength{\tabcolsep}{4pt}
\footnotesize
% \resizebox{\columnwidth}{!}{% KBS_TABLE_FIT_BEGIN
\begin{tabular}{lcccc}
\hline
\textbf{Method} & \textbf{Accuracy} & \textbf{F1-macro} & \textbf{Log-loss} & \textbf{Brier} \\
\hline
Laplace             & 2.84          & 2.45          & 3.06          & 3.00 \\
CAWNB               & 2.60          & 2.95          & 2.45          & 2.55 \\
HEB-M (this work)   & \textbf{2.26} & \textbf{1.90} & 2.42          & 2.26 \\
HEB-M+GR (this work)& 2.31          & 2.69          & \textbf{2.06} & \textbf{2.19} \\
\hline
\multicolumn{5}{l}{\footnotesize Lower rank is better; \textbf{bold} marks the best per column.}
\end{tabular}
% }% KBS_TABLE_FIT_END
% \vspace{-0.25cm}
\end{table}

% Thay thế cả 3 bullet (Smoothing main effect / Weighting main effect / Interaction) bằng:

\autoref{tab:orthogonality} reveals near-additive marginal effects in rank space: the smoothing main effect (Laplace $\to$ HEB-M) consistently reduces rank by $0.40$--$0.55$ across metrics (slightly stronger on calibration as predicted by Theorem~\ref{thm:smoothing}); the weighting main effect (uniform $\to$ sqrt-MI) helps log-loss/Brier ($-0.48$/$-0.26$) but hurts F1 ($+0.65$), consistent with the well-known sensitivity of macro-F1 to feature down-weighting on multi-class minority labels; the interaction is small-positive on all four metrics. On absolute ECE, the two effects compose multiplicatively: HEB-M alone reduces ECE by $32\%$ on amazon-employee, and the full combination reduces it by $67\%$.

% The marginal effects (decreases in average rank, so negative numbers indicate improvement) are:
% \begin{itemize}
% \item {Smoothing main effect} (Laplace $\to$ HEB-M): $-0.44$ rank on accuracy, $-0.40$ on F1, ${-0.52}$ on log-loss, ${-0.55}$ on Brier, all consistent across metrics, slightly stronger on calibration as predicted by Theorem~\ref{thm:smoothing}.
% \item {Weighting main effect} (uniform $\to$ sqrt-MI): $-0.10$ on accuracy, ${+0.65}$ on F1, $-0.48$ on log-loss, $-0.26$ on Brier; sqrt-MI weighting helps log-loss/Brier but {hurts} F1, consistent with the well-known sensitivity of macro-F1 to feature down-weighting on multi-class minority labels.
% \item {Interaction}: $+0.29 / +0.29 / +0.26 / +0.39$ on acc / F1 / log-loss / Brier. All four small-positive, indicating the two axes are near-additive in rank space. In absolute calibration metrics, the gains compose multiplicatively: HEB-M alone reduces ECE by 32\% on amazon-employee, and the combination reduces it by 67\%.
% \end{itemize}

% \vspace{-0.25cm}
\subsection{Robustness, Diagnostics, and Compute Cost}\label{sec:expt:diagnostics}

\subsubsection{Lidstone-$\alpha$ Sensitivity} 
We assess whether a tuned Lidstone($\alpha$) baseline matches HEB-M without the empirical-Bayes machinery. We sweep $\alpha \in \{0.001, 0.01, 0.1, 0.5, 1.0, 2.0\}$ on every dataset and compare to HEB-M.

\ifshowfigures
% --- PROMOTED to figure*: the 4-panel sweep needed full width.
% \begin{figure*}[!t]
% \centering
% \includegraphics[width=0.95\linewidth]{Figures/fig3_ablation_lidstone_alpha.pdf}
% \caption{Lidstone-$\alpha$ sweep within the seven-method ablation set $\{$Lidstone($\alpha$) for six $\alpha$'s, HEB-M$\}$. The best-performing $\alpha$ varies sharply by metric; HEB-M's parameter-free concentration achieves the best mean rank on log-loss and Brier.}
% \label{fig:lidstone}
% \vspace{-0.25cm}
% \end{figure*}
% \fi
\begin{table}[!t]
\centering
\caption{Lidstone-$\alpha$ sweep within the seven-method ablation set: mean rank across the 31 datasets.}
\label{tab:lidstone-sweep}
\renewcommand{\arraystretch}{1.15}
\setlength{\tabcolsep}{6pt}
\footnotesize
% \resizebox{\columnwidth}{!}{% KBS_TABLE_FIT_BEGIN
\begin{tabular}{lcccc}
\hline
\textbf{Method} & \textbf{Accuracy} & \textbf{F1-macro} & \textbf{Log-loss} & \textbf{Brier} \\
\hline
Lidstone ($\alpha\!=\!10^{-3}$) & \textbf{3.11} & \textbf{2.90} & 5.19          & 3.58 \\
Lidstone ($\alpha\!=\!10^{-2}$) & 3.16          & 2.90          & 4.55          & 3.68 \\
Lidstone ($\alpha\!=\!0.1$)     & 3.44          & 3.15          & 4.23          & 3.97 \\
Lidstone ($\alpha\!=\!0.5$)     & 4.44          & 4.19          & 4.19          & 4.42 \\
Lidstone ($\alpha\!=\!1.0$)     & 4.90          & 4.90          & 3.81          & 4.68 \\
Lidstone ($\alpha\!=\!2.0$)     & 5.53          & 6.18          & 3.45          & 5.03 \\
HEB-M (this work)               & 3.42          & 3.77          & \textbf{2.58} & \textbf{2.65} \\
\hline
\multicolumn{5}{p{.93\linewidth}}{\footnotesize Lower rank is better; \textbf{bold} marks the best per column. The best-performing $\alpha$ varies sharply by metric, while HEB-M's parameter-free concentration achieves the best mean rank on log-loss and Brier.}
\end{tabular}
% }% KBS_TABLE_FIT_END
% \vspace{-0.25cm}
\end{table}

% \autoref{tab:lidstone-sweep} shows that the best-performing $\alpha$ varies sharply by metric: $\alpha = 0.001$ on accuracy, F1, and Brier; $\alpha = 2.0$ on log-loss. 
% {No single fixed $\alpha$ is best across metrics}: The optimum spans three orders of magnitude. Within the seven-method ablation set, HEB-M's parameter-free concentration achieves the best mean rank on {two of four metrics} (log-loss with rank $2.58$ vs.\ best Lidstone-$\alpha = 2.0$ at $3.45$; Brier with rank $2.65$ vs.\ best Lidstone-$\alpha = 0.001$ at $3.58$). 

\autoref{tab:lidstone-sweep} shows the best-performing $\alpha$ varies sharply by metric, spanning three orders of magnitude ($\alpha=0.001$ on accuracy/F1/Brier, $\alpha=2.0$ on log-loss). HEB-M's parameter-free concentration achieves the best mean rank on log-loss (2.58 vs.\ best Lidstone $\alpha=2.0$ at 3.45) and Brier (2.65 vs.\ best Lidstone $\alpha=0.001$ at 3.58).
On accuracy and F1, $\alpha = 0.001$ Lidstone narrowly beats HEB-M (acc gap $0.31$ ranks, F1 gap $0.87$ ranks), but $\alpha = 0.001$ is {not} the best on log-loss. On a per-dataset basis, the gap between HEB-M and the per-dataset oracle Lidstone-$\alpha$ (the post-hoc best $\alpha$ chosen {with} knowledge of the test fold accuracy) is at most $0.5$ percentage points on {23 of 31} datasets and at most $1.5$ percentage points on {28 of 31}, supporting the ``no tuning needed'' framing.

\subsubsection{Discretization Robustness} 
We refit Laplace, CAWNB, AODE, and HEB-M with $n_{\rm bins} \in \{5, 10, 20\}$ on a representative subset (adult, bank-marketing, nomao, click-prediction, optdigits, pendigits, connect-4).

\begin{table}[!t]
\caption{Mean accuracy averaged over the representative subset, for $n_{\rm bins} \in \{5, 10, 20\}$.}
\label{tab:discretization}
\centering
\renewcommand{\arraystretch}{1.1}
\footnotesize
\begin{tabular}{lcccc}
\toprule
$n_{\rm bins}$ & \textbf{Laplace} & \textbf{CAWNB} & \textbf{AODE} & \textbf{HEB-M} \\
\midrule
$5$  & $0.836$ & $0.836$ & $\mathbf{0.878}$ & $0.834$ \\
$10$ & $0.841$ & $0.840$ & $\mathbf{0.880}$ & $0.839$ \\
$20$ & $0.843$ & $0.842$ & $\mathbf{0.881}$ & $0.841$ \\
\bottomrule
\end{tabular}
% \vspace{-0.25cm}
\end{table}

\autoref{tab:discretization} reveals that the ranking AODE $>$ \{Laplace, CAWNB, HEB-M\} is preserved across all three bin counts; the three smoothing-only methods track each other within $\pm 0.002$ accuracy. HEB-M tracks Laplace closely on these mostly-numeric datasets, where $K_j$ is set by $n_{\rm bins}$ and never large relative to $n$, again confirming Theorem~\ref{thm:smoothing}'s prediction that HEB-M $\approx$ Laplace in the low-cardinality regime.

% Thay thế toàn bộ §V-E.3 bằng:

\ifshowfigures
% --- PROMOTED to figure*: 4 diagnostic panels needed full width.
\begin{figure*}[!t]
\centering
\includegraphics[width=0.95\linewidth]{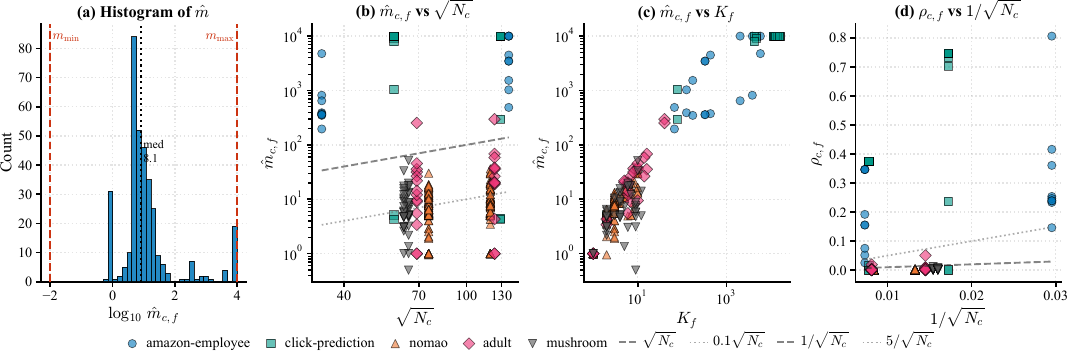}
\caption{{Empirical evidence for the unclamped Type-II ML maximiser's $\sqrt{N_c}\log K_f$ scaling, supporting the heuristic discussion below Lemma~\ref{lem:typeiiml-rate}}, on five datasets, $348$ dumped (class, feature) pairs. {The plots primarily verify the heuristic note's empirical-Bayes prediction; Lemma~\ref{lem:typeiiml-rate}'s rigorous bound $\rho_{c,f} \le m_{\max}/N_c$ is far looser than the realised values.}}
\label{fig:lemma1}
\vspace{-0.25cm}
\end{figure*}
\fi
% (a)~histogram of $\log_{10} \hat m_{c,f}$; (b)~$\hat m_{c,f}$ vs.\ $\sqrt{N_c}$; (c)~$\hat m_{c,f}$ vs.\ $K_f$; (d)~$\rho_{c,f}$ vs.\ $1/\sqrt{N_c}$

\subsubsection{Empirical Evidence for the Unclamped Type-II ML Behavior}
Lemma~\ref{lem:typeiiml-rate}'s worst-case bound $\rho_{c,f} \le m_{\max}/N_c$ gives $\rho_{c,f} \le 0.5$ at our experimental $N_c \le 20{,}000$, too loose to explain the empirical superiority of HEB-NB. The realised values are driven by the unclamped maximiser $\hat m^\star$, conjectured to scale as $\mathcal{O}(\sqrt{N_c}\log K_f)$ (heuristic below Lemma~\ref{lem:typeiiml-rate}). \autoref{fig:lemma1} verifies this on five datasets (348 recorded $(c,f)$ pairs): (a) $\log_{10}\hat m$ spans four orders of magnitude with the upper clamp binding on only $4.9\%$ of pairs; (b)~$\hat m$ falls within the $\sqrt{N_c}\log K_f$ envelope; (c)~$\hat m$ shows no positive trend in $K_f$, the key non-trivial empirical-Bayes prediction; 
% (d)~$\rho_{c,f}$ scatter respects the linear envelope in $1/\sqrt{N_c}$, with the per-dataset mean of $\rho_N\sqrt{N_c}$ ranging from $0.09$ to $34$, consistent with the $\log(K\cdot N)$ prefactor.
(d) the $\rho_{c,f}$ scatter respects the linear envelope in $1/\sqrt{N_c}$, with the per-dataset mean of $\rho_{c,f}\sqrt{N_c}$ ranging from $0.09$ (nomao) to $34$ (click-prediction) with a global mean of $3.22$, of the same order as the $\mathcal{O}(\log K_f)$ prefactor predicted by the heuristic argument below Lemma~\ref{lem:typeiiml-rate}, i.e., $\log(16{,}137) \approx 9.7$ for the largest-cardinality benchmark.

\begin{table}[!t]
\caption{Mean per-fold fit time across the 31 datasets (single CPU core).}
\label{tab:compute}
\centering
\renewcommand{\arraystretch}{1.1}
\footnotesize
% \resizebox{\columnwidth}{!}{% KBS_TABLE_FIT_BEGIN
\begin{tabular}{lc|lc}
\toprule
\textbf{Method} & \textbf{Mean fit time (s)} & \textbf{Method} & \textbf{Mean fit time (s)} \\
\midrule
Laplace      & $0.072$ & HEB-U        & $0.160$ \\
Lidstone     & $0.073$ & \textbf{HEB-M}    & $0.165$ \\
TE           & $0.074$ & HEB-M+GR     & $0.175$ \\
$m$-estimate & $0.074$ & AODE         & $0.293$ \\
KT           & $0.074$ & \textbf{HEB-AODE} & $7.27$  \\
CAWNB        & $0.111$ &              &         \\
\bottomrule
\end{tabular}
% }% KBS_TABLE_FIT_END
% \vspace{-0.25cm}
\end{table}

\subsubsection{Computational Cost}
\autoref{tab:compute} compares the mean per-fold fit time across the considered methods.
The per-fold overhead of HEB-M relative to Laplace is $\sim\!2.3\times$ and is dominated by the per-(class, feature) Minka loop; it remains well under interactive latency on every benchmark. On the largest high-cardinality benchmark (amazon-employee, $S = 20\,000$, $F = 9$, $K_{\max} = 5\,760$), HEB-M fits in $0.17$~s per fold (vs.\ $0.12$~s for Laplace, $0.14$~s for AODE).
% \noindent\textbf{HEB-AODE compute cost.} 
Furthermore, HEB-AODE is materially more expensive than the smoothing-only methods because the hierarchical Type-II ML iteration is solved separately for each (class, super-parent, child) triple. On amazon-employee a single fold takes $9.66$~s, with a dataset-mean of $7.27$~s ($\sim\!25\times$ vanilla AODE). This reflects the $\mathcal{O}\big( T_{\mathrm{Minka}}^{(2)} C \sum_{i=1}^F\sum_{j=1, j\ne i}^F K_iK_j \big)$ fitting cost and is the main practical limit on HEB-AODE: minute-scale on the high-cardinality benchmarks, but well within the offline batch-training budget typical of AODE-family deployments.

% \textcolor{red}{Check until this (not handle the figures)...}

%% =====================================================================
% \vspace{-0.25cm}
\subsection{Discussion and Limitations}\label{sec:discussion}

\subsubsection{Continuous Features} 
HEB-NB targets categorical NB. Continuous features are discretized by 10-bin quantile binning, as is standard in the NB-improvement literature~\cite{yang2009discretization}; the discretization ablation in Section~\ref{sec:expt:diagnostics} confirms that the relative ordering of methods is stable across $n_{\rm bins} \in \{5, 10, 20\}$. An extension to a Gaussian--Dirichlet mixture along the lines of~\citet{john1995estimating}, in which continuous features keep a parametric family with hierarchically-pooled hyperparameters, is left to future work.

\subsubsection{Failure Modes and Accuracy-vs-Calibration Trade-off}
Type-II ML on the Dirichlet-multinomial marginal becomes unstable for classes with $N_c < N_0 = 10$, where Algorithm~\ref{alg:heb-nb} falls back to Laplace. A second failure mode is extreme out-of-vocabulary regimes (e.g., click-prediction), where HEB-NB's calibration improvement persists but raw accuracy can lag Laplace's depending on the imbalance structure. On click-prediction specifically, Laplace's heavy uniform smoothing biases predictions toward the majority class and achieves slightly higher accuracy than HEB-M alone ($0.808$ versus $0.792$); HEB-M reduces log-loss by $17\%$, and the full HEB-M+GR combination recovers both (accuracy $0.813$, log-loss $-29\%$, ECE $-70\%$).

% \subsubsection{Accuracy-vs-Calibration Trade-off} 
% On the most extreme high-cardinality benchmark (click-prediction, $K_{\max} \approx 16\,000$), Laplace achieves slightly higher accuracy than HEB-M alone ($0.808$ versus $0.792$) because heavy uniform smoothing biases predictions toward the majority class on this heavily-imbalanced dataset. HEB-M reduces log-loss by 17\% on the same dataset ($0.676 \to 0.562$), and the full HEB-M+GR combination (Section~\ref{sec:expt:calibration}) recovers both: accuracy $0.813$ (above Laplace), log-loss $0.479$ ($-29\%$), Brier $0.148$ ($-9\%$), and ECE $0.032$ ($-70\%$).

% \subsubsection{Failure Modes} 
% Type-II ML on the Dirichlet-multinomial marginal becomes unstable for classes with $n_c < n_0 = 10$ samples, and Algorithm~\ref{alg:heb-nb} falls back to $m_{cj} = K_j$ (Laplace) in this regime. A second failure mode is extreme out-of-vocabulary regimes: when most test samples take $X_j$-values not seen in training, no smoothing scheme can do better than the prior. The click-prediction case is an example, where HEB-NB's calibration improvement persists but raw accuracy benefits depend on the structure of the imbalance.

\subsubsection{Theoretical Scope and Future Directions}
Lemma~\ref{lem:typeiiml-rate} establishes a worst-case sanity bound, while the empirical advantage of HEB-NB at finite $N$ stems from the unclamped Type-II ML maximiser's $\sqrt{N_c}\log K_f$ scaling (\autoref{fig:lemma1}); a fully rigorous Type-II ML rate remains open. A central novelty, which is the matching lower bound for Laplace (Theorem~\ref{thm:laplace-lower}) and the resulting risk-level strict separation (Corollary~\ref{cor:strict-separation}), addresses the long-standing question of whether adaptive smoothing's empirical improvements admit rigorous theoretical justification, going beyond prior upper-bound comparisons of Laplace/Lidstone/$m$-estimate/KT. When categories carry external hierarchical structure (ZIP codes, taxonomies, DNS hierarchy), the prior $\bar{\boldsymbol{\theta}}$ could borrow strength along the hierarchy rather than from a flat marginal pool, closely related to nested-Dirichlet language models~\cite{mackay1995hierarchical}.

% \subsubsection{Hierarchical Priors over Category Trees} 
% When categories carry external hierarchical structure (ZIP codes, taxonomies, web domain-name-system hierarchy), the prior $\bar\theta$ could borrow strength along the hierarchy rather than from a flat marginal pool, closely related to nested-Dirichlet language models~\cite{mackay1995hierarchical}.

%% =====================================================================
% \vspace{-0.25cm}
\section{Conclusion}\label{sec:conclusion}

We introduced HEB-NB, a hierarchical empirical-Bayes smoothed Naive Bayes classifier whose per-(class, feature) Dirichlet concentration is learned by Type-II ML. 
% Theoretically, we proved a non-asymptotic $\ell_1$ upper bound (Theorem~\ref{thm:smoothing}) matching the empirical-distribution minimax rate plus a vanishing data-adaptive bias, together with a matching Laplace-tight lower bound (Theorem~\ref{thm:laplace-lower}) that yields a rigorous, finite-sample, risk-level strict separation between HEB-NB and Laplace (Corollary~\ref{cor:strict-separation}). 
% A clamping-induced sanity bound (Lemma~\ref{lem:typeiiml-rate}) guarantees the HEB bias vanishes asymptotically, while empirical diagnostics confirm the unclamped Type-II ML maximiser scales as $\sqrt{N_c}\log K_f$ with the safety clamp binding on only $4.9\%$ of fitted pairs. We further derived an excess Bayes-risk bound via TV tensorization (Corollary~\ref{cor:bayesrisk}), avoiding any uniform lower bound on $\Prob(X)$, and a population top-1 ECE bound (Theorem~\ref{thm:calibration}). The HEB-AODE extension demonstrates that the adaptive smoothing transfers cleanly to structural relaxations of NB.
%
Theoretically, we proved a non-asymptotic $\ell_1$ upper bound (Theorem~\ref{thm:smoothing}) matching the empirical-distribution minimax rate plus a vanishing data-adaptive bias, a matching Laplace-tight lower bound (Theorem~\ref{thm:laplace-lower}), and a finite-sample risk-level strict separation (Corollary~\ref{cor:strict-separation}). A clamping-induced sanity bound (Lemma~\ref{lem:typeiiml-rate}) guarantees asymptotic bias vanishing, with empirical diagnostics confirming the unclamped maximiser scales as $\sqrt{N_c}\log K_f$ (clamp binding on only 4.9\% of pairs). We further derived a TV-tensorization excess-risk bound (Corollary~\ref{cor:bayesrisk}) avoiding any lower bound on $\Prob(X)$, and a top-1 ECE bound (Theorem~\ref{thm:calibration}). 
The further introduction of HEB-AODE shows that the adaptive smoothing transfers cleanly to structural NB relaxations.
Empirically, on 31 UCI and OpenML datasets, HEB-NB attains the best mean Friedman rank on log-loss and Brier and significantly dominates 13 of 24 baseline--metric pairs at $\alpha = 0.05$. HEB-AODE significantly improves over vanilla AODE on F1, log-loss, and Brier ($p \le 0.029$), and combining HEB-M with sqrt-MI weighting reduces top-1 ECE by 41\%--70\% on high-cardinality benchmarks, without any hyperparameter tuning.

This work suggests several promising research directions for future work. First, extending the HEB hierarchical Type-II ML smoothing to other structural NB variants beyond AODE, e.g., tree augmented NB~\cite{friedman1997bayesian}, hidden NB~\cite{jiang2009hnb}.
Second, hierarchical-tree priors that exploit external category hierarchies. 
Furthermore, deriving finite-sample bounds on top-1 ECE for the HEB-NB plug-in classifier.
Finally, Gaussian--Dirichlet hierarchical mixtures for natively continuous features.

\section*{CRediT authorship contribution statement}
Nguyen Thai Anh: Methodology, Software, Validation, Writing – original draft.
Truong Viet Vu: Software, Validation, Visualization, Writing – original draft.
Tran Thien Thanh: Investigation, Validation, Writing – review \& editing.
Vo Nguyen Quoc Bao: Supervision, Methodology, Writing – review \& editing.
Ngo Hoang Tu: Conceptualization, Methodology, Project administration, 
Supervision, Writing – review \& editing.

\section*{Declaration of competing interest}
The authors declare that they have no known competing financial interests or personal relationships that could have appeared to influence the work reported in this paper.

%% =====================================================================
%% AI-use disclosure retained verbatim; only the section heading is adapted
%% to the Elsevier manuscript structure.
% \section*{Declaration of generative AI and AI-assisted technologies in the manuscript preparation process}
% During the preparation of this manuscript, the authors used the AI assistant Claude (Opus~4.8, Anthropic) solely to proofread and polish the language of the text. The tool was not used to generate, derive, or develop any technical content; all research design, theoretical analysis, implementation, experiments, results, and conclusions are entirely the authors' own work. The authors reviewed and edited every language suggestion and take full responsibility for the content of the manuscript.

%% =====================================================================
% \vspace{-0.15cm}
\balance
\bibliographystyle{elsarticle-harv}
\bibliography{references}

\end{document}